\documentclass{article} 
\usepackage[final]{colm2026_conference}

\usepackage{microtype}
\usepackage{hyperref}
\usepackage{url}
\usepackage{booktabs}
\usepackage{physics}

\usepackage{amsthm}
\usepackage{amsmath,amssymb}
\usepackage{bm}
\usepackage{enumitem}
\usepackage{graphicx}
\usepackage{subcaption}
\usepackage{tabularx}
\usepackage{tikz}
\usepackage[capitalize,noabbrev]{cleveref}
\usepackage[most]{tcolorbox}

\usepackage{amsmath,amsfonts,bm}

\def\eqref#1{equation~\ref{#1}}

\def\1{\bm{1}}

\newcommand{\R}{\mathbb{R}}

\DeclareMathAlphabet{\mathsfit}{\encodingdefault}{\sfdefault}{m}{sl}
\SetMathAlphabet{\mathsfit}{bold}{\encodingdefault}{\sfdefault}{bx}{n}

\newcommand{\E}{\mathbb{E}}

\newcommand{\softmax}{\mathrm{softmax}}

\def\*#1{\mathbf{#1}}
\def\+#1{\mathcal{#1}} 
\def\-#1{\mathrm{#1}}
\def\^#1{\mathbb{#1}}
\def\!#1{\mathtt{#1}}
\def\1#1{\mathbb{I}\left[#1\right]}
\newtheorem{theorem}{Theorem}
\newtheorem{lemma}{Lemma}

\newtheorem{corollary}{Corollary}
\newtheorem{definition}{Definition}
\newtheorem{remark}{Remark}

\usepackage{lineno}

\definecolor{darkblue}{rgb}{0, 0, 0.5}
\definecolor{myPurple}{HTML}{6E3C76}
\definecolor{myPurpleLight}{HTML}{F5EDF8}
\definecolor{middlegrey}{rgb}{0.75,0.75,0.75}
\definecolor{lightblue}{HTML}{F9FEFE}
\hypersetup{colorlinks=true, citecolor=darkblue, linkcolor=darkblue, urlcolor=darkblue}

\newcommand{\attn}{\mathrm{attn}}

\newcommand{\Ldeep}{L_{1a}}
\newcommand{\Wzoc}{W^{(1)}_0}
\newcommand{\Wzop}{W^{(1)}_1}
\newcommand{\Wonc}{W^{(2)}_0}
\newcommand{\Wonp}{W^{(2)}_1}
\newcommand{\Szero}{S^{[1]}}
\newcommand{\Sone}{S^{[2]}}
\newcommand{\Azero}{A^{[1]}}

\newcommand{\tctxend}{t^{\mathrm{ctx}}_{\mathrm{end}}}
\newcommand{\tctxv}{t^{\mathrm{ctx}}_v}
\newcommand{\tpromptend}{t^{\mathrm{prompt}}_{\mathrm{end}}}

\newcommand{\ustar}{u_{\mathrm{star}}}
\newcommand{\uend}{u_{\mathrm{end}}}
\DeclareMathOperator{\MASK}{MASK}

\title{How Transformers Learn to Plan via Multi-Token Prediction}

\author{\textbf{Jianhao Huang}\textsuperscript{1,}\thanks{Equal contribution. Email: \texttt{jhhuang2025@cs.ucla.edu,  zzp1012@sjtu.edu.cn}}\hspace{.3cm}
\textbf{Zhanpeng Zhou}\textsuperscript{2,}\footnotemark[1]\hspace{.3cm}
\textbf{Renqiu Xia}\textsuperscript{2}\hspace{.3cm}
\textbf{Baharan Mirzasoleiman}\textsuperscript{1} \\
\textbf{Weijie Su}\textsuperscript{3}\hspace{.3cm}
\textbf{Wei Huang}\textsuperscript{4,5,}\thanks{Correspondence to: \texttt{wei.huang.vr@riken.jp}} \\[0.8em]
\normalsize \textsuperscript{1}University of California, Los Angeles\hspace{.4cm}
\normalsize \textsuperscript{2}Shanghai Jiao Tong University \\
\normalsize \textsuperscript{3}University of Pennsylvania\hspace{.4cm}
\normalsize \textsuperscript{4}RIKEN Center for Advanced Intelligence Project \\
\normalsize \textsuperscript{5}The Institute of Statistical Mathematics}
\vspace{-2em}

\begin{document}

\ifcolmsubmission
\linenumbers
\fi

\maketitle

\begin{abstract}

While next-token prediction (NTP) has been the standard objective for training language models, it often struggles to capture global structure in reasoning tasks.
Multi-token prediction (MTP) has recently emerged as a promising alternative, yet its underlying mechanisms remain poorly understood.
In this paper, we study how MTP facilitates reasoning, with a focus on planning.
Empirically, we show that MTP consistently outperforms NTP on both synthetic graph path-finding tasks and more realistic reasoning benchmarks, such as Countdown and boolean satisfiability problems.
Theoretically, we analyze a simplified two-layer Transformer on a star graph task.
We prove that MTP induces a two-stage \emph{reverse reasoning} process: the model first attends to the end node and then reconstructs the path by tracing intermediate nodes backward.
This behavior arises from a gradient decoupling property of MTP, which provides a cleaner training signal compared to NTP.
Ultimately, our results highlight how multi-token objectives inherently bias optimization toward robust and interpretable reasoning circuits.
\end{abstract}

\section{Introduction}\label{sec:intro}
Recently, large language models (LLMs) have demonstrated a range of emerging abilities as they scale, with reasoning becoming one of the most critical capabilities~\citep{ke2025a}.
Reasoning enables language models to go beyond simple knowledge retrieval, allowing them to actively process and infer solutions to new problems.
Despite these advances, how such reasoning abilities emerge in language models remain poorly understood.
Recent theoretical advances focused on the role of chain-of-thought~\citep{feng2023cot,kim2025transformers,huang2025transformers} and architecture design~\citep{gatmiry2024can,saunshi2025reasoning} in enabling reasoning.
However, a more fundamental factor may lie in the modeling paradigm itself, particularly the training objective that underlies language model training.

Next-token prediction (NTP) with teacher forcing has long been the standard objective for training language models, where models are optimized to produce the next token given all preceding ground-truth context tokens.
Yet, it is limited in capturing long-term dependencies~\citep{qi2020prophetnet} and prone to overfitting to local patterns~\citep{bachmann2024pitfalls}. 
To address these limitations, multi-token prediction (MTP)~\citep{gloeckle2024better} has recently been proposed.
Instead of predicting only the next token, it predicts multiple future tokens in parallel.
MTP substantially improves performance on math and code benchmarks that require complex reasoning, and has been widely adopted in leading models such as DeepSeek-V3~\citep{liu2024deepseek}.
Understanding how MTP leads to such improvements is crucial for uncovering the mechanisms underlying the reasoning capabilities of modern language models.

\begin{figure}[tb!]
    \centering
    \begin{subfigure}[b]{0.14\textwidth}
        \centering
        \includegraphics[width=\textwidth]{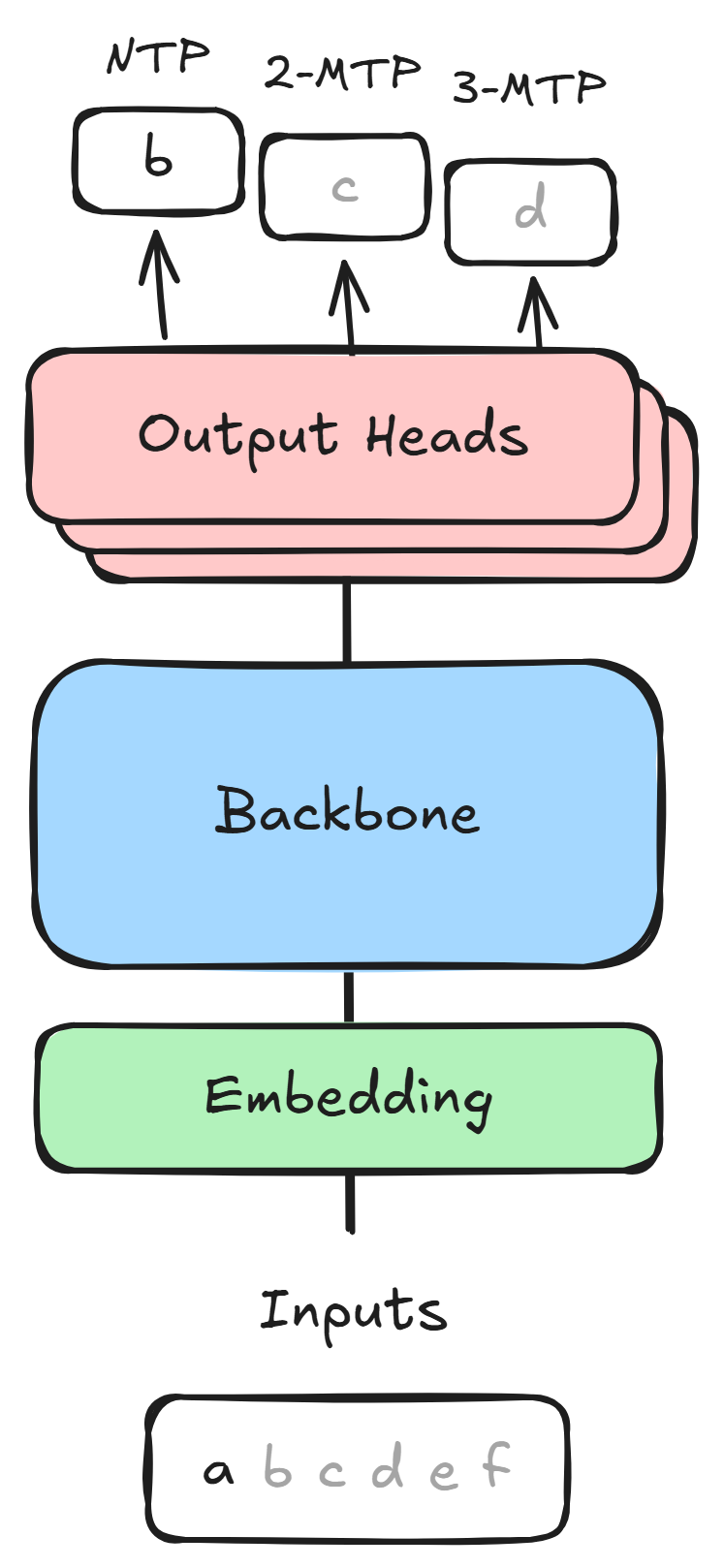}
        \caption{}
        \label{fig:architecture}
    \end{subfigure}
    \hfill 
    \begin{subfigure}[b]{0.80\textwidth}
        \centering
        \includegraphics[width=\textwidth]{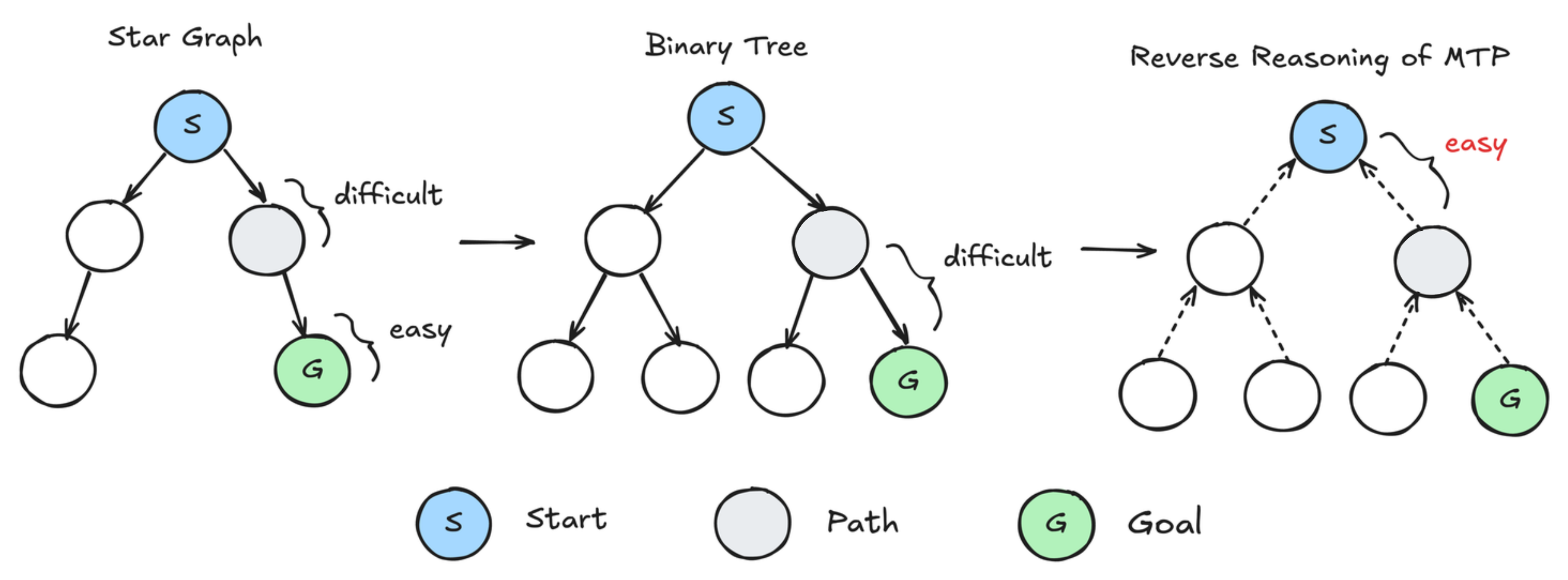}
        \caption{}
        \label{fig:star_vs_bin}
    \end{subfigure}
    \caption{\textbf{Overview of the MTP architecture and its advantage in reasoning tasks.} (a) The MTP architecture. Building upon \citet{gloeckle2024better}, our MTP employs a shared backbone with multiple independent output heads to predict several future tokens simultaneously. (b) Illustration of Star Graph and Binary Tree. In standard forward planning, early steps can be difficult due to a large search space. MTP facilitates ``reverse reasoning" by looking ahead to the goal ($G$), transforming complex forward searches into easy backward steps.}
\end{figure}

In this work, we study how MTP facilitates reasoning in language models, with a focus on \emph{planning}\footnote{Reasoning refers to the ability to perform logical or compositional inference, whereas planning is a specific form of reasoning that requires anticipating future steps and computing a global solution before generating outputs.}, which requires considering future steps before generating the current output.
Among MTP variants, we adopt the formulation of \citet{gloeckle2024better}, which predicts multiple future tokens using parallel heads.
Note that our primary focus is on the reasoning performance rather than inference efficiency.
Though the model is trained to predict multiple tokens, it generates only a single token at each step using the first head during inference. 


We begin with empirical observations on synthetic graph path-finding tasks, which are widely regarded as a standard abstraction for studying reasoning~\citep{kim2025metastable,ye2025beyond}. 
Given the graph structure and start/end nodes, the model is required to generate the full path.
We find that MTP consistently outperforms NTP on these tasks. 
We first study the star graph (see \cref{fig:star_vs_bin}), a directed graph in which multiple paths originate from the start node, with one leading to the target. 
In a star graph, only the first step requires a non-trivial decision. 
Thus, under NTP with teacher forcing, the model tends to follow edges based on previously revealed nodes in the prefix, a phenomenon known as the Clever Hans cheat~\citep{bachmann2024pitfalls}. 
To remove this shortcut, we introduce a more complex task, the \emph{binary tree}, in which decisions must be made at every step. 
Notably, MTP continues to outperform NTP, indicating that its advantage cannot be explained solely by disabling the Clever Hans cheat. 
We evaluate on more realistic tasks, including Countdown~\citep{gandhi2024stream} and boolean satisfiability problems, where MTP still succeeds.




We next delve deeper into the underlying mechanisms behind the success of MTP on planning tasks, with a focus on the star graph.
For simplicity, we study a two-layer disentangled Transformer~\citep{friedman2023learning} on the 2-path 3-node star graph.
Our theory reveals a two-stage \emph{reverse reasoning} process in models trained with MTP: the model first attends to the end node, then retrieves intermediate nodes by following edges that point to it.
Intuitively, solving the star graph is easier when starting from the end node, which extends to the binary tree task.
Importantly, MTP modifies only the training objective, while the inference process remains the same as NTP, indicating that reverse reasoning arises from optimization differences.
Under NTP, learning signals from different layers are entangled, making it difficult for the model to discover the patterns needed for planning.
In contrast, MTP provides an isolated training signal, allowing the first layer to attend to the end node and the second layer to recover the intermediate node through simple edge matching.

\textbf{Technical novelty.}
We are the first to formally analyze the convergence dynamics of MTP and establish its difference from NTP, extending beyond prior results~\citep{zhong2025understanding}.
The key is the \emph{gradient decoupling} property of the MTP head, which provides an isolated training signal to the first layer.
This property enables the emergence of \emph{reverse reasoning} mechanism.
Our technique might have independent interest to learning theory.
\section{Related Work}\label{sec:related_work}
\textbf{Understanding of MTP.} 
Despite the empirical success of MTP, a rigorous theoretical understanding of its underlying mechanisms remains unclear. 
Recent work~\citep{zhong2025understanding} studied how MTP enhances planning on synthetic tasks.
However, its analysis is largely qualitative and does not provide a formal comparison with NTP.
Moreover, its theoretical setup considers a simplified shared-head variant of MTP, rather than the standard independent-head architecture introduced by \citet{gloeckle2024better}.
\emph{In contrast}, our work provides a theoretical comparison between NTP and MTP under the more general independent-head architecture.
We mathematically unpack how the gradient decoupling property of MTP naturally induces a two-stage \emph{reverse reasoning} process on the star graph, providing a theoretical explanation for its empirical advantage over NTP. For an extended discussion of related work on MTP, we refer to \cref{app:related work}.

\textbf{Theories of Transformers.}
Extensive theoretical studies have been conducted to characterize the mechanisms of Transformer architectures, covering aspects such as in-context learning~\citep{zhang2024trained,pmlr-v247-siyu24a,wei2026how}, chain of thought~\citep{kim2025transformers,huang2025transformers,wen2025from}, induction head~\citep{nichani2024how,chen2024unveiling}, benign overfitting~\citep{jiang2024unveil,magen2024benign}, attention sink~\citep{guo2025activedormant} and compositional reasoning~\citep{pmlr-v291-wang25a, wang2026the}.
Within this landscape, disentangled transformers~\citep{friedman2023learning,nichani2024how,chen2024unveiling,wei2026how} have gained significant attention, as they decouple the intertwined features in the residual stream to facilitate a more tractable analytical framework. 
Following this paradigm, we adopt a disentangled Transformer as the structural foundation for our study.






























\section{Preliminaries}

\textbf{Notation and Basic Setups.} 
We use $[N] = \{1, \ldots, N\}$ to denote the vertex set and $\{e_i\}_{i=1}^N$ for the standard basis of $\mathbb{R}^N$. For a sequence of length $T$, we write $x_{1:T} = (x_1, \ldots, x_T)$ and use $[\cdot, :]$ to denote a row slice of a matrix. The softmax function $\sigma : \mathbb{R}^k \to \mathbb{R}^k$ is defined by $\sigma({v})_i = \exp(v_i) / \sum_j \exp(v_j)$, applied row-wise. For a softmax distribution $s$, we write $J(s) := \mathrm{diag}(s) - ss^\top$ for its Jacobian. We use $I_T$ for the $T \times T$ identity matrix and $L \in \mathbb{R}^{T \times T}$ for the strictly lower shift matrix with $L_{i,i-1} = 1$ and zero elsewhere.

\textbf{Training Objectives.}
Let $x_{1:T} = (x_1, \ldots, x_T)$ be an input sequence.
The standard NTP objective minimizes the negative
log-likelihood of each token given its full causal prefix:
\begin{equation}
    \mathcal{L}_{\text{NTP}}(\theta)
    = -\frac{1}{T}\sum_{t=1}^{T} \log p_\theta \left(x_t \mid x_{1:t-1}\right).
\end{equation}
MTP with lookahead $k$ instead predicts the next
$k$ tokens in parallel from the same prefix, averaging the cross-entropy over all
$k$ heads:
\begin{equation} \label{eq:mtp_loss}
    \mathcal{L}_{\text{MTP}}^{(k)}(\theta)
    = -\frac{1}{T}\sum_{t=1}^{T}
      \frac{1}{k}\sum_{m=1}^{k}
      \log p_{\theta,m} \left(x_{t+m} \mid x_{1:t}\right),
\end{equation}
where $p_{\theta,m}$ denotes the $m$-th prediction head.
In both objectives, training uses teacher forcing: the ground-truth prefix
$x_{1:t}$ is supplied as context regardless of the model's own predictions.
Note that NTP is a special case of MTP with $k=1$.


\textbf{Architectural Design.} We adopt a parallel architecture \citep{gloeckle2024better} as illustrated in \cref{fig:architecture}, which utilizes a shared transformer backbone capped with multiple independent linear output heads for simultaneous MTP. This contrasts with the sequential structure employed by \cite{liu2024deepseek}. Our choice is motivated by the goal of deconstructing the underlying mechanisms of MTP rather than maximizing absolute performance. By prioritizing a streamlined architecture, we ensure better theoretical tractability while still effectively capturing the empirical advantages of MTP over standard NTP.

\textbf{Main Experimental Setups.} Across all experiments, models are trained using the MTP objective with the architecture described above. However, to evaluate the core reasoning and generative capabilities acquired during training, all models are evaluated using standard NTP inference. For statistical reliability, all results are averaged over 3 independent runs for each data point. Details of the experimental configurations are provided in \cref{app:exp_dense}. 






\section{Planning Emerges with Multi-Token Prediction}\label{sec:exp}
In this section, we provide empirical evidence that MTP enables the planning abilities in Transformers. 
On complex planning tasks, including star graph~\citep{bachmann2024pitfalls}, countdown~\citep{gandhi2024stream}, and boolean satisfiability, MTP consistently outperforms NTP across a wide range of model and data scales.

\textbf{Star Graph: NTP Fails, but MTP Succeeds.} 
We begin with one of the simplest planning tasks, the star graph. 
A star graph is a directed graph with $v_{\rm start}$ as the central node and at least two paths originating from it. 
One of these paths leads to the target node $v_{\rm end}$. 
Given the graph and the start/end nodes, the goal is to find the path from $v_{\rm start}$ to $v_{\rm end}$. 
See \cref{fig:star_vs_bin} for an example.
In this task, the model cannot fully rely on previously generated tokens to determine the next token; instead, it must combine information across multiple path segments to infer reachability. 
\citet{bachmann2024pitfalls} showed that NTP fails on the star graph task. 
We replicate this finding and further evaluate MTP on the same task.

As shown in \cref{fig:star_bin_exp}a-b, we train Transformers with MTP on star graphs with two paths, each of five nodes.
Specifically, we compare MTPs with different lookahead steps $k$; when $k=1$, MTP naturally reduces to NTP.
Following \citet{kaplan2020scaling,hoffmann2022training}, we evaluate MTP and NTP under both data-scaling and parameter-scaling setups.
In \cref{fig:star_left}, as the training data size increases, the path-finding accuracy of models trained with MTP ($k \geq 2$) surpasses that of NTP.
Surprisingly, even $k=2$, which predicts only one additional future token beyond NTP, achieves $100\%$ accuracy when trained on $0.5\text{M}$ samples, whereas NTP remains at $50\%$ accuracy across different training data sizes.
Similar trends are observed under parameter scaling in \cref{fig:star_right}.
These findings indicate the potential of MTP for planning tasks over NTP.

\textbf{Binary Tree: No Clever Hans Cheat, yet MTP Still Wins.}
\citet{bachmann2024pitfalls} attributed the failure of NTP on star graph tasks to the \emph{Clever Hans Cheat} phenomenon.   
This phenomenon arises from teacher-forced training, where the model is given the prefix of the ground-truth answer while predicting the next token. 
The model may exploit trivial correlations between previously revealed answer tokens and the next token, instead of solving the underlying task.
In the star graph task, when predicting node $v_i$, the model already sees previous node $v_{i-1}$ as input. 
Since every intermediate node has only one outgoing edge, the model can simply follow the edge from $v_{i-1}$ to predict $v_i$. 
Consequently, the model fits the training data but never learns the true path-planning mechanism.

\citet{bachmann2024pitfalls} further introduced \emph{teacherless} training to eliminate the Clever Hans Cheat by replacing the ground-truth prefix with dummy tokens. 
The model must predict multiple future tokens without relying on previously revealed answers. 
Conceptually, this teacherless objective is closely related to modern MTP. 
Then, a natural question arises: 
\begin{center}
    \emph{Does MTP enable better planning simply by bypassing the cheating mechanism?}
\end{center}

\begin{figure}[tb!]
  \centering
  
  \begin{subfigure}[b]{0.45\textwidth}
    \centering
    \caption{Star graph: accuracy vs. data}
    \includegraphics[width=\textwidth]{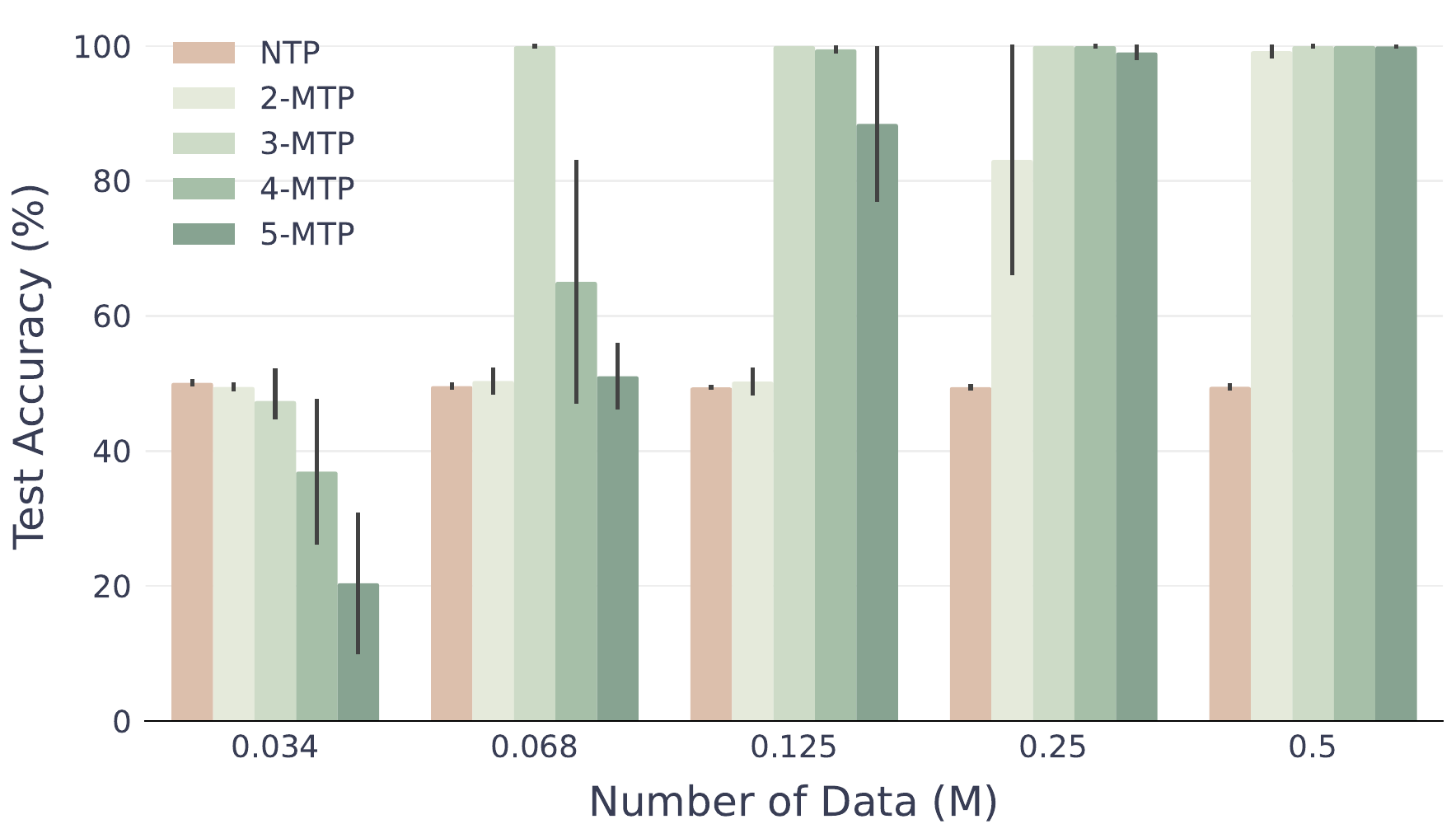}
    \label{fig:star_left}
  \end{subfigure}
  \hfill 
  \begin{subfigure}[b]{0.45\textwidth}
    \centering
    \caption{Star graph: accuracy vs. params}
    \includegraphics[width=\textwidth]{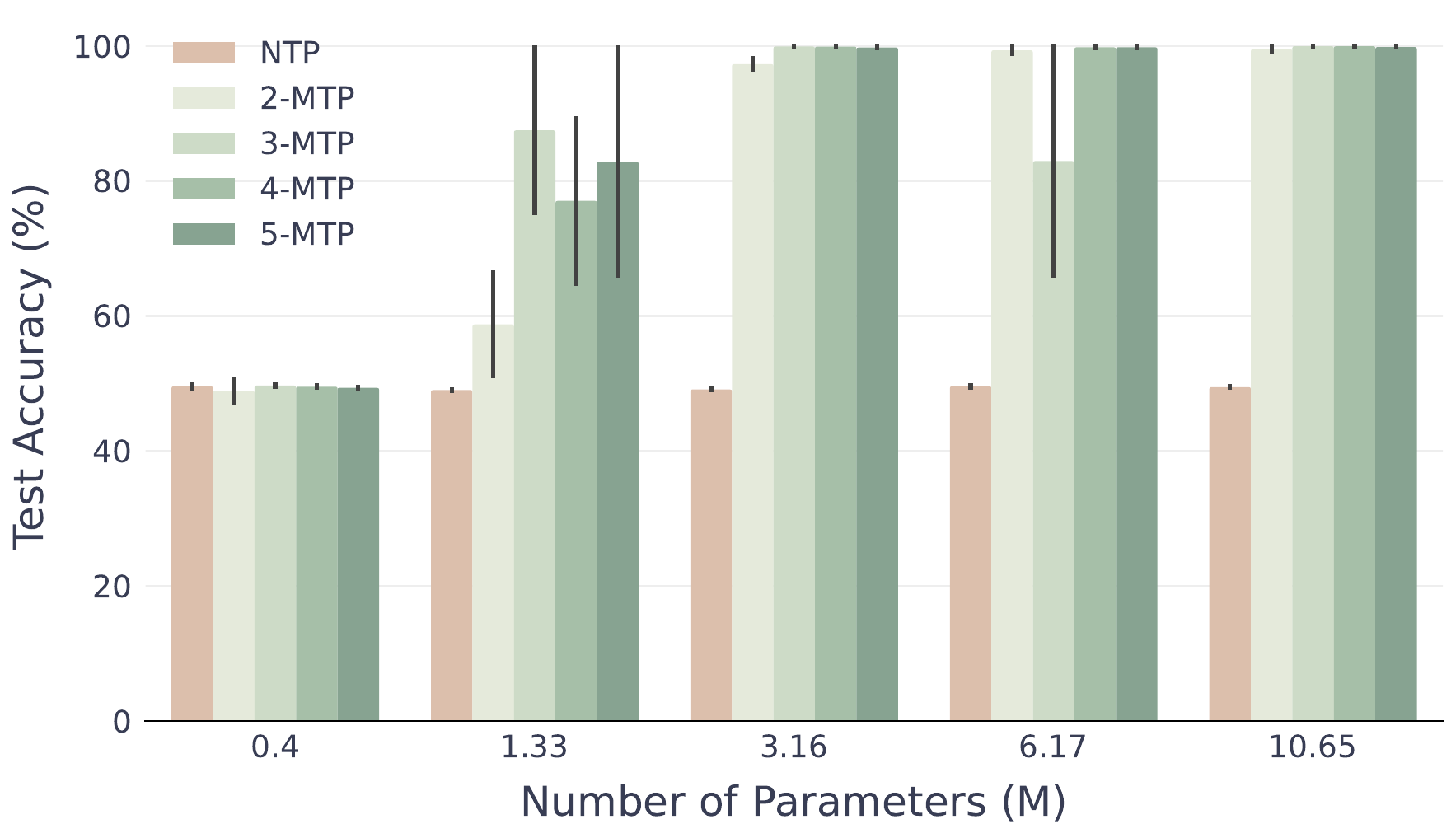}
    \label{fig:star_right}
  \end{subfigure}



  \begin{subfigure}[b]{0.45\textwidth}
    \centering
    \caption{Binary tree: accuracy vs. data}
    \includegraphics[width=\textwidth]{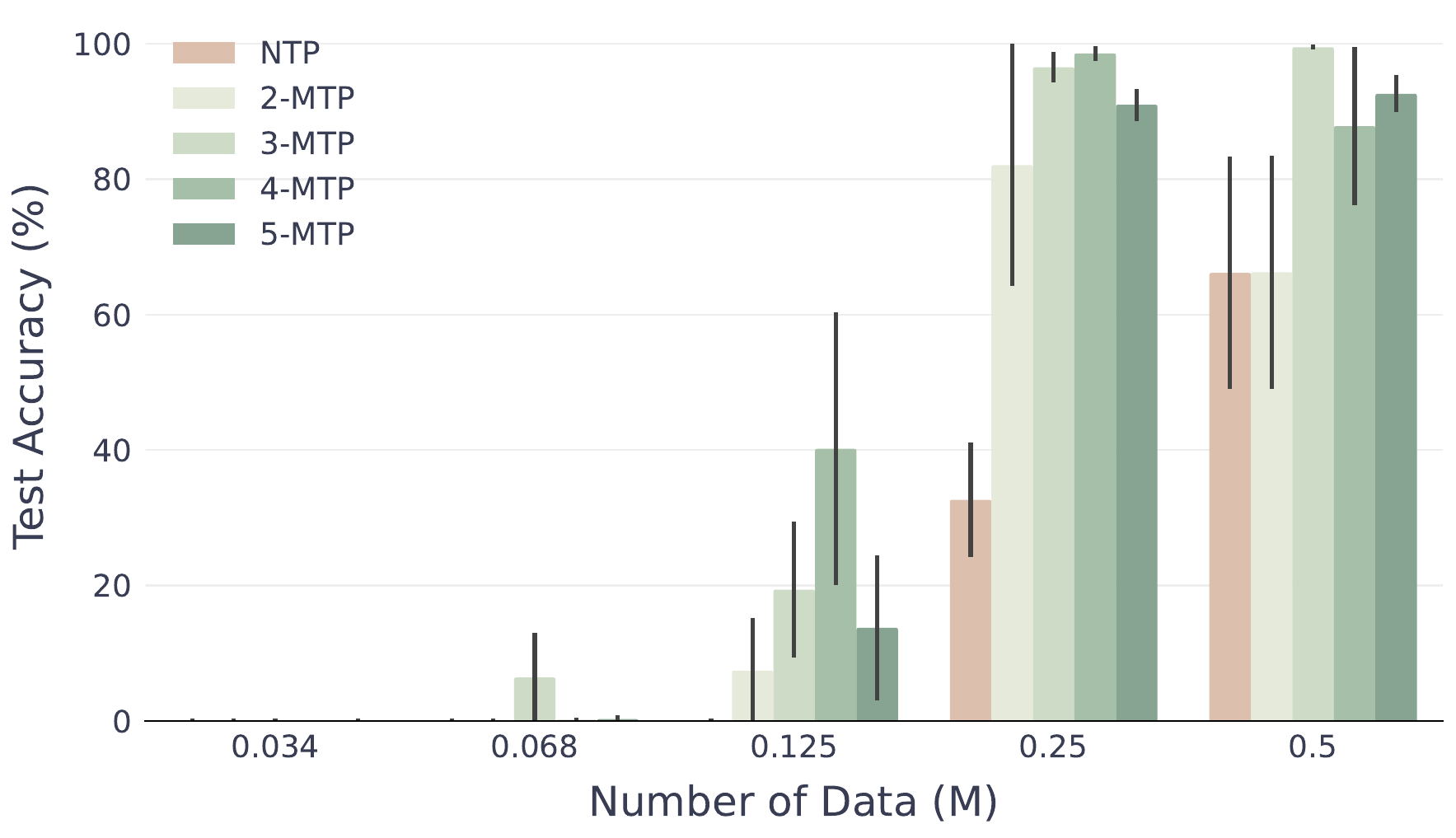}
    \label{fig:bin_left}
  \end{subfigure}
  \hfill 
  \begin{subfigure}[b]{0.45\textwidth}
    \centering
    \caption{Binary tree: accuracy vs. params}
    \includegraphics[width=\textwidth]{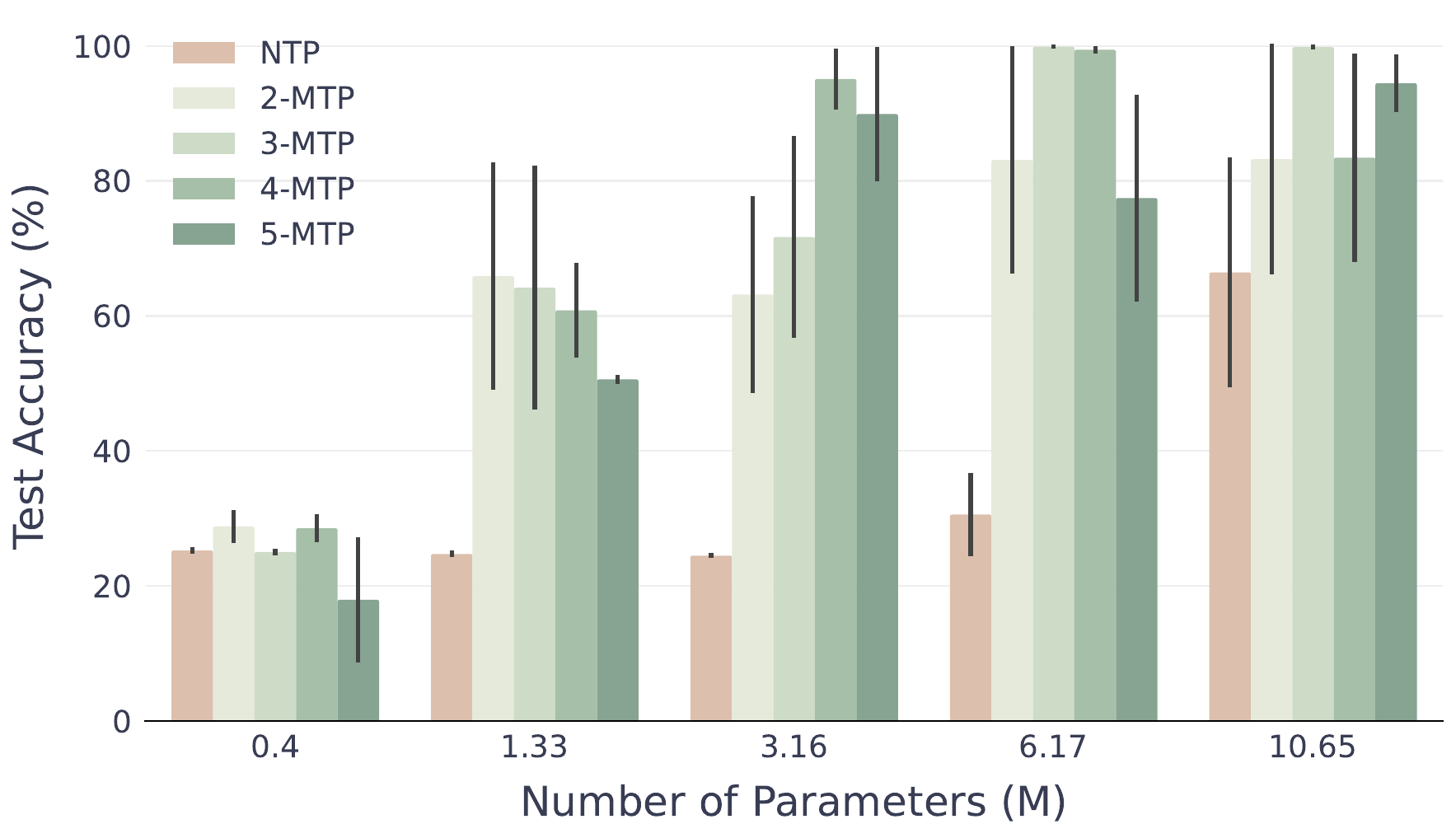}
    \label{fig:bin_right}
  \end{subfigure}
  \caption{Path-finding accuracy of MTP versus NTP under varying data and parameter scales. \textbf{(a, b)} 2-path 5-node star graph: test accuracy as a function of training data size (16M model) and parameter size (0.5M data). \textbf{(c, d)} 5-node binary tree: test accuracy as a function of data size (16M model) and parameter size (1M data). Error bars represent 90\% confidence intervals calculated from three independent replicates ($n = 3$). Notably, while NTP stagnates at 50\% accuracy on star graph, it begins to learn and scale on binary trees. Across both topologies, MTP consistently demonstrates superior planning abilities.}
  \label{fig:star_bin_exp}
\end{figure}
To answer this question, we introduce a new task, the \emph{binary tree} problem, which extends the two-path star graph design.
See \cref{fig:star_vs_bin} as an illustration. 
In the star graph
, only the start node branches into two paths, whereas the binary tree branches at every intermediate node.
Thus, each prediction along the correct path cannot rely on previously revealed nodes, preventing the same cheating as in the star graph.

Now we evaluate MTP and NTP on the binary tree tasks.
As before, we adopt both data-scaling and parameter-scaling setups. 
In \cref{fig:star_bin_exp}c-d, models trained with MTP still outperform those trained with NTP in most settings, especially when data or model capacity is limited. 
Unlike the star graph experiments, the performance of NTP improves as the data size or model size increases. 
It is clear that, without the Clever Hans cheat, NTP begins to learn planning, while MTP still demonstrates stronger planning ability.
We conclude that MTP facilitates planning not merely by bypassing the cheating mechanism; deeper mechanisms are involved, which we discuss further in \cref{sec:theory}.

\textbf{More Planning Tasks: Countdown and SAT.}
To further validate planning abilities of MTP, we consider two more realistic planning tasks: Countdown and Boolean satisfiability (SAT).
Both tasks require computing the full solution internally before generating the first token, thereby demanding extensive planning over a large number of combinations. See \cref{app:illustration} for illustrations of these tasks.

\begin{itemize}[leftmargin=6.3em]
    \item[\textbf{{\color{blue!30!black} Countdown.}}] Countdown is a generalized version of the Game of 24. 
    The goal is to use the given numbers and arithmetic operations to reach a target number. 
    \citet{yao2023treeofthoughts} showed even frontier models such as GPT-4 struggle with Countdown problems.
    \item[\textbf{{\color{blue!30!black} SAT.}}] SAT is a classic problem in computer science and was proven to be NP-complete~\citep{cook1971complexity}. 
    The goal is to find a Boolean assignment to variables that satisfies all clauses of a given propositional formula.
\end{itemize}

\begin{table}[t!]
  \centering
  \newcolumntype{C}{>{\centering\arraybackslash}X} 
  
  \begin{tabularx}{\textwidth}{lCCCCCCC}
    \toprule
    \textbf{TASK} & \textbf{NTP} & \textbf{2-MTP} & \textbf{3-MTP} & \textbf{4-MTP} & \textbf{5-MTP} & \textbf{6-MTP} & \textbf{7-MTP} \\
    \midrule
    \textbf{Countdown} & 60.27 & 60.36 & 63.20 & 62.09 & 62.75 & 63.17 & \textbf{64.93} \\
    \textbf{3-SAT} & 10.40 & 28.17 & 69.50 & 63.10 & 82.83 & 82.00 & \textbf{87.47} \\
    \bottomrule
  \end{tabularx}
  \caption{Test accuracy of models trained with NTP versus MTP configurations on Countdown and 3-SAT tasks. Models trained with MTP achieve consistently higher test accuracy than the NTP baseline across both tasks.}
  \label{tab:extended_results}
\end{table}

In \cref{tab:extended_results}, we present the performance of models trained with MTP and NTP on these two tasks.
Evidently, models trained with MTP achieve consistently higher test accuracy than those trained with NTP.

In summary, we highlight two key findings.
First, MTP outperforms NTP on star graph tasks, not merely by disabling the Clever Hans Cheat; deeper mechanisms are involved. 
Second, MTP consistently enables better planning in Transformers across both graph path-finding and more realistic tasks.




\section{Mechanisms of Planning under MTP: Reverse Reasoning} 
\label{sec:theory}
We have seen that MTP significantly improves planning in language models. 
In this section, we investigate the underlying mechanisms of this improvement through a theoretical analysis of a two-layer disentangled Transformer on the star graph task. 
Going beyond disabling the Clever Hans cheat, we show that MTP induces a \emph{reverse reasoning} process. All proofs in this section are deferred to Appendices \ref{sec:tech} and \ref{sec:proof}.

\subsection{Problem Setup and Model} \label{sec:prob_model}



We focus on the 2-path, 3-node star graph with $N = T = 10$. Given the
serialized graph and a query pair $(u_{\mathrm{end}}, u_\mathrm{star})$, the model must predict
the path $( u_\mathrm{star}, v, u_{\mathrm{end}})$ autoregressively. The critical challenge is
predicting the intermediate node $v$: this requires determining which path
from $ u_\mathrm{star}$ reaches $u_{\mathrm{end}}$, a decision demanding genuine planning.


\textbf{Input encoding.}
Let $\{e_i\}_{i=1}^{N}$ be the standard basis of~$\R^N$. Given an ordered
edge list $E = \{(u_1,v_1),\ldots,(u_M,v_M)\}$ and the query pair
$(\uend, \ustar)$, content embeddings $z_{1:T} \in (\R^N)^T$ with $T = 2M+2$ is
\[
  z_{2i-1} = e_{u_i},\; z_{2i} = e_{v_i}\ (i = 1,\ldots,M),
  \qquad z_{2M+1} = e_{\uend},\quad z_{2M+2} = e_{\ustar}.
\]
The full input tokens are $x_t = (z_t; e_t) \in \mathbb{R}^{N+T}$. Each edge is encoded as two consecutive tokens; the last two tokens are
$\uend$ then $\ustar$. Placing $\uend$ before $\ustar$ ensures that $\uend$
is visible in the prompt when the model predicts $v$, matching the
autoregressive generation order. We write $\tctxend$ for the position of $\uend$ in the context edge list,
$\tctxv = \tctxend - 1$ for the position of $v$, and
$\tpromptend = T - 1$ for $\uend$ in the prompt.


\textbf{Training objective.}
We apply MTP with lookahead $k=2$, targeting
$y^{(1)} = v$ and $y^{(2)} = \uend$. Let
$ Z = [z_1,\ldots,z_T]^\top$ be the MTP context and
$Z' = [z_1,\ldots,z_{T-1}, e_{y^{(1)}}]^\top$ the AR context (replacing
$z_T$ with $e_{y^{(1)}}$). The loss combines deep-head NTP terms with a
shallow MTP head:
\begin{equation}\label{eq:total-loss}
  \mathcal{L}(G) = -\frac{1}{2}\Bigg[
    \frac{1}{2}\Big(
      \underbrace{\log f_1(Z) e_{y^{(1)}}}_{\mathcal{L}_{1a}}
      + \underbrace{\log f_1(Z') e_{y^{(2)}}}_{\mathcal{L}_{1b}}
    \Big)
    + \underbrace{\log f_2(Z) e_{y^{(2)}}}_{\mathcal{L}_2}
  \Bigg].
\end{equation}

\begin{figure}[!t]
    \centering
    
    \begin{subfigure}[b]{0.23\textwidth}
        \centering
        \caption{NTP $\widetilde W^{(1)}$}
        \includegraphics[width=\textwidth]{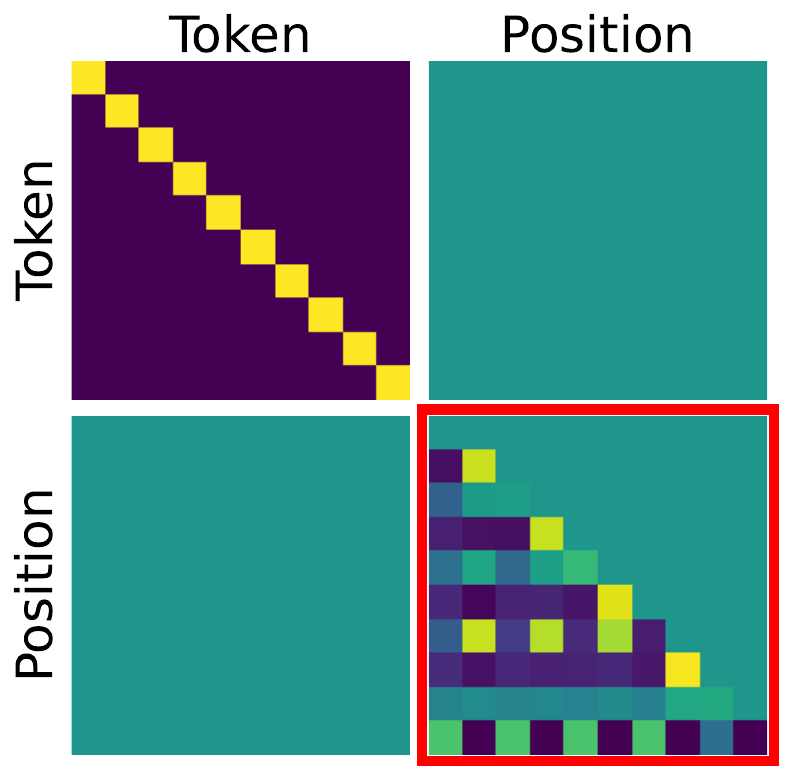}
    \end{subfigure}
    \hfill
    \begin{subfigure}[b]{0.225\textwidth}
        \centering
        \caption{NTP $\widetilde W^{(2)}$}
        \includegraphics[width=\textwidth]{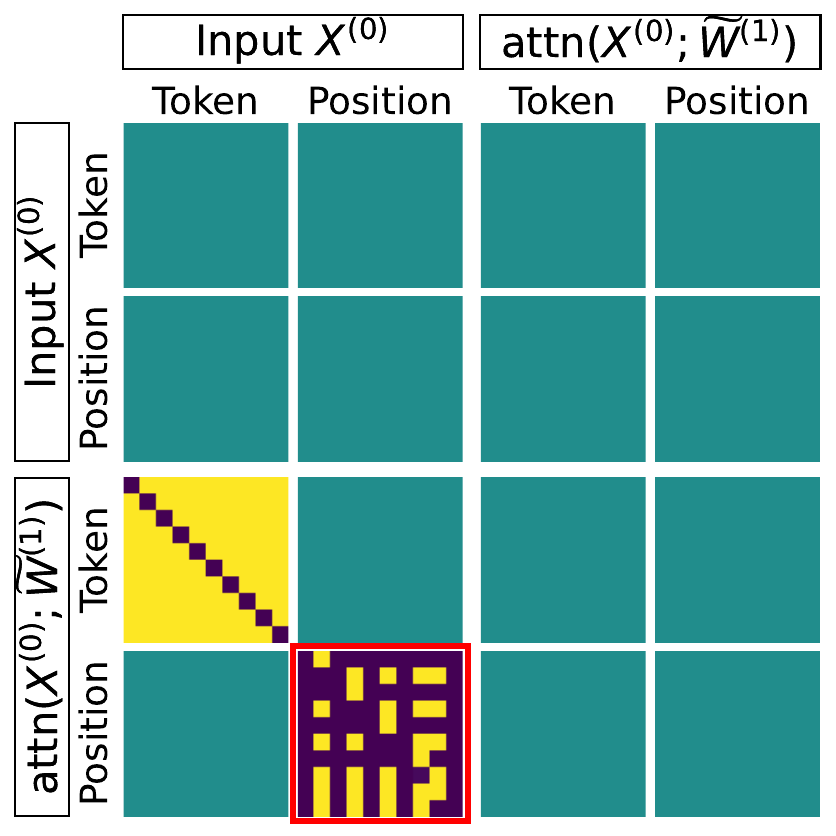}
    \end{subfigure}
    \hfill
    \begin{subfigure}[b]{0.26\textwidth}
        \centering
        \caption{NTP $\attn^{(1)}$}
        \includegraphics[width=\textwidth]{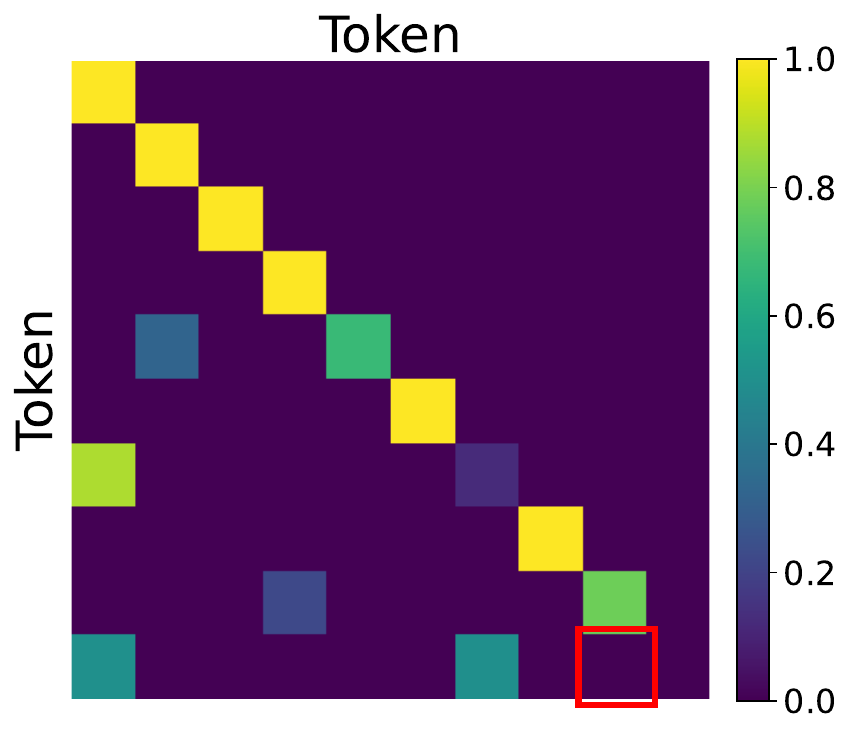}
    \end{subfigure}
    \hfill
    \begin{subfigure}[b]{0.26\textwidth}
        \centering
        \caption{NTP $\attn^{(2)}$}
        \includegraphics[width=\textwidth]{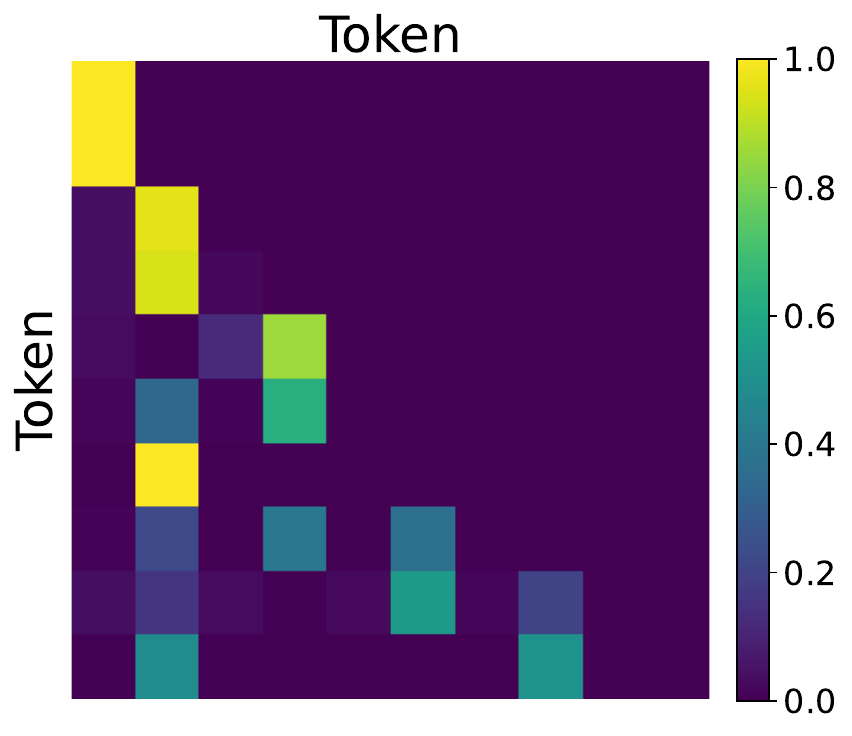}
    \end{subfigure}
    

    \begin{subfigure}[b]{0.23\textwidth}
        \centering
        \caption{MTP $\widetilde W^{(1)}$}
        \includegraphics[width=\textwidth]{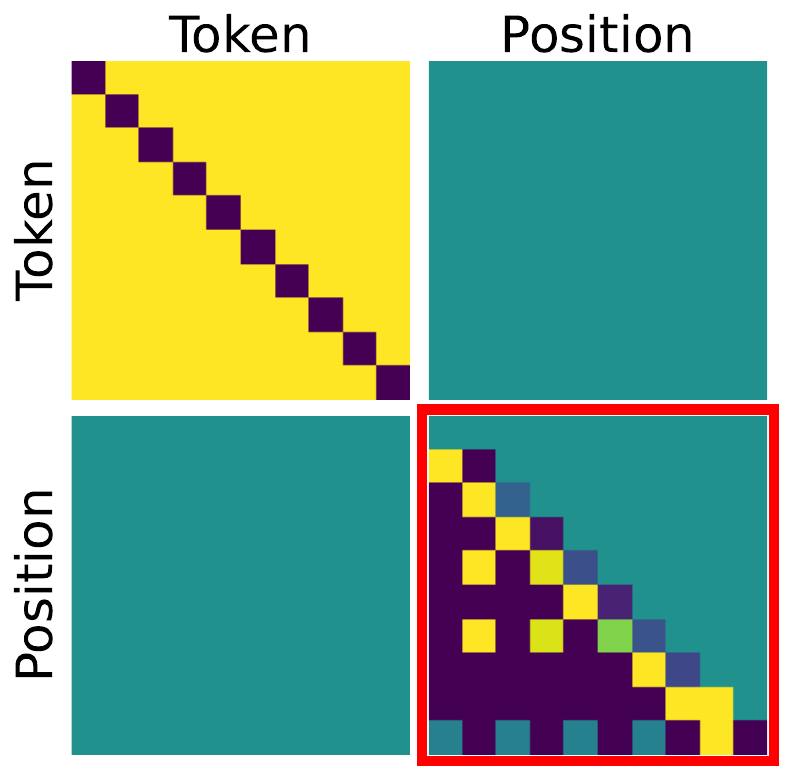}
    \end{subfigure}
    \hfill
    \begin{subfigure}[b]{0.225\textwidth}
        \centering
        \caption{MTP $\widetilde W^{(2)}$}
        \includegraphics[width=\textwidth]{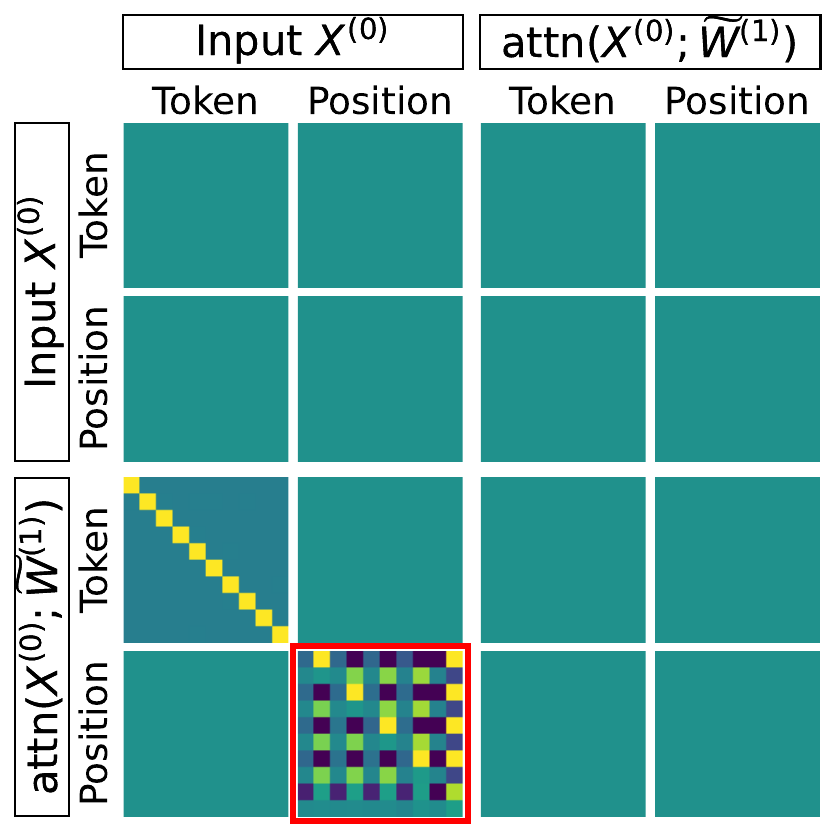}
    \end{subfigure}
    \hfill
    \begin{subfigure}[b]{0.26\textwidth}
        \centering
        \caption{MTP $\attn^{(1)}$}
        \includegraphics[width=\textwidth]{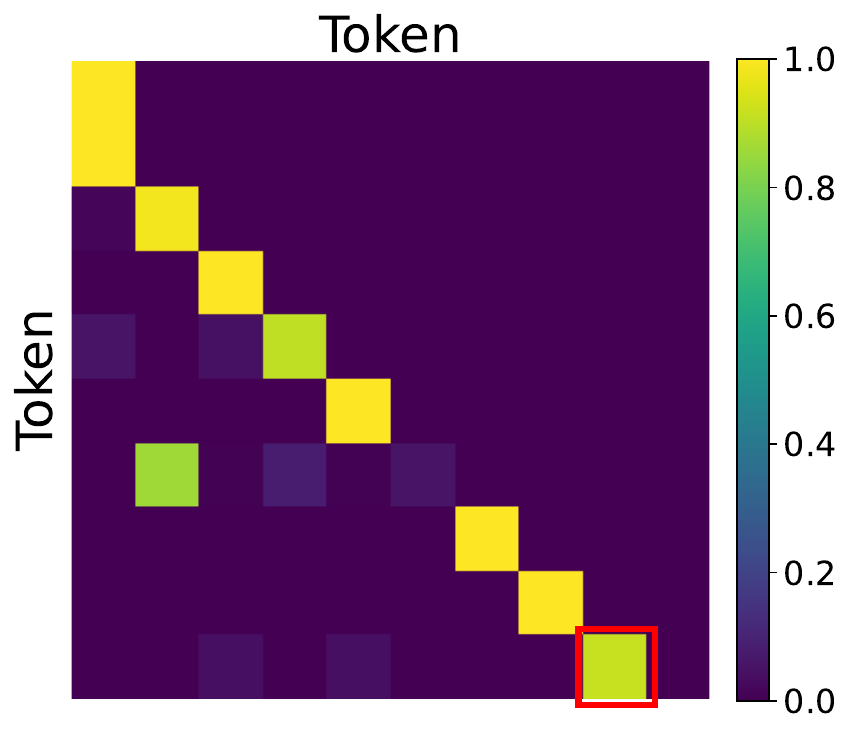}
    \end{subfigure}
    \hfill
    \begin{subfigure}[b]{0.26\textwidth}
        \centering
        \caption{MTP $\attn^{(2)}$}
        \includegraphics[width=\textwidth]{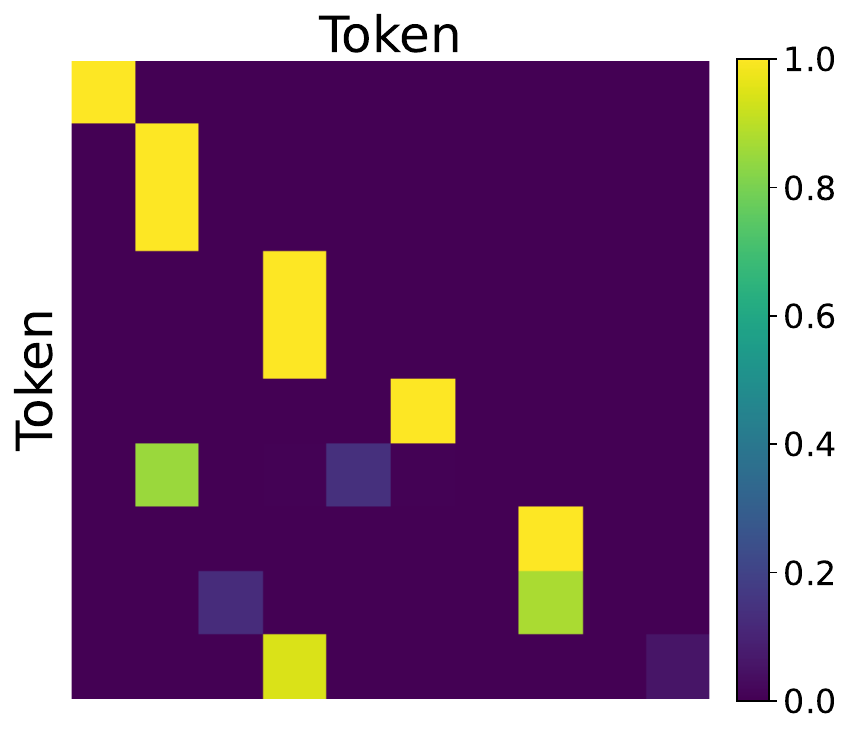}
    \end{subfigure}


    \begin{subfigure}[b]{0.95\textwidth} 
        \centering
        \caption{NTP/MTP Heatmap of $V_0$}
        \includegraphics[width=\textwidth]{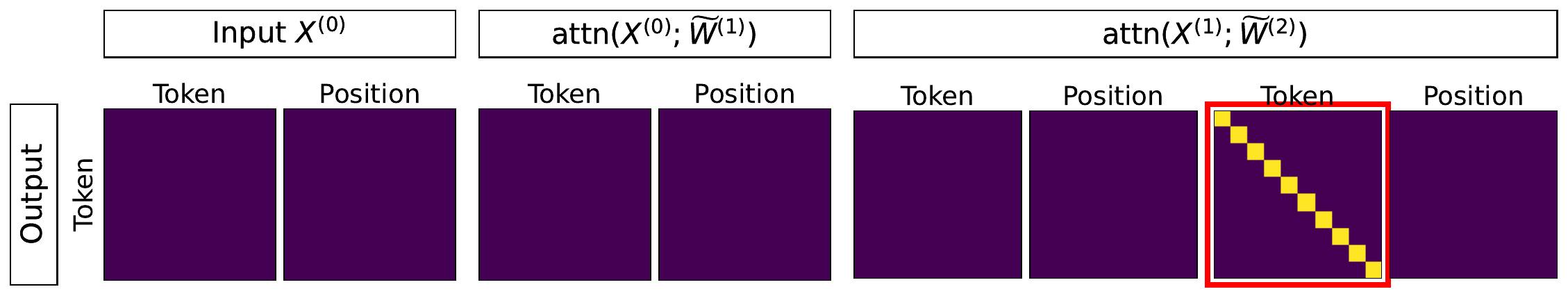}
    \end{subfigure}
    
    
    \begin{subfigure}[b]{0.95\textwidth}
        \centering
        \caption{MTP Heatmap of $V_1$}
        \includegraphics[width=\textwidth]{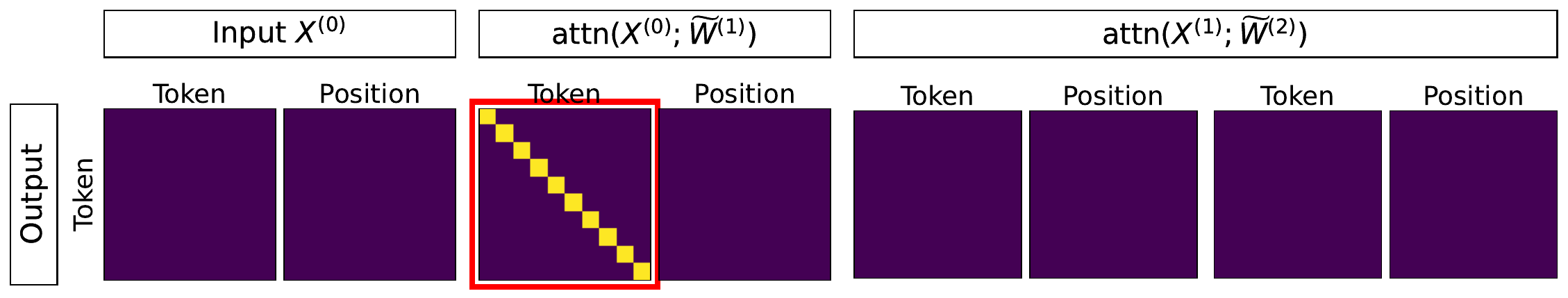}
    \end{subfigure}

    \caption{\textbf{Mechanism comparison between the minimal structures of NTP 
and 2-MTP on the star graph task.} (a--d)~Query-key weights 
($\widetilde W^{(1)}$, $\widetilde W^{(2)}$) and attention patterns ($\mathrm{attn}^{(1)}$, 
$\mathrm{attn}^{(2)}$) for NTP. (e--h)~Corresponding weights and attention 
patterns for 2-MTP. Attention heatmaps are computed on a sequence with input 
graph $[3, 7, 6, 0, 7, 2, 3, 6, 0, 3]$ and target $[3, 6, 0]$. When 
predicting the difficult intermediate node, MTP~(g) exhibits a clear 
predecessor-pointing pattern consistent with the reverse reasoning circuit 
(\cref{thm:stationary}): Layer~1 attends sharply to the end node, whereas NTP~(c) shows 
no such structure. (i)~The output projection $V_0$ for NTP (identical in 
MTP) activates exclusively on the content block of the Layer~2 output. 
(j)~The MTP-specific projection $V_1$ activates on the content block of the 
Layer~1 output. These block-sparse structures motivate the reduced model in 
Section~\ref{sec:prob_model}.}
    \label{fig:attention_heatmap}
\end{figure}

\textbf{The disentangled Transformer.} To enable rigorous analysis, we adopt a simplified but expressive abstraction of causal attention and the Transformer. Following \citet{zhang2024trained,nichani2024how,huang2025transformers}, we merge the key and query matrices into a single matrix $W$.

\textbf{Empirical motivation.}
We train two-layer disentangled transformers (one head per layer) comparing
NTP and 2-MTP on this task. To isolate a minimal structure, we zero out
cross-block entries in the query-key matrices and restrict each output head to
read from a single content block---choices directly motivated by the learned
weight structure (Figure~\ref{fig:attention_heatmap}). This minimal model preserves the core phenomenon:
NTP fails while MTP succeeds. Under MTP, the model exhibits a clear
\emph{reverse reasoning} pattern: Layer~1 attends sharply to $\uend$ when
predicting $v$ (Figure~\ref{fig:attention_heatmap}g), whereas NTP shows no such structure (Figure~\ref{fig:attention_heatmap}c). We refer to \cref{app:exp_sparse} for the detailed setup and verify in Appendix~\ref{sec:valid_full} that this reverse reasoning mechanism generalizes to \textit{a larger, standard 8-layer 8-head Transformer on 5-path 5-node star graphs} with qualitatively identical attention patterns.

\begin{definition}[Causal self-attention head]\label{def:attn}
For $W \in \R^{d \times d}$, define
$$\attn(X; W) := \softmax(\MASK(X W X^\top))\, X,$$
where $\MASK(M)_{i,j} = M_{i,j}$ for $i \geq j$ and $-\infty$ otherwise.
\end{definition}
 
\begin{definition}[Disentangled Transformer]\label{def:disentangled_TF}
Let $L$ be the depth, $d_1=N+T$, and
$d_\ell=2d_{\ell-1}$ for each $\ell\in[L]$.
Let $g_{1:T}=(g_1,\ldots,g_T)$ denote the sequence of node identities.
The input embeds each token as $x_t=(z_t;\,e_t)\in\R^{d_0}$,
where $z_t\in\R^N$ and $e_t\in\R^T$ are one-hot content and position
vectors, respectively. We then define $X^{(0)}=[x_1,\ldots,x_T]^\top$. For each $\ell\in[L]$, let $\widetilde{W}^{(\ell)}\in\R^{d_{\ell}\times d_{\ell}}$ denote the attention-weight matrix at layer $\ell$. The model iterates
\begin{equation}
    X^{(\ell)}
    =
    \left[
        X^{(\ell-1)},
        \;
        \attn\left(
            X^{(\ell-1)};
            \widetilde{W}^{(\ell)}
        \right)
    \right],
    \qquad \ell\in[L],
    \label{eq:full_iterate}
\end{equation}
and its final output is $f_\theta(g_{1:T})=X^{(L)}V$.
\end{definition}

Guided by the empirical block structures of the attention weights and output
heads in Figures~\ref{fig:attention_heatmap}a--b and
\ref{fig:attention_heatmap}i--j, respectively, we adopt the following
simplified two-layer model. We set
\begin{equation}
\widetilde{W}^{(1)} = W^{(1)},
  \qquad
  \widetilde{W}^{(2)}
  =
  \begin{pmatrix}
    0 & 0 \\[2pt]
    W^{(2)} & 0
  \end{pmatrix},
  \label{eq:first_W}
\end{equation}
  
and constrain
\begin{equation*}
  W^{(\ell)}
  =
  \begin{pmatrix}
    W^{(\ell)}_0 & 0 \\[2pt]
    0 & W^{(\ell)}_1
  \end{pmatrix},
  \qquad
  W^{(\ell)}_0 \in \R^{N\times N},
  \quad
  W^{(\ell)}_1 \in \R^{T\times T},
  \qquad \ell\in\{1,2\}.
\end{equation*}
We further fix the two independent heads
$V_0,V_1\in\R^{d_2\times N}$, which read from $X^{(2)}$, to the corresponding
single-block selector forms. These simplifications preserve the
NTP-fails/MTP-succeeds phenomenon, indicating that the underlying mechanism
is captured by the attention dynamics. In particular, the block-diagonal
parameterization yields a clean decomposition of the attention logits into
content and positional terms (\cref{lemma:forward}), enabling the gradient
analysis in Sections~\ref{sec:circuit}--\ref{sec:optimization}.

\begin{lemma}[Forward pass]\label{lemma:forward}
Let $Z = [z_1,\ldots,z_T]^\top$ with $z_t = e_{g_t}$ denoting the content-only submatrix of the full input. Define
\[
  A^{[\ell]} := Z\, W^{(\ell)}_0 Z^\top + W^{(\ell)}_1,
  \qquad S^{[0]} := I_T, \qquad
  S^{[\ell]} := \softmax \big(\MASK(S^{[\ell-1]} A^{[\ell]})\big).
\]
Then the output logits at the last position are
$f_1(Z) = S^{[2]}_{[-1,:]} S^{[1]} Z$ and $f_2(Z) = S^{[1]}_{[-1,:]} Z$.
\end{lemma}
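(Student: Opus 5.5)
We unroll the two layers of Definition~\ref{def:disentangled_TF} under the block constraints of \eqref{eq:first_W}. The proof is a direct computation.

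\textbf{Layer 1.} Write $X^{(0)} = [Z, P]$, where $P = I_T$ collects the one-hot position vectors, so $X^{(0)} = [Z, I_T]$. Because $W^{(1)}$ is block diagonal,
\[
X^{(0)} W^{(1)} X^{(0)\top} = Z\Wzoc Z^\top + I_T \Wzop I_T = A^{[1]}.
\]
This equals $S^{[0]}A^{[1]}$, so the attention matrix is $\Szero$. The layer-1 output is $\Szero X^{(0)} = [\Szero Z, \Szero]$, and hence
\[
X^{(1)} = [Z,\; I_T,\; \Szero Z,\; \Szero].
\]

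\textbf{Layer 2.} The matrix $\widetilde W^{(2)}$ has $W^{(2)}$ in the lower-left block. Under the ordering of Definition~\ref{def:disentangled_TF}, the bilinear form therefore pairs the new (layer-1) half of the query token with the original (input) half of the key token:
\[
X^{(1)}\widetilde W^{(2)} X^{(1)\top} = [\Szero Z, \Szero]\, W^{(2)}\, [Z, I_T]^\top .
\]
The block-diagonal structure of $W^{(2)}$ splits this into
\[
\Szero Z\Wonc Z^\top + \Szero \Wonp = \Szero A^{[2]}.
\]
Applying the mask and softmax gives $\Sone$, and the layer-2 output is $\Sone X^{(1)}$.

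\textbf{Output heads.} We take the fixed selector forms indicated by Figure~\ref{fig:attention_heatmap}(i)--(j). The head $V_0$ reads the content block of the layer-2 attention output, namely the first $N$ columns of $\Sone X^{(1)}$. Those columns are $\Sone Z$, not $\Sone\Szero Z$, so selecting them alone would give the wrong logits. I therefore expect the block that $V_0$ actually selects to be the layer-1-content block carried inside $X^{(1)}$, which gives $\Sone\Szero Z$. The head $V_1$ reads the content block of the layer-1 output, $\Szero Z$. Taking the last row then yields $f_1(Z) = \Sone_{[-1,:]}\Szero Z$ and $f_2(Z) = \Szero_{[-1,:]}Z$.

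\textbf{Main obstacle.} The only delicate point is bookkeeping. One must match the orientation of $\widetilde W^{(2)}$ (which half is query and which is key) and the exact column blocks picked out by $V_0$ and $V_1$ to the paper's conventions; the selector definitions should be chosen so that the stated formulas come out. A second subtlety is the mask. Since $\Szero$ is lower triangular, row $i$ of $\Szero A^{[2]}$ depends only on $A^{[2]}$ restricted to rows up to $i$. The mask applied to $\Szero A^{[2]}$ then acts entrywise exactly as in Definition~\ref{def:attn}, so no further causal issue arises.
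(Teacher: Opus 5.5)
Your proposal is correct and follows essentially the same route as the paper: you split the layer-1 logits via the block-diagonal $W^{(1)}$ into $A^{[1]}$, then form a layer-2 bilinear form whose query is the layer-1 output and whose key is the raw input, giving $S^{[1]}A^{[2]}$, and finally read off the two heads. Your observation that $V_0$ must select the $S^{[2]}S^{[1]}Z$ block of $S^{[2]}X^{(1)}$, not the first $N$ columns (which give $S^{[2]}Z$), is exactly what the paper assumes less explicitly when it states that layer~2 aggregates the layer-1 attention outputs weighted by $S^{[2]}$.
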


\subsection{The Reverse Reasoning Circuit} \label{sec:circuit}

We now characterize the attention configurations where the MTP loss
achieves a stationary point. Rather than searching the full
parameter space, we identify conditions on the softmax attention distributions
$S^{[1]}$ and $S^{[2]}$ that simultaneously zero out all gradient terms.

\begin{theorem} \label{thm:stationary}
The model achieves a stationary point for both the shallow loss
$\mathcal{L}_2$ and the deep loss $\mathcal{L}_1$ if the attention
distributions satisfy the following two conditions:
\begin{enumerate}[leftmargin=1.5em]
  \item \textbf{Predecessor Pointing (Layer 1):}
    Layer~1 attends to the immediate predecessor at both the query
    position and the target context position:
    $S^{[1]}_{T,:} = e_{T-1}^\top$ and
    $(S^{[1]})_{t_{\mathrm{end}}^{\mathrm{ctx}}, :}
      = e_{t_v^{\mathrm{ctx}}}^\top$.
 
  \item \textbf{Content Matching (Layer 2, Last Row):}
    Layer~2 identifies the context position where $u_{\mathrm{end}}$
    appears as part of an edge:
    $S^{[2]}_{T,:} = e_{t_{\mathrm{end}}^{\mathrm{ctx}}}^\top$.
\end{enumerate}
\end{theorem}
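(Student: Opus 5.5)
Using \cref{lemma:forward}, I would write each loss term explicitly in terms of $S^{[1]}$ and $S^{[2]}$ and check that the stated attention pattern drives the relevant cross-entropy terms to their minimum. One caveat: with finite weights a softmax never exactly equals a one-hot vector. So I will read the theorem as a statement about the limiting configuration, where the prescribed rows are one-hot (reachable as weights go to infinity along a suitable direction), and interpret ``stationary'' as the gradient vanishing in this limit.

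\textbf{Shallow loss.} Here $f_2(Z)=S^{[1]}_{T,:}Z$. Under Predecessor Pointing, $S^{[1]}_{T,:}=e_{T-1}^\top$, so $f_2(Z)=z_{T-1}^\top=e_{\uend}^\top$ (recall $\tpromptend=T-1$). This is exactly the target $y^{(2)}$. Differentiating $\mathcal{L}_2$ through the softmax in row $T$ of $S^{[1]}$ produces a factor $J(S^{[1]}_{T,:})$, and this Jacobian vanishes when the row is one-hot. Hence the gradients of $\mathcal{L}_2$ with respect to $W^{(1)}_0$ and $W^{(1)}_1$ vanish. Since $\mathcal{L}_2$ does not depend on layer~2 at all, this settles the shallow loss.

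\textbf{Deep loss.} The logits are $f_1(Z)=S^{[2]}_{T,:}S^{[1]}Z$. Under Content Matching, this becomes $S^{[1]}_{\tctxend,:}Z$, and Predecessor Pointing then gives $z_{\tctxv}^\top=e_v^\top=e_{y^{(1)}}^\top$. So $\Ldeep$ is at its minimum.

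The gradient of $\Ldeep$ with respect to layer~2 again contains the factor $J(S^{[2]}_{T,:})$, which is zero. The gradient with respect to layer~1 enters through two paths:
\begin{itemize}
\item through row $\tctxend$ of $S^{[1]}$, which is killed by $J(S^{[1]}_{\tctxend,:})=0$;
\item through the dependence of $S^{[2]}=\softmax(\MASK(S^{[1]}A^{[2]}))$ on $S^{[1]}$, which only enters via row $T$ of $S^{[2]}$ and is therefore killed by $J(S^{[2]}_{T,:})=0$.
\end{itemize}
Rows of $S^{[1]}$ other than $T$ and $\tctxend$ receive weight $S^{[2]}_{T,t}=0$ from the output, so they contribute nothing either.

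The AR term $\mathcal{L}_{1b}$ uses the context $Z'$, whose last token is $e_v$. I would argue that it fits the same pattern: the predecessor of $v$ in the edge list is position $\tctxv$, whose successor $\tctxend$ holds $\uend$. The step I expect to be the \textbf{main obstacle} is making this precise under the theorem's hypotheses, since they only constrain the attention on $Z$. I would first try to show that content matching on $Z'$ follows from the same $W^{(2)}$ because the content logits coincide. Failing that, I would state the conditions as also holding for the $Z'$ forward pass, which is the reading consistent with the problem statement's ``stationary for $\mathcal{L}_1$''.

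Finally, I would collect terms to verify that every gradient contribution is a product containing some $J(\cdot)$ evaluated at a one-hot row, or a zero attention weight. This makes the full gradient of $\mathcal{L}$ vanish.
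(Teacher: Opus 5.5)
Your argument for $\mathcal{L}_2$ and for the deep loss on the MTP context $Z$ is the paper's argument. Each gradient chain passes through one of $J(S^{[1]}_{T,:})$, $J(S^{[2]}_{T,:})$, $J(S^{[1]}_{\tctxend,:})$, all of which vanish at one-hot rows. The remaining rows of $S^{[1]}$ carry weight $S^{[2]}_{T,r}=0$. Keep your extra check that $f_2(Z)\cdot e_{y^{(2)}}=f_1(Z)\cdot e_{y^{(1)}}=1$. It keeps the prefactors $1/(f\cdot e_y)$ in Lemmas~\ref{lemma:dl0w} and~\ref{lemma:dl1w} bounded, which is what makes ``a zero Jacobian kills the gradient'' legitimate. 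The paper uses this only implicitly.

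The step that fails is the extension to the AR term $\mathcal{L}_{1b}$, together with your closing claim that the full gradient of $\mathcal{L}$ vanishes. As the paper proves the theorem, its $\mathcal{L}_1$ is the deep loss on $Z$ with target $y^{(1)}$, i.e.\ $\mathcal{L}_{1a}$. The remark right after the theorem says that $\mathcal{L}_{1b}$ has nonzero error at this configuration.

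Your ``same pattern'' heuristic (locate $v$ at $\tctxv$, read off its successor $\uend$) is forward edge-following, and this circuit does not implement it. Under your fallback you impose the two conditions on the $Z'$ pass. That is exactly what happens in the explicit construction, since Layer~1 is positional and the Layer-2 query $z'_{T-1}=e_{\uend}$ is unchanged. Then $f_1(Z')=S^{[1]}_{\tctxend,:}Z'=e_v^\top$, so $f_1(Z')\cdot e_{y^{(2)}}=0$ and $\mathcal{L}_{1b}=+\infty$.

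Jacobian gating cannot rescue this term, because the prefactor $1/(f_1(Z')\cdot e_{y^{(2)}})$ diverges at the same rate as the saturated Jacobians vanish. Concretely, on $Z'$ write $s=S^{[2]}_{T,:}$, $q=S^{[1]}Z'e_{y^{(2)}}$ and $p=s^\top q$. The last-row Layer-2 logits $\ell$ then receive $\partial\mathcal{L}_{1b}/\partial\ell_r=-s_r(q_r-p)/p$. Take $r=T$ in the construction of Corollary~\ref{cor:construction}: $s_T$ and $p$ are both $\Theta(e^{-\gamma})$ while $q_T\approx 1$. So the gradient with respect to $(W^{(2)}_1)_{T-1,T}$ is of order one, as expected for a confidently wrong cross-entropy. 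The same objection applies to the $O(e^{-\gamma})$ gradient bound asserted in Corollary~\ref{cor:ar-collapse}, which drops this prefactor.

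Drop the $Z'$ discussion and state the conclusion for $\mathcal{L}_2$ and $\mathcal{L}_{1a}$ only.
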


 
 

Note that a stationary point here means vanishing gradient, not necessarily zero loss. In particular, the AR component $\mathcal{L}_{1b}$ incurs nonzero error at this configuration, but its gradient is exponentially small (see Corollary~\ref{cor:ar-collapse} in Appendix~\ref{appdix:circuit}). This circuit is highly minimal. It relies strictly on \emph{intra-edge backward
attention}: forward attention within edges ($v \to \uend$) is neither required
nor utilized. Moreover, only two rows of $S^{[1]}$ are constrained (the query
row and $\tctxend$); all other context rows are free, which is reflected in the
sub-diagonal pattern being sharp at these two rows but diffuse elsewhere
(Figure~\ref{fig:attention_heatmap}g). Having established the stationary manifold in terms of attention conditions,
we now exhibit an explicit weight configuration that realizes it.

\begin{corollary}[Constructive Existence of the Reverse Reasoning Circuit]\label{cor:construction}
Let $\gamma > 0$ be sufficiently large, and let
$L \in \mathbb{R}^{T \times T}$ be the strictly lower shift matrix
($L_{i,i-1} = 1$, zero elsewhere).
The following weight configuration satisfies both conditions of
Theorem~\ref{thm:stationary}:
\begin{itemize}[leftmargin=1.5em]
  \item \textbf{Layer~1 (Predecessor Shift):}
    $W_0^{(1)} = 0$, \quad $W_1^{(1)} = \gamma\, L$.
 
  \item \textbf{Layer~2 (Content Matching):}
    $W_0^{(2)} = \gamma\, I$,
    $(W_1^{(2)})_{T-1,\, T-1} = -\gamma$
    (all other entries of $W_1^{(2)}$ are zero).
\end{itemize}
Under this configuration, the loss and gradient norm satisfy
$\mathcal{L} = O(e^{-\gamma})$ and
$\|\nabla_\theta \mathcal{L}\| = O(e^{-\gamma})$.
\end{corollary}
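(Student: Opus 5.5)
The plan is to check each attention row directly from the closed-form logits in Lemma~\ref{lemma:forward}, show that every row that matters is an $e^{-\gamma}$-saturated softmax, and then read off both the loss and the gradient from these saturation estimates.

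\textbf{Step 1 (Layer 1).} With $W^{(1)}_0=0$ and $W^{(1)}_1=\gamma L$ we have $A^{[1]}=\gamma L$. Since $S^{[0]}=I_T$, row $i\ge 2$ of the masked logits is $\gamma$ at column $i-1$ and $0$ at the other $i-1$ unmasked columns. Hence
\[
S^{[1]}_{i,:}=e_{i-1}^\top+O(Te^{-\gamma}).
\]
This holds uniformly in the content, so it applies to both $Z$ and $Z'$. In particular it gives $S^{[1]}_{T,:}\approx e_{T-1}^\top$ and $S^{[1]}_{\tctxend,:}\approx e_{\tctxv}^\top$, which is Condition~1 of Theorem~\ref{thm:stationary} up to $O(e^{-\gamma})$. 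Consequently $(S^{[1]}Z)_{T}\approx e_{\uend}$ and $(S^{[1]}Z)_{\tctxend}\approx e_{v}$.

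\textbf{Step 2 (Layer 2).} Row $T$ of the layer-2 logits is $S^{[1]}_{T,:}A^{[2]}=A^{[2]}_{T-1,:}+O(T\gamma e^{-\gamma})$. Its entries are
\[
A^{[2]}_{T-1,j}=\gamma\,\mathbb{I}[g_j=\uend]-\gamma\,\mathbb{I}[j=T-1].
\]
In the 2-path 3-node star graph, $\uend$ is a leaf, so it occurs exactly once in the edge list, at $\tctxend$, and once in the prompt, at $T-1$. The $-\gamma$ correction cancels the self-match at $T-1$. Position $T$ carries $\ustar\ne\uend$ and does not match. The logit gap is therefore $\gamma-O(\gamma Te^{-\gamma})$, which gives $S^{[2]}_{T,:}=e_{\tctxend}^\top+O(Te^{-\gamma})$, i.e.\ Condition~2.

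\textbf{Step 3 (loss).} Combining the two steps gives $f_1(Z)=S^{[2]}_{T,:}S^{[1]}Z=e_v+O(e^{-\gamma})$ and $f_2(Z)=S^{[1]}_{T,:}Z=e_{\uend}+O(e^{-\gamma})$. Interpreting the fixed selector heads as producing the predicted distribution, as in \eqref{eq:total-loss}, we get $-\log(1-O(e^{-\gamma}))=O(e^{-\gamma})$ for $\mathcal L_{1a}$ and $\mathcal L_2$.

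The AR term $\mathcal L_{1b}$ needs separate treatment. Layer~1 is purely positional, so on $Z'$ row $T$ still reads $\uend$ and layer~2 again retrieves $v$ rather than $\uend$. I expect this term's value to be handled as in the remark after Theorem~\ref{thm:stationary} and Corollary~\ref{cor:ar-collapse}, which shows its gradient is exponentially small. The claim $\mathcal L=O(e^{-\gamma})$ should then be read for the MTP components, with $\mathcal L_{1b}$ contributing only through its gradient.

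\textbf{Step 4 (gradient).} This is the main step. By the chain rule, every derivative with respect to $W^{(\ell)}_0$ or $W^{(\ell)}_1$ passes through a softmax Jacobian $J(s)=\mathrm{diag}(s)-ss^\top$ of some row. For an $\epsilon$-saturated row, $\|J(s)\|=O(\epsilon)$. The remaining factors are bounded: $Z$, $S$ and the heads have bounded norm, the $1/f_y$ factors from the log are $O(1)$ for $\mathcal L_{1a}$ and $\mathcal L_2$, and the factor $\gamma$ multiplies the layer-2 logits.

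The obstacle is to make sure that no unsaturated row, and no factor larger than polynomial in $\gamma$, enters the chain.
\begin{itemize}[leftmargin=1.5em]
  \item Row $1$ of $S^{[1]}$ is trivially one-hot, and every other layer-1 row is saturated by Step~1, so all layer-1 Jacobians are $O(e^{-\gamma})$.
  \item The derivative of the layer-2 logits with respect to $S^{[1]}$ contributes a factor $A^{[2]}=O(\gamma)$, so these terms are $O(\gamma e^{-\gamma})$.
  \item For $\mathcal L_{1b}$, the $1/f_y$ factor may be large, of order $e^{\gamma}$. Here I would invoke Corollary~\ref{cor:ar-collapse} rather than redo the bound: cancellation inside $J(s)e_y$ leaves a net factor of $s_y=O(e^{-\gamma})$.
\end{itemize}
Summing these contributions gives $\|\nabla_\theta\mathcal L\|=O(\mathrm{poly}(\gamma)e^{-\gamma})$. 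Absorbing the polynomial factor, e.g.\ by replacing $\gamma$ with $\gamma/2$ in the exponent, yields the stated $O(e^{-\gamma})$.
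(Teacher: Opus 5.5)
Your Steps 1--3 and the first two bullets of Step 4 follow the paper's own argument: a saturated predecessor shift in Layer~1, a self-masked content match in Layer~2, then ``every gradient chain passes through a near-one-hot Jacobian.'' They correctly give $\mathcal{L}_{1a},\mathcal{L}_2=O(e^{-\gamma})$ with $O(e^{-\gamma})$ gradients. You are also right, and more careful than the paper, that $\mathcal{L}_{1b}=\gamma+O(1)$ is not small.

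The gap is the $\mathcal{L}_{1b}$ bullet. Let $s'$ be row $T$ of the Layer-2 attention on $Z'$, $u:=S^{[1]}Z'e_{\uend}$, and $\mu:=(s')^{\top}u=f_1(Z')e_{\uend}$. The gradient of $\mathcal{L}_{1b}=-\log\mu$ with respect to the row-$T$ Layer-2 logits is $-\mu^{-1}J(s')u=-\mu^{-1}\,s'\odot(u-\mu\mathbf{1})$. At this configuration three facts hold:
\begin{itemize}
  \item $u_T\approx 1$, since row $T$ of Layer~1 reads position $T-1$, which holds $\uend$;
  \item $s'_T=\Theta(e^{-\gamma})$;
  \item $\mu=\Theta(e^{-\gamma})$.
\end{itemize}
So the $T$-th entry is $-\Theta(1)$, and hence $\partial\mathcal{L}_{1b}/\partial(W^{(2)}_1)_{T-1,T}=\Theta(1)$. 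The value path through row $\tctxend$ of Layer~1 likewise gives $\Theta(1)$ gradients on $W^{(1)}_1$.

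The smallness you extract from $J(s)e_y=s_y(e_y-s)$ is exactly what the prefactor $1/f_y$ cancels. Compare $-s_y^{-1}J(s)e_y=s-e_y$, which has norm $\Theta(1)$ whenever the model is confidently wrong. Even on your own count, $e^{\gamma}\cdot e^{-\gamma}=O(1)$, not $O(e^{-\gamma})$.

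Invoking Corollary~\ref{cor:ar-collapse} does not close this gap. Its gradient bound comes from the same Jacobian count with the $1/f_1(Z')e_{y^{(2)}}=\Theta(e^{\gamma})$ prefactor dropped, so it inherits the error. The paper's proof of the present corollary also only controls $\mathcal{L}_{1a}$ and $\mathcal{L}_2$; it concludes $\mathcal{L}=O(e^{-\gamma})$ from those two terms alone. For the full loss in \eqref{eq:total-loss}, however, $\mathcal{L}\ge\frac14\mathcal{L}_{1b}=\frac{\gamma}{4}+O(1)$, and by the computation above $\|\nabla_\theta\mathcal{L}\|=\Theta(1)$.

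So this bullet cannot be repaired. What your argument, and the paper's, actually proves is the pair of bounds for $\mathcal{L}_{1a}$ and $\mathcal{L}_2$. That is all the two conditions of Theorem~\ref{thm:stationary} involve. You should restrict both the loss claim and the gradient claim to those terms, not just the loss claim.

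A smaller point: replacing $\gamma$ by $\gamma/2$ turns $\mathrm{poly}(\gamma)e^{-\gamma}$ into $O(e^{-\gamma/2})$, not $O(e^{-\gamma})$. It is also unnecessary. The only $\gamma$-sized factor, $A^{[2]}$ on the query path, sits between $J(S^{[1]}_{T,:})$ and $J(S^{[2]}_{T,:})$, so that term is $O(\gamma e^{-2\gamma})=O(e^{-\gamma})$.
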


Corollary~\ref{cor:construction} reveals a fundamental tension. The
disentangled transformer possesses the representational capacity to solve the
task perfectly---the reverse reasoning circuit lies within its hypothesis class.
However, as we show in Section~\ref{sec:optimization}, pure NTP fails to discover this
configuration from zero initialization. The failure is not one of
expressivity but of \emph{optimization}: NTP's gradient dynamics actively steer
the model away from the predecessor pointer (Theorem~\ref{thm:ntp-misdirected}), while MTP provides a
clean two-phase gradient path that converges to this exact solution
(Theorem~\ref{thm:cascaded}). Comparing Figures~\ref{fig:attention_heatmap}c--d with~\ref{fig:attention_heatmap}g--h on the running example
confirms this: MTP recovers the predecessor-pointing pattern in Layer~1 and
sharp content matching in Layer~2, while NTP develops neither despite identical
model capacity.

\subsection{Why MTP Finds It and NTP Cannot} \label{sec:optimization}

The reverse reasoning circuit exists within the model's hypothesis class
(Corollary~\ref{cor:construction}), yet whether gradient-based optimization discovers it depends
critically on the training objective. The key mechanism is a
\emph{gradient decoupling} property: the shallow loss $\mathcal{L}_2$ routes
gradients exclusively through Layer~1, entirely
bypassing the uninitialized Layer~2. This enables a clean two-phase learning
process under MTP, which we now formalize.

\begin{theorem}[Cascaded Convergence to the Reverse Reasoning Circuit]
\label{thm:cascaded}
Phase I: Consider the MTP loss $\mathcal{L}_2$ under gradient flow.
Assume the content weight is fixed at $W_0^{(1)} = 0$, that $W_1^{(1)}$
has RoPE (Toeplitz) structure $W_{i,j} = w(i-j)$, where $w: \mathbb{Z} \rightarrow \mathbb{R}$ is a
scalar function of the offset. Consequently, $S^{[1]}_{t,:} \to e_{t-1}^\top$ for all $t \geq 2$,
i.e., Layer~1 converges to the universal predecessor pointer
$S^{[1]} \to {L}$. Phase II: With $W^{(1)}_1$ frozen at $\gamma L$. Then optimize $\mathcal{L}_{1a}$ over $(W^{(2)}_0, W^{(2)}_1 )$ from the zero initialization through gradient flow. Then, $W ^{(2)}_0 \to \gamma' I+B$, where $\gamma'\to+\infty$ and $B$ is off-diagonal
        with finite negative entries $B_{d,u}<0$ for $d\neq u$; $(W^{(2)}_1)_{T-1,T-1}\to-\infty$; all other entries of $W^{(2)}_1$ remain at~$0$; and $S^{[2]}_{T,:} \to e_{t_{\mathrm{end}}^{\mathrm{ctx}}}^\top$.
\end{theorem}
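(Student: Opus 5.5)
Phase I and Phase II are proved separately, each by writing out the relevant gradient through \cref{lemma:forward} and reading off sign information. The common tool is gradient decoupling: $f_2(Z)=S^{[1]}_{[-1,:]}Z$ depends only on Layer~1, so $\mathcal{L}_2$ has no contribution from Layer~2 at all.

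\textbf{Phase I.} With $W_0^{(1)}=0$ we have $A^{[1]}=W_1^{(1)}$, so $S^{[1]}_{t,:}=\softmax(w(t-1),\dots,w(0))$ over the causal window.
\begin{itemize}[leftmargin=1.5em]
\item At the last row, $\mathcal{L}_2=-\log\langle S^{[1]}_{T,:}, Ze_{\uend}\rangle$. The mass on $\uend$ equals $p_{T-1}$ plus the mass at $\tctxend$ (and at any other occurrence of $\uend$).
\item The logit gradient at the last row is $-(q-s)$, where $s=S^{[1]}_{T,:}$ and $q$ is $s$ restricted to the positions holding $\uend$, renormalized.
\item Pulled back to the offset parameters, this reads $\dot w(\delta)=\E_G\big[q_{T-\delta}-s_{T-\delta}\big]$.
\end{itemize}
Averaged over the graph distribution, offset $\delta=1$ always points to $\uend$. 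Any other offset points to $\uend$ only on the event that $\tctxend$ sits at that offset, which has probability bounded away from one because the edge order is random. Hence $\dot w(1)>0$, and $w(1)$ grows relative to every other $w(\delta)$.

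The claim that the relative gap $w(1)-w(\delta)$ diverges is handled as follows. Once $s_{T-1}\to1$, the residual gradient on $\delta\ne1$ is $-s_{T-\delta}(1-\text{match})\le0$ on average. A standard logarithmic-divergence argument for softmax cross-entropy (comparing with $\dot w(1)\gtrsim e^{-(w(1)-\max w)}$) then gives $w(1)-w(\delta)\to\infty$ for all $\delta\ne1$. Because of the Toeplitz tying, this single fact yields $S^{[1]}_{t,:}\to e_{t-1}^\top$ at every row $t\ge2$, not only at the query row. That is exactly why the RoPE structure is assumed.

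\textbf{Phase II.} Freeze $S^{[1]}=L$, so $S^{[1]}Z$ has rows $z_{t-1}$. Then $f_1(Z)=\sum_t S^{[2]}_{T,t}\,z_{t-1}$, and the target $v$ sits at $z_{\tctxv}$, i.e.\ at row $t=\tctxend$. The Layer-2 logits at the last row are $a_t=\langle (LA^{[2]})_{T,:}\rangle_t$, since $(S^{[1]}A^{[2]})_{T,t}=A^{[2]}_{T-1,t}$. This gives
\[
a_t=(W_0^{(2)})_{\uend,g_t}+(W_1^{(2)})_{T-1,t}.
\]
The gradient of $\Ldeep$ is $\E_G[(r-s)\otimes(\text{features})]$, where $s=S^{[2]}_{T,:}$ and $r$ is $s$ restricted to rows $t$ with $g_{t-1}=v$.

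At zero initialization $s$ is uniform, and we compute:
\begin{itemize}[leftmargin=1.5em]
\item The diagonal entry $(W_0^{(2)})_{u,u}$ collects $+(r-s)$ on positions with $g_t=\uend$. These include $\tctxend$, which is correct, and $T-1$ itself, where $z_{T-2}$ is generally not $v$, which is incorrect. The net effect is positive.
\item The positional entry $(W_1^{(2)})_{T-1,T-1}$ only ever gets the negative part, so it drifts to $-\infty$.
\item Off-diagonal entries $(W_0^{(2)})_{d,u}$ receive negative gradient that vanishes once $s$ concentrates, so they converge to finite negative limits $B_{d,u}$.
\item Entries $(W_1^{(2)})_{T-1,t}$ with $t\ne T-1$ have zero expected gradient by permutation symmetry of the edge order, so they stay at $0$.
\end{itemize}

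The main obstacle is making the Phase II dynamics rigorous. This requires showing that the ratio structure (diagonal $\to+\infty$, off-diagonal bounded) is preserved along the entire flow. I would use the symmetry across node labels to reduce $W_0^{(2)}$ to two scalars, $\gamma' I+\beta(\1\1^\top-I)$. I would then run a two-dimensional ODE analysis together with the scalar $(W_1^{(2)})_{T-1,T-1}$. The goal is to show $\dot\gamma'\asymp e^{-\gamma'}$, giving logarithmic divergence, and $\int|\dot\beta|<\infty$. With these, $S^{[2]}_{T,:}\to e_{\tctxend}^\top$ follows.
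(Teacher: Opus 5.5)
Your Phase~I matches the paper's argument: decoupling, Toeplitz tying, and the fact that the predecessor offset always hosts $\uend$ while any other offset does so only with probability $<1$. Tracking each offset separately is more faithful than the paper's lumped $(s_p,s_c)$ reduction, and your logarithmic-divergence step spells out what the paper leaves at ``$\dot\Delta>0$''.

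\textbf{The gap is in Phase~II, and it is a conservation law.} At the query row the Layer~2 logits are $\ell_t=\sum_i p_i\bigl[(W_0^{(2)})_{g_i,g_t}+(W_1^{(2)})_{i,t}\bigr]$ with $p=S^{[1]}_{T,:}$ fixed. Here $\partial\mathcal{L}_{1a}/\partial\ell=s-r$, with $r$ the renormalized restriction of $s$ to the hits, and $\sum_t(s_t-r_t)=0$ by softmax shift invariance. Hence both gradients
\[
\partial\mathcal{L}_{1a}/\partial(W_1^{(2)})_{i,t}=p_i(s_t-r_t),\qquad \partial\mathcal{L}_{1a}/\partial(W_0^{(2)})_{a,b}=\sum_{i,t}p_i\,[g_i=a]\,[g_t=b]\,(s_t-r_t)
\]
have zero row sums. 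Under gradient flow from zero, every row of $W_0^{(2)}$ and of $W_1^{(2)}$ therefore sums to $0$ for all time. This has three consequences.
\begin{itemize}
\item In your ansatz it forces $\gamma'+(N-1)\beta\equiv 0$. The content dynamics are one-dimensional, and $\int|\dot\beta|\,dt=\infty$ exactly when $\gamma'\to\infty$, so your two Phase~II goals contradict each other.
\item Likewise, $(W_1^{(2)})_{T-1,T-1}\to-\infty$ forces $\sum_{t\neq T-1}(W_1^{(2)})_{T-1,t}\to+\infty$.
\item Your permutation-symmetry claim also fails on its own terms. Permuting edges never moves the prompt positions. Position $T$, whose Layer~1 output is $e_{\uend}$, is never a hit and drifts at rate $-\E[s_T]<0$.
\end{itemize}
So ``$B$ finite'' and ``all other entries of $W_1^{(2)}$ remain at $0$'' are incompatible with $\gamma'\to\infty$ and $(W_1^{(2)})_{T-1,T-1}\to-\infty$.

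The paper's proof asserts the same conclusions and does not escape this obstruction. From its own formula $\dot{(W_0^{(2)})}_{i,\uend}\propto\delta_{i,\uend}-(Z^\top s)_i$, the off-diagonal rates sum to minus the diagonal rate, since $\sum_i(Z^\top s)_i=1$. Hence ``$(Z^\top s)_i\to 0$'' cannot give integrability. Its rank-one argument only confines the motion to the positional row $T-1$ and says nothing about that row's other entries.

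\textbf{What you can aim to prove instead} is the gauge-invariant content of the theorem, which is what $S^{[2]}_{T,:}\to e_{\tctxend}^\top$ actually needs:
\begin{itemize}
\item divergence of the content gaps $(W_0^{(2)})_{\uend,\uend}-(W_0^{(2)})_{\uend,d}$, i.e.\ of $\gamma'-\beta$;
\item divergence of the positional differences $(W_1^{(2)})_{T-1,t}-(W_1^{(2)})_{T-1,T-1}$ for $t\in\{2,4,6,8\}$, tracking the whole positional row modulo constants rather than a single scalar.
\end{itemize}
In doing so, drop the claim that the self-position entry ``only ever gets the negative part''. When $(\ustar,v)$ is the last edge, $(S^{[1]}Z)_{T-1}=z_{T-2}=e_v$ is a hit, as you note yourself. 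For a uniformly random edge order, the expected drift $\E[r_{T-1}]-1/T$ of $(W_1^{(2)})_{T-1,T-1}$ is positive at zero initialization. It turns negative only after the content gap has raised $s_{T-1}$. The self-mask argument therefore has to be staged after, or coupled to, the growth of the content gap.
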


The two-phase structure is a direct consequence of MTP's gradient
decoupling:
Phase~I uses $\mathcal{L}_2$, which bypasses Layer~2 entirely, to solve
a \emph{pure positional learning problem} for Layer~1.
Phase~II then benefits from a fully converged Layer~1, reducing the
combinatorial task (jointly learn position + content matching) to a clean
contrastive problem over content alone.
Under pure NTP, this cascade cannot form---Layer~1 receives gradient only
through the uninitialized Layer~2, leading to the misdirected gradient of
Theorem~\ref{thm:ntp-misdirected}.

\begin{theorem}[Misdirected Gradient under Pure NTP]
\label{thm:ntp-misdirected}
Consider the same setup as Phase~I of Theorem~\ref{thm:cascaded}: two-layer Disentangled Transformer, star graph task
with $T=10$, $\Wzoc = 0$ fixed, $\Wzop$ with Toeplitz structure $w = 0$
at initialization, $\Wonc = \Wonp = 0$. Optimize the pure NTP deep loss. Then gradient descent \emph{decreases} $w(1)$ (predecessor) and
\emph{increases} $w(k)$ for $k\geq 2$ (context): pure NTP actively repels
the predecessor pointer and diffuses attention across context, structurally
preventing formation of the reverse reasoning circuit.
\end{theorem}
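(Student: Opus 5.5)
The plan is to compute the gradient of the pure NTP deep loss $\mathcal{L}_1=\tfrac12(\mathcal{L}_{1a}+\mathcal{L}_{1b})$ with respect to the Toeplitz parameters $w(k)$ at the zero initialization. Then we read off the sign of each component. Using \cref{lemma:forward} with $\Wzoc=0$, $\Wonc=\Wonp=0$, the Layer-2 logits are $\Szero A^{[2]}=0$, so $\Sone$ is the uniform causal attention. At the last row, $\Sone_{T,:}=\tfrac1T\mathbf 1^\top$. Hence the deep logit is
\[
f_1(Z)=\tfrac1T\mathbf 1^\top \Szero Z=\tfrac1T\sum_{s=1}^{T}\sum_{j}\Szero_{s,j}\,z_j^\top,
\]
and the same holds for $Z'$ with $z_T$ replaced by $e_v$. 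At $w=0$, $\Szero$ is also uniform causal, so $\Szero_{s,:}=\tfrac1s(e_1+\dots+e_s)^\top$. The crucial structural point is that $\Wonp$ is frozen at zero. The Layer-2 attention therefore never reweights rows, and Layer 1 only influences the output through the column-averaged mass $m_j:=\tfrac1T\sum_s \Szero_{s,j}$ on each position.

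Next I differentiate. For $\Wzop$ Toeplitz, $\partial \Szero_{s,:}/\partial w(k)=(J(\Szero_{s,:})E^{(k)}_{s})^\top$, where $E^{(k)}_s$ is the indicator of column $s-k$ (when $s-k\ge1$). At uniform attention this equals $\tfrac1s(e_{s-k}-\bar e_s)$ with $\bar e_s$ the uniform vector on $[s]$. Writing the cross-entropy gradient as $(p-e_y)$ with $p=\softmax(f_1)$, I get
\[
\frac{\partial \mathcal{L}_{1a}}{\partial w(k)}=-\frac1T\sum_{s>k}\frac1s\big(z_{s-k}-\bar z_s\big)^\top(e_v-p),
\]
where $\bar z_s$ is the mean content over $[s]$. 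An analogous expression holds for $\mathcal{L}_{1b}$ with target $\uend$ and context $Z'$. The sign therefore reduces to counting, for each offset $k$, how often the token $k$ steps back from $s$ equals the target ($v$ or $\uend$), relative to the average frequency. Because $p$ is a softmax of averages of one-hot vectors, $p$ is close to uniform at small scale. I would keep its contribution exactly, since $\sum_s(z_{s-k}-\bar z_s)^\top \mathbf 1=0$, and bound the remaining deviation term explicitly.

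The combinatorial core uses the fixed serialization (edges as consecutive pairs, then $\uend$, $\ustar$, with $T=10$) to evaluate the sums. For $k=1$, the pairs $(s-1,s)$ with $s$ even are edge-internal, so $z_{s-1}$ is a source node. The target $v$ appears as a source only once (the edge $v\to\uend$), while $\ustar$ appears as a source twice. In the AR context, the predecessor of the last position is $\uend$ rather than the target. I expect the $k=1$ sum to be negative relative to the mean, giving $\partial_{w(1)}\mathcal L_1>0$, so gradient descent decreases $w(1)$. For $k\ge2$, the shifted positions sample the context more evenly and hit the target at or above average frequency, giving negative derivatives, so these $w(k)$ increase.

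The main obstacle is that this sign computation depends on the specific ordering of edges in the list. The statement is presumably either for expected gradients over the data distribution (random edge order, random labels) or for the fixed $T=10$ layout. I would first take expectation over random relabelings and edge permutations, which makes each term an explicit rational number in $T=10$ and $M=4$. I would then verify the signs of the resulting finitely many values $k=1,\dots,9$ directly, including the $p$-correction, whose magnitude is $O(1/N)$ and should not flip any sign.
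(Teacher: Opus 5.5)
\paragraph{Main gap: the choice of loss.} Your skeleton matches the paper's. With Layer~2 at zero, $S^{[2]}_{T,:}=\frac1T\mathbf 1^\top$, so the deep output averages all rows of $S^{[1]}Z$. At uniform attention each row contributes $\frac1r(u_{r-k}-\bar u_r)$, and the sign is read off after averaging over graphs.

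The problem is that you differentiate $\frac12(\mathcal{L}_{1a}+\mathcal{L}_{1b})$. The paper's proof takes the pure NTP deep loss to be $\mathcal{L}_{1a}$ alone, i.e.\ $-\log\mu$ with $\mu=\frac1T\mathbf 1^\top S^{[1]}Ze_v$. For your combined loss the $k=1$ claim is false:
\begin{itemize}
\item On $Z'$ the target is $\uend$, which sits at position $T-1$. So $u'_{T-1}=1$ deterministically, and row $T$ alone contributes $\frac1T\bigl(1-\frac2T\bigr)=\frac{2}{25}$ to $T\,\partial_{w(1)}\mu'$.
\item In the paper's uniform-position model, the full $\mathcal{L}_{1b}$ bracket is $\frac{2}{25}-\frac{7}{648}\approx 0.069$. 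The $\mathcal{L}_{1a}$ bracket is $\frac{1}{648}-\frac{1}{100}\approx-0.0085$.
\item After dividing by $\mu_0\approx 0.12$ and $\mu_0'\approx 0.14$, the $\mathcal{L}_{1b}$ term is about seven times larger and of opposite sign. Gradient descent on your loss therefore \emph{increases} $w(1)$.
\end{itemize}
Your remark that ``in the AR context the predecessor of the last position is $\uend$ rather than the target'' has this backwards: for $\mathcal{L}_{1b}$, $\uend$ \emph{is} the target. The decisive fact in the paper is the corresponding statement on $Z$. There $u_{T-1}=0$ because $z_{T-1}=e_{\uend}\neq e_v$, which makes the row-$T$ term $-\frac{1}{T^2}$ at $k=1$ and $+\frac{2}{T^2(T-2)}$ at $k\ge 2$.

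\paragraph{Two further corrections.} First, there is no softmax on top of $f_1$. The output $f_1(Z)=S^{[2]}_{T,:}S^{[1]}Z$ is already a probability vector and the loss is $-\log(f_1\cdot e_y)$. Hence $\partial_{w(k)}\mathcal{L}_{1a}=-\mu^{-1}\partial_{w(k)}\mu$, and the $p$-correction you planned to bound does not arise. An unproved $O(1/N)$ bound with $N=10$ would in any case be risky against margins like $\frac{1}{648}$ versus $\frac{1}{100}$.

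Second, the paper does not enumerate edge orders. It treats $u=Ze_v$ as a single indicator uniform on positions $1,\dots,T-2$, with $u_{T-1}=u_T=0$. Then rows $r\le T-2$ cancel in expectation, and row $T-1$ adds the same $\frac{1}{(T-1)^2(T-2)}$ for every $k$. The sign follows from $(T-1)^2(T-2)=648>100=T^2$, after which the paper divides $\E[\partial\mu]$ by a single constant $\mu_0$.

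That last convention is essential. If you average the per-graph gradient $-\mu^{-1}\partial_{w(1)}\mu$ exactly, as your plan proposes, the $1/\mu$ weight favors graphs where $v$ appears late in the context. In those graphs the predecessor gain at row $t+1$ dominates, and the $k=1$ sign reverses: about $-0.042$ in the uniform model, and still negative with the true serialization. So an exact enumeration will not reproduce the stated sign for $w(1)$ unless you adopt the paper's convention of freezing $\mu$ at $\mu_0$ before taking the expectation.
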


The intuition is straightforward.
Under pure NTP with an uninitialized Layer~2, Layer~2 contributes a
uniform $\frac{1}{T}$ weight to every position.
The deep head's prediction is therefore an average over \emph{all} rows of
$S^{[1]}$, not just the query row.
The target $y^{(1)} = v$ appears somewhere in the context but
\emph{never} at the predecessor position $T-1$ (which encodes
$u_{\mathrm{end}} \neq v$).
So the NTP gradient tells Layer~1: ``attend to the context where the
target might be''---exactly the opposite of the predecessor pointer
that the 2-hop circuit requires.
The shallow MTP loss avoids this entirely because it targets
$y^{(2)} = u_{\mathrm{end}}$, which \emph{always} resides at $T-1$.

\section{Conclusion}\label{sec:conclusion}

In summary, we show that the reasoning capabilities enabled by MTP arise from its impact on optimization dynamics.
While prior work has established MTP’s empirical advantages, our study strengthens these findings and, more importantly, formalizes the underlying mechanisms. 
By avoiding the entangled training signals of standard NTP, MTP leverages \emph{gradient decoupling} to foster global planning abilities. 
We show that this training paradigm biases the network toward discovering interpretable algorithms, particularly a reverse reasoning process on graph path-finding tasks.
Beyond validating these mechanisms across algorithmic and logic-based benchmarks, our work bridges a critical gap in understanding how training objectives shape model reasoning, providing a foundation for designing training paradigms for advanced reasoning.
\paragraph{Limitations and future work.} Our theory focuses on a two-layer disentangled Transformer with a star graph task and lookahead $k=2$. Extending to deeper architectures, general graph topologies, finite-time convergence rates, and sequential MTP variants~\citep{liu2024deepseek} remains open.



\section*{Acknowledgments}

BM is supported in part by the NSF CAREER Award 2146492, NSF-Simons AI Institute for Cosmic Origins (CosmicAI), and NSF AI Institute for Foundations of Machine Learning (IFML). WH is supported by JSPS KAKENHI (24K20848) and JST
BOOST (JPMJBY24G6).

\bibliography{colm2026_conference}

@inproceedings{qi2020prophetnet,
  title={ProphetNet: Predicting future n-gram for sequence-to-Sequence Pre-training},
  author={Qi, Weizhen and Yan, Yu and Gong, Yeyun and Liu, Dayiheng and Duan, Nan and Chen, Jiusheng and Zhang, Ruofei and Zhou, Ming},
  booktitle={Findings of the Association for Computational Linguistics: EMNLP 2020},
  pages={2401--2410},
  year={2020}
}

@inproceedings{bachmann2024pitfalls,
  title={The Pitfalls of Next-Token Prediction},
  author={Bachmann, Gregor and Nagarajan, Vaishnavh},
  booktitle={International Conference on Machine Learning},
  pages={2296--2318},
  year={2024},
  organization={PMLR}
}

@inproceedings{gloeckle2024better,
  title={Better \& Faster Large Language Models via Multi-token Prediction},
  author={Gloeckle, Fabian and Idrissi, Badr Youbi and Roziere, Baptiste and Lopez-Paz, David and Synnaeve, Gabriel},
  booktitle={International Conference on Machine Learning},
  pages={15706--15734},
  year={2024},
  organization={PMLR}
}

@article{liu2024deepseek,
  title={Deepseek-v3 technical report},
  author={Liu, Aixin and Feng, Bei and Xue, Bing and Wang, Bingxuan and Wu, Bochao and Lu, Chengda and Zhao, Chenggang and Deng, Chengqi and Zhang, Chenyu and Ruan, Chong and others},
  journal={arXiv preprint arXiv:2412.19437},
  year={2024}
}

@misc{qwen3_next,
  author       = {{Qwen Team}},
  title        = {Qwen3-Next: Towards Ultimate Training \& Inference Efficiency},
  howpublished = {\url{https://qwen.ai/blog?from=research.latest-advancements-list&id=4074cca80393150c248e508aa62983f9cb7d27cd&}},
  year         = {2025},
  month        = {Sep},
  note         = {Accessed: 2026-03-11}
}

@article{xiao2026mimo,
  title={Mimo-v2-flash technical report},
  author={Xiao, Bangjun and Xia, Bingquan and Yang, Bo and Gao, Bofei and Shen, Bowen and Zhang, Chen and He, Chenhong and Lou, Chiheng and Luo, Fuli and Wang, Gang and others},
  journal={arXiv preprint arXiv:2601.02780},
  year={2026}
}

@article{blakeman2025nvidia,
  title={NVIDIA Nemotron 3: Efficient and Open Intelligence},
  author={Blakeman, Aaron and Grattafiori, Aaron and Basant, Aarti and Gupta, Abhibha and Khattar, Abhinav and Renduchintala, Adi and Vavre, Aditya and Shukla, Akanksha and Bercovich, Akhiad and Ficek, Aleksander and others},
  journal={arXiv preprint arXiv:2512.20856},
  year={2025}
}

@inproceedings{wang2025vocalnet,
  title={VocalNet: Speech LLMs with Multi-Token Prediction for Faster and High-Quality Generation},
  author={Wang, Yuhao and Liu, Heyang and Cheng, Ziyang and Wu, Ronghua and Gu, Qunshan and Wang, Yanfeng and Wang, Yu},
  booktitle={Proceedings of the 2025 Conference on Empirical Methods in Natural Language Processing},
  pages={19595--19612},
  year={2025}
}

@inproceedings{
liu2025lmtp,
title={L-{MTP}: Leap Multi-Token Prediction Beyond Adjacent Context for Large Language Models},
author={Xiaohao Liu and Xiaobo Xia and Weixiang Zhao and Manyi Zhang and Xianzhi Yu and Xiu Su and Shuo Yang and See-Kiong Ng and Tat-Seng Chua},
booktitle={The Thirty-ninth Annual Conference on Neural Information Processing Systems},
year={2025},
url={https://openreview.net/forum?id=0VDmWjW456}
}

@inproceedings{
gerontopoulos2025multitoken,
title={Multi-Token Prediction Needs Registers},
author={Anastasios Gerontopoulos and Spyros Gidaris and Nikos Komodakis},
booktitle={The Thirty-ninth Annual Conference on Neural Information Processing Systems},
year={2025},
url={https://openreview.net/forum?id=WDdBhcwzGe}
}

@article{samragh2025your,
  title={Your llm knows the future: Uncovering its multi-token prediction potential},
  author={Samragh, Mohammad and Kundu, Arnav and Harrison, David and Nishu, Kumari and Naik, Devang and Cho, Minsik and Farajtabar, Mehrdad},
  journal={arXiv preprint arXiv:2507.11851},
  year={2025}
}

@inproceedings{
will2026parallel,
title={Parallel Token Generation for  Language Models},
author={Justus Will and Felix Draxler and Farrin Marouf Sofian and Theofanis Karaletsos and Sameer Singh and Stephan Mandt},
booktitle={The Fourteenth International Conference on Learning Representations},
year={2026},
url={https://openreview.net/forum?id=AGJomYSrUG}
}

@article{ahn2025efficient,
  title={Efficient joint prediction of multiple future tokens},
  author={Ahn, Kwangjun and Lamb, Alex and Langford, John},
  journal={arXiv preprint arXiv:2503.21801},
  year={2025}
}

@inproceedings{
mahajan2026beyond,
title={Beyond Multi-Token Prediction: Pretraining {LLM}s with Future Summaries},
author={Divyat Mahajan and Sachin Goyal and Badr Youbi Idrissi and Mohammad Pezeshki and Ioannis Mitliagkas and David Lopez-Paz and Kartik Ahuja},
booktitle={The Fourteenth International Conference on Learning Representations},
year={2026},
url={https://openreview.net/forum?id=aeYIFVn4vb}
}

@article{zhong2025understanding,
  title={Understanding and Enhancing the Planning Capability of Language Models via Multi-Token Prediction},
  author={Zhong, Qimin and Liao, Hao and Wang, Siwei and Zhou, Mingyang and Wu, Xiaoqun and Mao, Rui and Chen, Wei},
  journal={arXiv preprint arXiv:2509.23186},
  year={2025}
}

@article{kaplan2020scaling,
  title={Scaling laws for neural language models},
  author={Kaplan, Jared and McCandlish, Sam and Henighan, Tom and Brown, Tom B. and Chess, Benjamin and Child, Rewon and Gray, Scott and Radford, Alec and Wu, Jeffrey and Amodei, Dario},
  journal={arXiv preprint arXiv:2001.08361},
  year={2020}
}

@article{hoffmann2022training,
  title={Training compute-optimal large language models},
  author={Hoffmann, Jordan and Borgeaud, Sebastian and Mensch, Arthur and Buchatskaya, Elena and Cai, Trevor and Rutherford, Eliza and Casas, Diego de Las and Hendricks, Lisa Anne and Welbl, Johannes and Clark, Aidan and others},
  journal={arXiv preprint arXiv:2203.15556},
  year={2022}
}

@inproceedings{
gandhi2024stream,
title={Stream of Search (SoS): Learning to Search in Language},
author={Kanishk Gandhi and Denise H J Lee and Gabriel Grand and Muxin Liu and Winson Cheng and Archit Sharma and Noah Goodman},
booktitle={First Conference on Language Modeling},
year={2024},
url={https://openreview.net/forum?id=2cop2jmQVL}
}

@inproceedings{yao2023treeofthoughts,
 author = {Yao, Shunyu and Yu, Dian and Zhao, Jeffrey and Shafran, Izhak and Griffiths, Tom and Cao, Yuan and Narasimhan, Karthik},
 booktitle = {Advances in Neural Information Processing Systems},
 editor = {A. Oh and T. Naumann and A. Globerson and K. Saenko and M. Hardt and S. Levine},
 pages = {11809--11822},
 publisher = {Curran Associates, Inc.},
 title = {Tree of Thoughts: Deliberate Problem Solving with Large Language Models},
 url = {https://proceedings.neurips.cc/paper_files/paper/2023/file/271db9922b8d1f4dd7aaef84ed5ac703-Paper-Conference.pdf},
 volume = {36},
 year = {2023}
}

@inproceedings{cook1971complexity,
author = {Cook, Stephen A.},
title = {The complexity of theorem-proving procedures},
year = {1971},
isbn = {9781450374644},
publisher = {Association for Computing Machinery},
address = {New York, NY, USA},
url = {https://doi.org/10.1145/800157.805047},
doi = {10.1145/800157.805047},
booktitle = {Proceedings of the Third Annual ACM Symposium on Theory of Computing},
pages = {151–158},
numpages = {8},
location = {Shaker Heights, Ohio, USA},
series = {STOC '71}
}

@article{
ke2025a,
title={A Survey of Frontiers in {LLM} Reasoning: Inference Scaling, Learning to Reason, and Agentic Systems},
author={Zixuan Ke and Fangkai Jiao and Yifei Ming and Xuan-Phi Nguyen and Austin Xu and Do Xuan Long and Minzhi Li and Chengwei Qin and PeiFeng Wang and silvio savarese and Caiming Xiong and Shafiq Joty},
journal={Transactions on Machine Learning Research},
issn={2835-8856},
year={2025},
url={https://openreview.net/forum?id=SlsZZ25InC},
note={Survey Certification}
}

@inproceedings{
kim2025transformers,
title={Transformers Provably Solve Parity Efficiently with Chain of Thought},
author={Juno Kim and Taiji Suzuki},
booktitle={The Thirteenth International Conference on Learning Representations},
year={2025},
url={https://openreview.net/forum?id=n2NidsYDop}
}

@inproceedings{feng2023cot,
 author = {Feng, Guhao and Zhang, Bohang and Gu, Yuntian and Ye, Haotian and He, Di and Wang, Liwei},
 booktitle = {Advances in Neural Information Processing Systems},
 editor = {A. Oh and T. Naumann and A. Globerson and K. Saenko and M. Hardt and S. Levine},
 pages = {70757--70798},
 publisher = {Curran Associates, Inc.},
 title = {Towards Revealing the Mystery behind Chain of Thought: A Theoretical Perspective},
 url = {https://proceedings.neurips.cc/paper_files/paper/2023/file/dfc310e81992d2e4cedc09ac47eff13e-Paper-Conference.pdf},
 volume = {36},
 year = {2023}
}

@inproceedings{
huang2025transformers,
title={Transformers Learn to Implement Multi-step Gradient Descent with Chain of Thought},
author={Jianhao Huang and Zixuan Wang and Jason D. Lee},
booktitle={The Thirteenth International Conference on Learning Representations},
year={2025},
url={https://openreview.net/forum?id=r3DF5sOo5B}
}

@inproceedings{
saunshi2025reasoning,
title={Reasoning with Latent Thoughts: On the Power of Looped Transformers},
author={Nikunj Saunshi and Nishanth Dikkala and Zhiyuan Li and Sanjiv Kumar and Sashank J. Reddi},
booktitle={The Thirteenth International Conference on Learning Representations},
year={2025},
url={https://openreview.net/forum?id=din0lGfZFd}
}

@inproceedings{
gatmiry2024can,
title={Can Looped Transformers Learn to Implement Multi-step Gradient Descent for In-context Learning?},
author={Khashayar Gatmiry and Nikunj Saunshi and Sashank J. Reddi and Stefanie Jegelka and Sanjiv Kumar},
booktitle={Forty-first International Conference on Machine Learning},
year={2024},
url={https://openreview.net/forum?id=o8AaRKbP9K}
}

@article{zhang2024trained,
  title={Trained transformers learn linear models in-context},
  author={Zhang, Ruiqi and Frei, Spencer and Bartlett, Peter L},
  journal={Journal of Machine Learning Research},
  volume={25},
  number={49},
  pages={1--55},
  year={2024}
}

@inproceedings{
wen2025from,
title={From Sparse Dependence to Sparse Attention: Unveiling How Chain-of-Thought Enhances Transformer Sample Efficiency},
author={Kaiyue Wen and Huaqing Zhang and Hongzhou Lin and Jingzhao Zhang},
booktitle={The Thirteenth International Conference on Learning Representations},
year={2025},
url={https://openreview.net/forum?id=AmEgWDhmTr}
}

@inproceedings{
wei2026how,
title={How Transformers Learn Causal Structures In-Context: Explainable Mechanism Meets Theoretical Guarantee},
author={Jianzhe Wei and Siyu Chen and Jianliang He and Zhuoran Yang},
booktitle={The Fourteenth International Conference on Learning Representations},
year={2026},
url={https://openreview.net/forum?id=bpF8zgSt41}
}

@inproceedings{
nichani2024how,
title={How Transformers Learn Causal Structure with Gradient Descent},
author={Eshaan Nichani and Alex Damian and Jason D. Lee},
booktitle={Forty-first International Conference on Machine Learning},
year={2024},
url={https://openreview.net/forum?id=jNM4imlHZv}
}

@article{chen2024unveiling,
  title={Unveiling induction heads: Provable training dynamics and feature learning in transformers},
  author={Chen, Siyu and Sheen, Heejune and Wang, Tianhao and Yang, Zhuoran},
  journal={Advances in Neural Information Processing Systems},
  volume={37},
  pages={66479--66567},
  year={2024}
}

@InProceedings{pmlr-v247-siyu24a,
  title = 	 {Training Dynamics of Multi-Head Softmax Attention for In-Context Learning: Emergence, Convergence, and Optimality (extended abstract)},
  author =       {Chen, Siyu and Sheen, Heejune and Wang, Tianhao and Yang, Zhuoran},
  booktitle = 	 {Proceedings of Thirty Seventh Conference on Learning Theory},
  pages = 	 {4573--4573},
  year = 	 {2024},
  editor = 	 {Agrawal, Shipra and Roth, Aaron},
  volume = 	 {247},
  series = 	 {Proceedings of Machine Learning Research},
  month = 	 {30 Jun--03 Jul},
  publisher =    {PMLR},
  url = 	 {https://proceedings.mlr.press/v247/siyu24a.html}
}

@article{friedman2023learning,
  title={Learning transformer programs},
  author={Friedman, Dan and Wettig, Alexander and Chen, Danqi},
  journal={Advances in Neural Information Processing Systems},
  volume={36},
  pages={49044--49067},
  year={2023}
}

@inproceedings{
guo2025activedormant,
title={Active-Dormant Attention Heads: Mechanistically Demystifying Extreme-Token Phenomena in {LLM}s},
author={Tianyu Guo and Druv Pai and Yu Bai and Jiantao Jiao and Michael I. Jordan and Song Mei},
booktitle={The Second Conference on Parsimony and Learning (Recent Spotlight Track)},
year={2025},
url={https://openreview.net/forum?id=Zx6WUbE9J7}
}

@inproceedings{
loshchilov2018decoupled,
title={Decoupled Weight Decay Regularization},
author={Ilya Loshchilov and Frank Hutter},
booktitle={International Conference on Learning Representations},
year={2019},
url={https://openreview.net/forum?id=Bkg6RiCqY7},
}

@inproceedings{
kim2025metastable,
title={Metastable Dynamics of Chain-of-Thought Reasoning: Provable Benefits of Search, {RL} and Distillation},
author={Juno Kim and Denny Wu and Jason D. Lee and Taiji Suzuki},
booktitle={Forty-second International Conference on Machine Learning},
year={2025},
url={https://openreview.net/forum?id=2HJcVtuovs}
}

@inproceedings{
ye2025beyond,
title={Beyond Autoregression: Discrete Diffusion for Complex Reasoning and Planning},
author={Jiacheng Ye and Jiahui Gao and Shansan Gong and Lin Zheng and Xin Jiang and Zhenguo Li and Lingpeng Kong},
booktitle={The Thirteenth International Conference on Learning Representations},
year={2025},
url={https://openreview.net/forum?id=NRYgUzSPZz}
}

@article{jiang2024unveil,
  title={Unveil benign overfitting for transformer in vision: Training dynamics, convergence, and generalization},
  author={Jiang, Jiarui and Huang, Wei and Zhang, Miao and Suzuki, Taiji and Nie, Liqiang},
  journal={Advances in Neural Information Processing Systems},
  volume={37},
  pages={135464--135625},
  year={2024}
}

@article{magen2024benign,
  title={Benign overfitting in single-head attention},
  author={Magen, Roey and Shang, Shuning and Xu, Zhiwei and Frei, Spencer and Hu, Wei and Vardi, Gal},
  journal={arXiv preprint arXiv:2410.07746},
  year={2024}
}

@inproceedings{
wang2026the,
title={The Power of Power Law: Asymmetry Enables Compositional Reasoning},
author={Zixuan Wang and Xingyu Dang and Jason D. Lee and Kaifeng Lyu},
booktitle={Forty-third International Conference on Machine Learning},
year={2026},
url={https://openreview.net/forum?id=K83orsg2X9}
}

@InProceedings{pmlr-v291-wang25a,
  title = 	 {Learning Compositional Functions with Transformers from Easy-to-Hard Data},
  author =       {Wang, Zixuan and Nichani, Eshaan and Bietti, Alberto and Damian, Alex and Hsu, Daniel and Lee, Jason D and Wu, Denny},
  booktitle = 	 {Proceedings of Thirty Eighth Conference on Learning Theory},
  pages = 	 {5632--5711},
  year = 	 {2025},
  editor = 	 {Haghtalab, Nika and Moitra, Ankur},
  volume = 	 {291},
  series = 	 {Proceedings of Machine Learning Research},
  month = 	 {30 Jun--04 Jul},
  publisher =    {PMLR},
  url = 	 {https://proceedings.mlr.press/v291/wang25a.html}
}
\bibliographystyle{colm2026_conference}

\newpage
\tableofcontents

\appendix

\section{Additional Related Work of MTP}
\label{app:related work}

\textbf{Foundations and Large-scale Integration.} The standard NTP paradigm has driven the recent success of LLMs, yet it inherently suffers from limitations regarding planning. Early efforts to mitigate these pitfalls include ProphetNet \citep{qi2020prophetnet}, which proposed predicting future n-grams for sequence-to-sequence pre-training. Building on this, subsequent research identified the planning deficits of NTP and introduced teacherless, MTP style objectives \citep{bachmann2024pitfalls}. This trajectory culminated in  \cite{gloeckle2024better}, which formalized modern MTP frameworks and demonstrated that predicting multiple future tokens simultaneously can yield models that are both better at reasoning and significantly faster during inference.

Following its initial success, MTP has seen widespread adoption in state-of-the-art industry models to enhance both generation speed and performance. Technical reports for leading models such as DeepSeek-V3 \citep{liu2024deepseek}, Qwen3-Next \citep{qwen3_next}, Nemotron 3 \citep{blakeman2025nvidia} and MiMo-V2-Flash \citep{xiao2026mimo} highlight MTP as a core component of their training pipelines. Furthermore, the applicability of MTP has recently extended beyond text, showing significant promise in multi-modal architectures \citep{wang2025vocalnet}.

\textbf{Post-training Adaptation.} While industrial models integrate MTP during the compute-intensive pre-training phase, an emerging line of research focuses on post-training adaptation. Specifically, several recent works \citep{liu2025lmtp, gerontopoulos2025multitoken, samragh2025your} utilize fine-tuning strategies to efficiently transform standard NTP models into MTP models. 

\textbf{Future Prediction.} Beyond standard MTP, a parallel line of research re-examines the fundamental modeling objective by shifting from point-wise prediction to holistic future modeling. At the sequence level, \citet{mahajan2026beyond} bypass discrete tokens entirely by pre-training models to predict high-level summaries. For finer-grained dependencies, Joint Token Prediction (JTP) \citep{ahn2025efficient} and Parallel Token Prediction (PTP) \citep{will2026parallel} move beyond independent token generation; while JTP enforces future-state compression via a representation bottleneck, PTP explicitly captures simultaneous token dependencies through latent variable modeling.

\begin{figure}[tb!]
    \centering

    \begin{subfigure}[b]{0.4\textwidth}
        \centering
        \includegraphics[width=\textwidth]{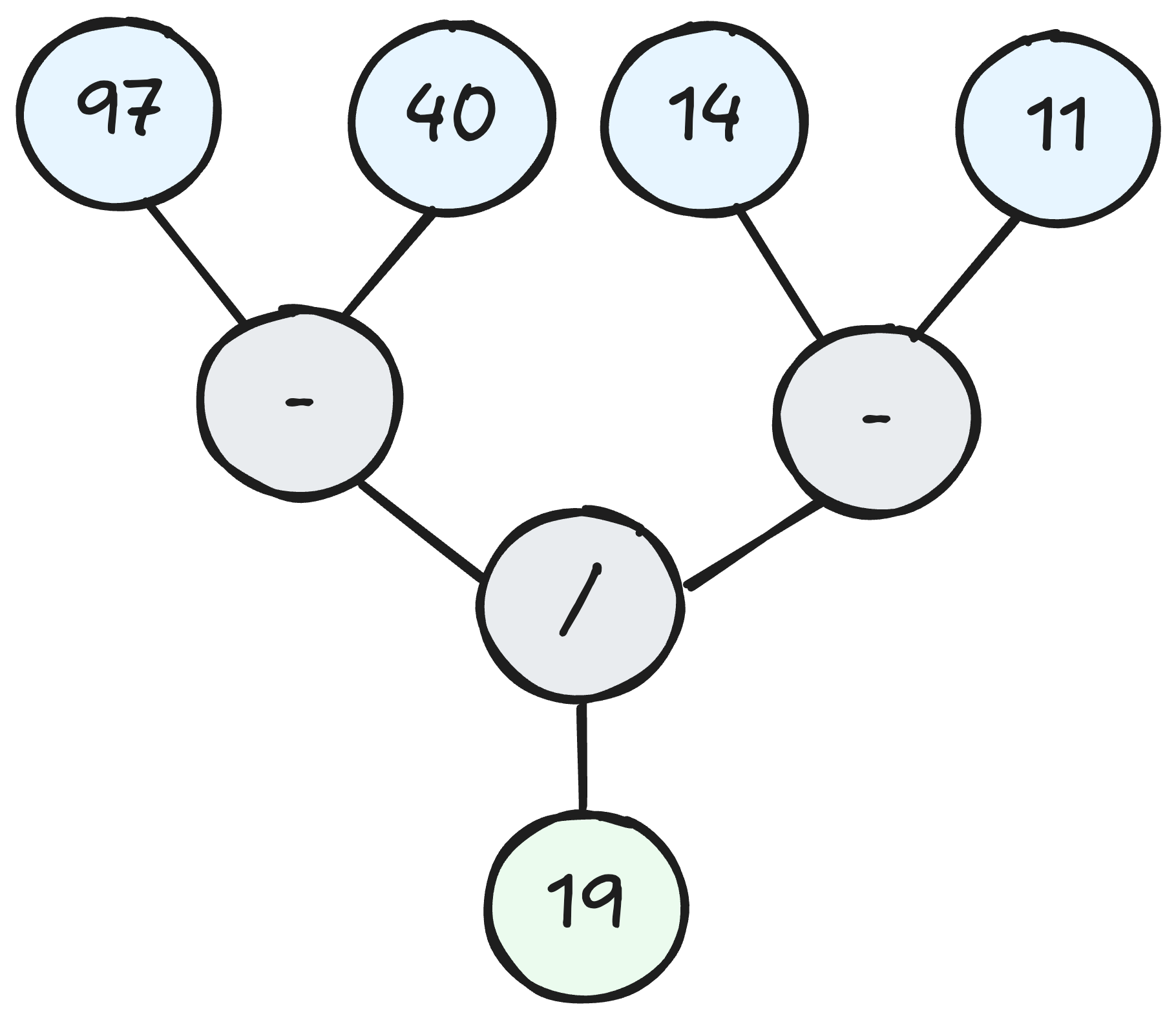}
        \caption{Countdown}
        \label{fig:countdown}
    \end{subfigure}
    \hfill 
    \begin{subfigure}[b]{0.4\textwidth}
        \centering
        \includegraphics[width=\textwidth]{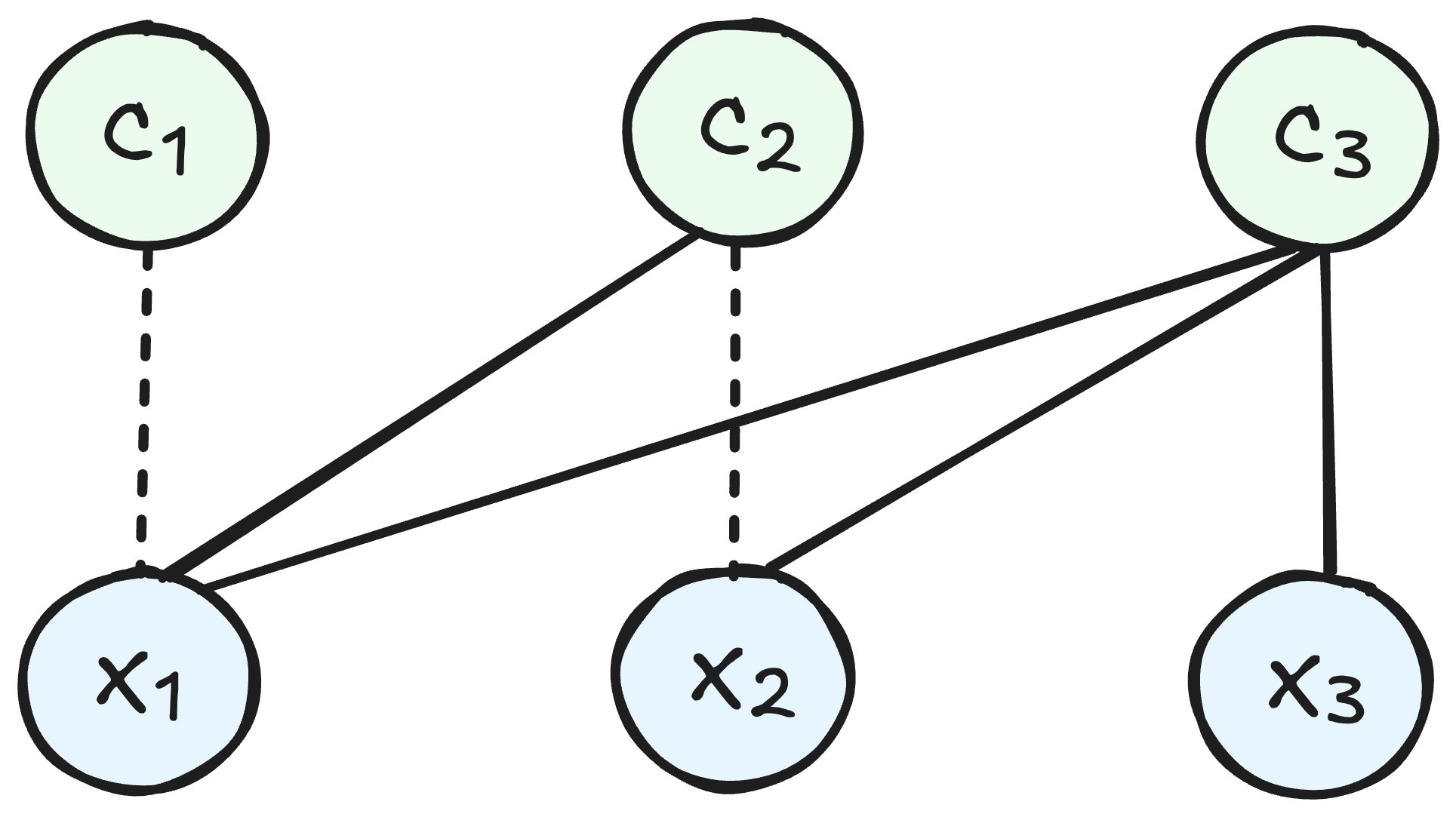}
        
        \caption{SAT}
        \label{fig:SAT}
    \end{subfigure}
    \caption{\textbf{Structural representations of reasoning tasks.} (a) Countdown: An expression tree showing the hierarchical computation to reach the target 19 using the numbers $\{11,14,40,97\}$. The internal nodes represent arithmetic operators, and the root node represents the final result. (b) SAT: A bipartite factor graph representing a Boolean Satisfiability problem $\qty(\neg x_1)\land\qty(x_1\lor\neg x_2)\land\qty(x_1\lor x_2\lor x_3)$. Nodes $\{x_1, x_2, x_3\}$ denote variables, and nodes $\{c_1, c_2, c_3\}$ denote logical clauses. Solid lines represent positive literals, while dashed lines indicate negated literals.}
    \label{fig:illustration_of_countdown_SAT}
    
\end{figure}

\section{Illustrations of Planning Tasks}
\label{app:illustration}

In this section, we provide visual examples and formal definitions of the Countdown and SAT tasks used in our experiments. These tasks are designed to evaluate the model's ability to perform long-range planning and structural reasoning.

\subsection{Countdown}
Countdown is a mathematical planning puzzle where the model is given a set of source numbers and a target value. The goal is to construct a valid arithmetic expression using basic operators $\qty(+,-,\times,/)$ to reach the target.

As illustrated in \cref{fig:illustration_of_countdown_SAT}a, the task can be represented as an expression tree. To solve this, a model cannot simply generate numbers greedily. It must internally plan the hierarchical structure of the computation as a single incorrect move would immediately render the target mathematically unreachable. For instance, to reach the target 19 from $\qty{11,14,40,97}$, the model must lock in the nested structure $\qty(-)/\qty(-)$ before outputting the first token. This ensures the very first operation is a committed step toward a globally valid solution.

\subsection{SAT}
The SAT task requires finding a truth assignment $\qty{0,1}$ that satisfies all clauses of a propositional formula in Conjunctive Normal Form (CNF).

As shown in \cref{fig:illustration_of_countdown_SAT}b, the task is represented as a bipartite factor graph where variables and clauses are interconnected. Within this structure, a single variable often appears in conflicting forms (e.g., $x$ and $\lnot x$) across multiple clauses, meaning any local assignment immediately propagates constraints throughout the entire network. Consequently, an initial choice that satisfies one local clause may preclude any valid assignment for the remaining variables, rendering the entire formula unsatisfiable. The model must therefore internally resolve these interconnected dependencies to ensure that the very first assignment remains compatible with a globally valid configuration.

\section{Technical Preliminaries} \label{sec:tech}


This appendix contains all lemma proofs, reorganised into three groups that mirror
the logical structure of the argument.

\paragraph{Notation summary.}
For convenience, we collect the key symbols used throughout the appendix.
After the block-diagonal factorisation in Lemma~\ref{lemma:forward}, the full input $X^{(0)}$ no longer appears;
all subsequent analysis is expressed in terms of the content matrix~$Z$.

\begin{center}
\renewcommand{\arraystretch}{1.25}
\small
\begin{tabular}{@{}c l c l@{}}
\toprule
\textbf{Symbol} & \textbf{Definition} & \textbf{Space} & \textbf{Introduced} \\
\midrule
$g_t$ & Node identity at position $t$ & $[N]$ & Definition~\ref{def:disentangled_TF} \\
$ {x}_t = ( {e}_{g_t};\, {e}_t)$ & Full token (content + position) & $\mathbb{R}^{N+T}$ & Definition~2 \\
$ {z}_t =  {e}_{g_t}$ & Content embedding at position $t$ & $\mathbb{R}^{N}$ & Lemma~\ref{lemma:forward} \\
$X^{(0)} = [ {x}_1,\dots, {x}_T]^\top$ & Full input matrix (Lemma~\ref{lemma:forward} proof only) & $\mathbb{R}^{T\times(N+T)}$ & Definition~\ref{def:disentangled_TF} \\
$Z = [ {z}_1,\dots, {z}_T]^\top$ & Content matrix & $\mathbb{R}^{T\times N}$ & Lemma~\ref{lemma:forward} \\
$Z'$ & AR content context ($ {z}_T\to {e}_{y^{(1)}}$) & $\mathbb{R}^{T\times N}$ & Equation~\ref{eq:total-loss} \\
\midrule
$\widetilde W^{(\ell)}$ & Full weight, layer $\ell$ & $\mathbb{R}^{2^{\ell-1}(N+T)\times 2^{\ell-1}(N+T)}$ & Equation~\ref{eq:full_iterate} \\
$W_0^{(\ell)}$ & Content weight, layer $\ell$ & $\mathbb{R}^{N\times N}$ & Equation~\ref{eq:first_W} \\
$W_1^{(\ell)}$ & Positional weight, layer $\ell$ & $\mathbb{R}^{T\times T}$ & Equation~\ref{eq:first_W} \\
$A^{[\ell]} = Z W_0^{(\ell)} Z^\top + W_1^{(\ell)}$ & Attention logit matrix, layer $\ell$ & $\mathbb{R}^{T\times T}$ & Lemma~\ref{lemma:forward} \\
$S^{[\ell]}$ & Attention distribution, layer $\ell$; \,$S^{[0]}=I_T$ & $\mathbb{R}^{T\times T}$ & Lemma~\ref{lemma:forward} \\
\midrule
$J( {s}) = \mathrm{diag}( {s}) -  {s} {s}^\top$ & Softmax Jacobian & $\mathbb{R}^{T\times T}$ &  \\
$L$ & Strictly lower shift matrix ($L_{i,i-1}=1$) & $\mathbb{R}^{T\times T}$ &  \\
\midrule
$ {e}_t \in \mathbb{R}^T$ & Positional one-hot (selects row $t$) & $\mathbb{R}^{T}$ & \\
$ {e}_{g_t} \in \mathbb{R}^N$ & Content one-hot (selects node $g_t$) & $\mathbb{R}^{N}$ & \\
\bottomrule
\end{tabular}
\end{center}
 
We follow the notation of Lemma~1 (main text): layers are {1-indexed},
so Layer~1 has weight matrices $(W^{(1)}_0, W^{(1)}_1)$ and produces attention
distribution $S^{[1]}$, while Layer~2 has $(W^{(2)}_0, W^{(2)}_1)$ and produces
$S^{[2]}$. The initialisation $S^{[0]} := I_T$ is not a layer but the identity
prior to any attention. Throughout, we write $J({s}) := \mathrm{diag}({s}) - {s}{s}^\top$
for the Jacobian of the softmax evaluated at a distribution ${s}$,
$L \in \mathbb{R}^{T \times T}$ for the strictly lower shift matrix with $L_{i,i-1}=1$,
and $I_T$ for the $T\times T$ identity.

\begin{itemize}
  \item \textbf{Section~\ref{sec:forward}} establishes the forward-pass factorisation
        that underlies all subsequent gradient calculations (Lemma~\ref{lemma:forward}).
  \item \textbf{Section~\ref{sec:deep_head}} derives the gradient structure of the
        \emph{deep} head $\mathcal{L}_1$, which couples both layers
        (Lemmas~\ref{lemma:dl0} and~\ref{lemma:dl0w}).
  \item \textbf{Section~\ref{sec:shallow_head}} derives the gradient structure of the
        \emph{shallow} MTP head $\mathcal{L}_2$ and proves the gradient decoupling
        property that is the key mechanism of MTP
        (Lemmas~\ref{lemma:dl1} and~\ref{lemma:dl1w}).
\end{itemize}

\subsection{Forward-Pass Factorisation}
\label{sec:forward}

\begin{proof}[Proof of Lemma \ref{lemma:forward}]
We specialize Definition~\ref{def:disentangled_TF} to $L=2$ layers with one head per layer and unpack the disentangled structure.

By Definition~\ref{def:disentangled_TF}, the input at position $t$ is
\[
  x_t = \begin{bmatrix} z_t \\ e_t \end{bmatrix} \in \mathbb{R}^{d_0},
  \qquad d_0 = N + T,
\]
where $z_t = e_{g_t} \in \mathbb{R}^N$ is the one-hot content embedding and
$e_t \in \mathbb{R}^T$ is the one-hot positional embedding.
Stack these as $X^{(0)} = [x_1, \dots, x_T]^\top \in \mathbb{R}^{T \times d_0}$.

\medskip\noindent\textbf{Layer~1 attention logits.} The disentangled assumption (Definition~\ref{def:disentangled_TF}) means the attention weight matrix
decomposes into content and positional blocks with no cross-terms:
\[
  W^{(\ell)}
  = \begin{bmatrix}
      W_0^{(\ell)} & 0 \\
      0 & W_1^{(\ell)}
    \end{bmatrix},
  \qquad
  W_0^{(\ell)} \in \mathbb{R}^{N \times N},\;
  W_1^{(\ell)} \in \mathbb{R}^{T \times T}.
\]
The $(i,j)$-entry of the raw attention logit matrix is therefore
\[
  \bigl(X^{(0)} W^{(1)} X^{(0)\top}\bigr)_{i,j}
  = x_i^\top W^{(1)} x_j
  = z_i^\top W_0^{(1)} z_j + e_i^\top W_1^{(1)} e_j.
\]
The first term contributes $(Z W_0^{(\ell)} Z^\top)_{i,j}$ (content matching)
and the second contributes $(W_1^{(\ell)})_{i,j}$ (positional bias, since
$e_i^\top W_1^{(1)} e_j$ selects entry $(i,j)$). Hence
\[
  A^{[1]}
  := X^{(0)} W^{(1)} X^{(0)\top}
  = Z\, W_0^{(1)}\, Z^\top + W_1^{(1)}
  \;\in \mathbb{R}^{T \times T}.
\]

Applying the causal mask and row-wise softmax:
\[
  S^{[1]} := \sigma \bigl(\mathrm{MASK}(A^{[1]})\bigr) \in \mathbb{R}^{T \times T}.
\]
The Layer~1 attention output (Definition~\ref{def:attn}) is
$\mathrm{attn}(X^{(0)}; W^{(1)}) = S^{[1]}\, X^{(0)} \in \mathbb{R}^{T \times d_0}$.
Its content part at position $t$ is
\begin{equation}\label{eq:layer0-content}
  \bigl(S^{[1]}\, X^{(0)}\bigr)_{t,\,1:N}
  = S^{[1]}_{t,:}\, Z
  \;\in \mathbb{R}^{1 \times N}.
\end{equation}

The shallow (1-hop) prediction head reads the Layer~1 attention output at the
last position $T$ and projects onto the node vocabulary $[N]$.
By~\eqref{eq:layer0-content}:
\[
  \boxed{\;f_2(Z) = S^{[1]}_{T,:}\, Z \;\in \mathbb{R}^{1 \times N}.\;}
\]

\medskip\noindent\textbf{Layer~2 attention logits.} By Definition~\ref{def:disentangled_TF},
$X^{(1)} = \bigl[X^{(0)},\; S^{[1]}\, X^{(0)}\bigr] \in \mathbb{R}^{T \times 2d_0}$.
Under the disentangled architecture, Layer~2's attention query at position $i$
is derived from the Layer~1 attention output $(S^{[1]} X^{(0)})_i$, while the
keys remain the original input tokens $x_j$.\footnote{This is the property of the disentangled transformer with sparse parameters: each
layer's query reads from the previous attention output, while the keys are
always the original input tokens. This avoids the quadratic blowup from the
concatenated representation and yields the clean recursive formula below.}

The Layer~2 attention logit at $(i,j)$ is then
\[
  \bigl(S^{[1]} X^{(0)}\bigr)_i^\top\, W^{(2)}\, x_j.
\]
Applying the disentangled block-diagonal decomposition of $W^{(2)}$:
\begin{align*}
  \bigl(S^{[1]} X^{(0)}\bigr)_i^\top\, W^{(2)}\, x_j
  &= \bigl(S^{[1]}_{i,:}\, Z\bigr)\, W_0^{(2)}\, z_j
     \;+\; \bigl(S^{[1]}_{i,:}\bigr)\, W_1^{(2)}\, e_j.
\end{align*}
In full matrix form, the content term gives
$S^{[1]}\, Z\, W_0^{(2)}\, Z^\top$
and the positional term gives
$S^{[1]}\, W_1^{(2)}$.
Combining:
\[
  \text{Layer~2 logit matrix}
  = S^{[1]}\bigl(Z\, W_0^{(2)}\, Z^\top + W_1^{(2)}\bigr)
  = S^{[1]}\, A^{[2]}.
\]
Applying the causal mask and row-wise softmax:
\[
  S^{[2]} := \sigma \bigl(\mathrm{MASK}(S^{[1]}\, A^{[2]})\bigr)
  \;\in \mathbb{R}^{T \times T},
\]
which confirms the recursive formula
$S^{[\ell]} = \sigma \bigl(\mathrm{MASK}(S^{[\ell-1]}\, A^{[\ell]})\bigr)$
with initial condition $S^{[0]} = I_T$.

The deep (2-hop) head reads from Layer~2's attention output at the last
position.  Layer~2 aggregates the Layer~1 attention outputs weighted by
$S^{[2]}$, so the content part at position $T$ is
\[
  S^{[2]}_{T,:}\,\bigl(S^{[1]}\, Z\bigr)
  = S^{[2]}_{T,:}\, S^{[1]}\, Z
  \;\in \mathbb{R}^{1 \times N}.
\]
Hence:
\[
  \boxed{\;f_1(Z) = S^{[2]}_{T,:}\, S^{[1]}\, Z \;\in \mathbb{R}^{1 \times N}.\;}
\]

Substituting definitions, we can verify the expanded expression.
For the last row $T$, all positions $j \le T$ are visible under the causal mask,
so the mask is vacuous for row $T$ of $S^{[1]} A^{[2]}$:
\begin{align*}
  S^{[2]}_{T,:}
  &= \sigma \bigl(S^{[1]}_{T,:}\, A^{[2]}\bigr) \\
  &= \sigma \Bigl(
       \underbrace{\sigma \bigl(z_T^\top W_0^{(1)} Z^\top + W_1^{(1)}[T,:]\bigr)}_{S^{[1]}_{T,:}}
       \;\cdot\;
       \bigl(Z\,W_0^{(2)}\,Z^\top + W_1^{(2)}\bigr)
     \Bigr),
\end{align*}
and therefore
\[
  f_1(Z)
  = \sigma \Bigl(
      \sigma \bigl(z_T^\top W_0^{(1)} Z^\top + W_1^{(1)}[T,:]\bigr)
      \bigl(Z\,W_0^{(2)}\,Z^\top + W_1^{(2)}\bigr)
    \Bigr)
    \sigma \bigl(\mathrm{MASK}(Z\,W_0^{(1)}\,Z^\top + W_1^{(1)})\bigr) Z,
\]
matching the stated expression.
\end{proof}

\subsection{Gradient Structure of the Deep Head}
\label{sec:deep_head}

The deep loss $\mathcal{L}_1=-\log f_1(Z)\cdot e_{y^{(1)}}$ passes gradients through
\emph{both} Layer~1 and Layer~2.
Lemma~\ref{lemma:dl0} identifies the three back-propagation paths; Lemma~\ref{lemma:dl0w} makes these explicit
for each parameter matrix.

\begin{lemma} \label{lemma:dl0}
 Let $\mathcal{L}_1(Z) = -\log(f_1(Z) \cdot e_{y^{(1)}})$ be the loss for the deep head evaluated on context $Z \in \{ Z, Z'\}$ with target $y$. Let $S^{[1]}, A^{[2]}, S^{[2]}$ be the forward activations for $Z$. Then:
\begin{align*}
\mathrm{d}\mathcal{L}_1(Z) 
&= -\frac{1}{f_1(Z) \cdot e_{y^{(1)}} } \cdot \Big( S^{[1]}_{[-1, :]} \mathrm{d}A^{[2]}  J(S^{[2]}_{[-1,:]}) S^{[1]} \Big)Z   e_{y^{(1)}}  \\
&\quad - \frac{1}{f_1(Z) \cdot  e_{y^{(1)}} } \cdot \Big( \mathrm{d}S^{[1]}_{[-1, :]}A^{[2]}  J(S^{[2]}_{[-1,:]}) S^{[1]} \Big)Z   e_{y^{(1)}} \\
&\quad - \frac{1}{f_1(Z) \cdot  e_{y^{(1)}}} \cdot \Big( S^{[2]}_{[-1, :]}\mathrm{d}S^{[1]} \Big)Z   e_{y^{(1)}} .
\end{align*}
\end{lemma}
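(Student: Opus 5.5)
The plan is a direct differential computation, starting from the factorisation of Lemma~\ref{lemma:forward}: $f_1(Z) = S^{[2]}_{[-1,:]} S^{[1]} Z$, with $S^{[2]}_{[-1,:]} = \sigma\bigl(S^{[1]}_{[-1,:]} A^{[2]}\bigr)$. For the last row the causal mask is vacuous, so no masking terms appear in its differential. Because $\mathcal{L}_1 = -\log(f_1(Z)\, e_{y^{(1)}})$, the chain rule first gives
\[
  \mathrm{d}\mathcal{L}_1 = -\frac{1}{f_1(Z)\, e_{y^{(1)}}}\,\mathrm{d}f_1(Z)\, e_{y^{(1)}} .
\]
Everything then reduces to computing $\mathrm{d}f_1(Z)$ with $Z$ held fixed, since $Z$ is data and the parameters enter only through $A^{[1]}$ and $A^{[2]}$.

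Applying the product rule to $f_1 = S^{[2]}_{[-1,:]} S^{[1]} Z$ gives two pieces:
\[
  \mathrm{d}f_1 = \bigl(\mathrm{d}S^{[2]}_{[-1,:]}\bigr) S^{[1]} Z + S^{[2]}_{[-1,:]}\bigl(\mathrm{d}S^{[1]}\bigr) Z .
\]
The second piece is exactly the third line of the claimed formula. For the first piece I use the row-vector softmax identity $\mathrm{d}\sigma(a) = \mathrm{d}a\, J(\sigma(a))$, which holds because $J(s) = \mathrm{diag}(s) - ss^\top$ is symmetric. Taking $a = S^{[1]}_{[-1,:]} A^{[2]}$, the product rule again yields
\[
  \mathrm{d}a = \mathrm{d}S^{[1]}_{[-1,:]}\, A^{[2]} + S^{[1]}_{[-1,:]}\, \mathrm{d}A^{[2]} .
\]
Therefore
\[
  \mathrm{d}S^{[2]}_{[-1,:]} = \bigl(S^{[1]}_{[-1,:]}\,\mathrm{d}A^{[2]} + \mathrm{d}S^{[1]}_{[-1,:]}\,A^{[2]}\bigr) J\bigl(S^{[2]}_{[-1,:]}\bigr).
\]
Multiplying on the right by $S^{[1]} Z e_{y^{(1)}}$ and by the prefactor $-1/(f_1(Z)\, e_{y^{(1)}})$ produces the first two lines of the statement.

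Summing the three contributions gives the identity, and the statement holds verbatim whether the context is $Z$ or $Z'$, since only the forward activations change. The one point needing care, which I expect to be the only mild obstacle, is bookkeeping of row versus column conventions. Softmax is applied to a row vector, so the Jacobian must multiply on the right, and the symmetry of $J$ makes this consistent with the column-vector convention. I also need the mask to be vacuous on row $T$, so that no $-\infty$ entries enter $J(S^{[2]}_{[-1,:]})$. I will leave $\mathrm{d}S^{[1]}$ (including its last row) unexpanded, as the statement does; expressing it through $\mathrm{d}A^{[1]}$ and the masked row Jacobians is deferred to Lemma~\ref{lemma:dl0w}.
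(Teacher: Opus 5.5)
Your proposal is correct and follows essentially the same route as the paper's proof. Both use the outer log-derivative, the product rule on $S^{[2]}_{[-1,:]} S^{[1]} Z$, and the softmax Jacobian applied to the unmasked last-row logit $S^{[1]}_{[-1,:]} A^{[2]}$, whose differential splits into the $\mathrm{d}A^{[2]}$ and $\mathrm{d}S^{[1]}_{[-1,:]}$ terms; your remark that the symmetry of $J$ justifies right-multiplication on row vectors is a small clarification the paper leaves implicit.
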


\begin{proof}[Proof of Lemma \ref{lemma:dl0}]
We apply the chain rule to
$\mathcal{L}_1(Z) = -\log \bigl(f_1(Z) \cdot e_{y^{(1)}} \bigr)$
in following steps.
 
Outer log derivative.
\[
  \mathrm{d}\mathcal{L}_1
  = -\frac{1}{f_1(Z) \cdot e_{y^{(1)}} }\;\mathrm{d} \bigl(f_1(Z) \cdot  e_{y^{(1)}} \bigr).
\]
 
Expand $f_1$.
Substituting $f_1(Z) = S^{[2]}_{[-1,:]}\,S^{[1]}\,Z$ from
Lemma~\ref{lemma:forward}:
\[
  \mathrm{d} \bigl(f_1(Z) \cdot  e_{y^{(1)}} \bigr)
  = \mathrm{d} \Bigl(S^{[2]}_{[-1,:]}\,S^{[1]}\,Z\, e_{y^{(1)}} \Bigr).
\]
 
Product rule over the two attention layers.
$S^{[2]}_{[-1,:]}$ and $S^{[1]}$ are both functions of the parameters, so:
\[
  \mathrm{d} \Bigl(S^{[2]}_{[-1,:]}\,S^{[1]}\,Z\,e_{y^{(1)}} \Bigr)
  = \bigl(\mathrm{d}S^{[2]}_{[-1,:]}\bigr)\,S^{[1]}\,Z\,e_{y^{(1)}}
    \;+\; S^{[2]}_{[-1,:]}\,\bigl(\mathrm{d}S^{[1]}\bigr)\,Z\,e_{y^{(1)}} .
\]
The second term is path~(iii) directly.
For the first term, we further expand $\mathrm{d}S^{[1]}_{[-1,:]}$
using the softmax Jacobian.

By the recursive formula $S^{[2]} = \sigma(\mathrm{MASK}(S^{[1]}\,A^{[2]}))$,
the last-row logit vector is
$\ell^{[2]} := S^{[1]}_{[-1,:]}\,A^{[2]}$
(the causal mask is vacuous for the last row). Its differential is
\[
  \mathrm{d}\ell^{[2]}
  = S^{[1]}_{[-1,:]}\;\mathrm{d}A^{[2]}
    + \mathrm{d}S^{[1]}_{[-1,:]}\;A^{[2]}.
\]
Applying the softmax Jacobian $J(S^{[2]}_{[-1,:]})$:
\[
  \mathrm{d}S^{[2]}_{[-1,:]}
  = \mathrm{d}\ell^{[2]}\;J \bigl(S^{[2]}_{[-1,:]}\bigr)
  = \Bigl(
      S^{[1]}_{[-1,:]}\;\mathrm{d}A^{[2]}
      + \mathrm{d}S^{[1]}_{[-1,:]}\;A^{[2]}
    \Bigr)\;
    J \bigl(S^{[2]}_{[-1,:]}\bigr).
\]

Substituting back and grouping by differential:
\begin{align*}
  \mathrm{d} \bigl(f_1 \cdot e_{y^{(1)}} \bigr)
  &= \underbrace{%
       S^{[1]}_{[-1,:]}\;\mathrm{d}A^{[2]}\;
       J(S^{[2]}_{[-1,:]})\;S^{[1]}\,Z\, e_{y^{(1)}}
     }_{\text{path (i)}}
  \\[4pt]
  &\quad+ \underbrace{%
       \mathrm{d}S^{[1]}_{[-1,:]}\;A^{[2]}\;
       J(S^{[2]}_{[-1,:]})\;S^{[1]}\,Z\, e_{y^{(1)}}
     }_{\text{path (ii)}}
  \\[4pt]
  &\quad+ \underbrace{%
       S^{[2]}_{[-1,:]}\;\mathrm{d}S^{[1]}\;Z\, e_{y^{(1)}}
     }_{\text{path (iii)}}.
\end{align*}
Dividing by $- f_1(Z) \cdot e_{y^{(1)}}  $ yields the stated expression.
\end{proof}

The three terms correspond to distinct backpropagation paths through the two-layer network.

\begin{lemma}  \label{lemma:dl0w}
     {Explicit gradients for $\mathcal{L}_1(Z)$ w.r.t. all four parameter matrices.} 
\begin{align*}
\frac{\partial \mathcal{L}_1(Z)}{\partial W^{(1)}_0}  = Z^\top \left[ \frac{\partial \mathcal{L}_1}{\partial A^{[1]}} \right] Z. \qquad 
\frac{\partial \mathcal{L}_1(Z)}{\partial W^{(1)}_1} = \left[ \frac{\partial \mathcal{L}_1}{\partial A^{[1]}} \right]. 
\end{align*}
where $\frac{\partial \mathcal{L}_1}{\partial A^{[1]}}
=     M \odot \left[  S^{[1]} \odot \left( \frac{\partial \mathcal{L}_1}{ \partial  S^{[1]}} -  \frac{\partial \mathcal{L}_1}{ \partial  S^{[1]}} \odot   S^{[1]}  {1}  {1} ^\top  \right)  \right] \in \mathbb{R}^{T \times T}$ is the loss gradient w.r.t.\ the Layer~1
logit matrix.
\begin{align*}
\frac{\partial \mathcal{L}_1(Z)}{\partial W^{(2)}_0} &= -\frac{1}{f_1(Z) \cdot  e_{y^{(1)}} } \cdot Z^\top   J \bigl(S^{[2]}_{[-1,:]}\bigr) S^{[1]}Z  e_{y^{(1)}} S^{[1]}_{[-1, :]}Z. \\
\frac{\partial \mathcal{L}_1(Z)}{\partial W^{(2)}_1} &= -\frac{1}{f_1(Z) \cdot e_{y^{(1)}} } \cdot   J \bigl(S^{[2]}_{[-1,:]}\bigr) S^{[1]}Z  e_{y^{(1)}} S^{[1]}_{[-1, :]}.
\end{align*}
\end{lemma}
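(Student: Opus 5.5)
The proof is a direct bookkeeping exercise on top of Lemma~\ref{lemma:dl0}. I will rewrite each term of $\mathrm{d}\mathcal{L}_1$ in the trace form $\mathrm{d}\mathcal{L}_1=\operatorname{tr}\bigl(G^\top \mathrm{d}P\bigr)$, read off $G=\partial\mathcal{L}_1/\partial P$, and then pull the result back through the affine maps $A^{[\ell]}=ZW^{(\ell)}_0Z^\top+W^{(\ell)}_1$. Write $p:=f_1(Z)\cdot e_{y^{(1)}}$, $s:=S^{[2]}_{[-1,:]}$ and $J:=J(s)$, and note that $J$ is symmetric. Throughout I will fix one layout convention for matrix gradients: the displayed formulas in the statement are written as $b\,a^\top$ for a bilinear term $a^\top\,\mathrm{d}A\,b$, i.e.\ up to a global transpose. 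I will state this convention explicitly and keep it fixed.

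\textbf{Layer~2 parameters.} Only path~(i) of Lemma~\ref{lemma:dl0} contains $\mathrm{d}A^{[2]}$. Paths~(ii) and~(iii) involve only $\mathrm{d}S^{[1]}$, which does not depend on $W^{(2)}$. Path~(i) is the bilinear form
\[
-\tfrac1p\,S^{[1]}_{[-1,:]}\,\mathrm{d}A^{[2]}\,\bigl(J S^{[1]} Z e_{y^{(1)}}\bigr),
\]
so $\partial\mathcal{L}_1/\partial A^{[2]}$ is the outer product of the vectors $S^{[1]\top}_{[-1,:]}$ and $JS^{[1]}Ze_{y^{(1)}}$, scaled by $-1/p$. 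The causal mask is vacuous on the last row, so no masking factor appears here. Substituting $\mathrm{d}A^{[2]}=Z\,\mathrm{d}W^{(2)}_0Z^\top+\mathrm{d}W^{(2)}_1$ and using cyclicity of the trace gives the two claimed formulas:
\begin{itemize}
\item the $W^{(2)}_1$ gradient equals the $A^{[2]}$ gradient;
\item the $W^{(2)}_0$ gradient is that same matrix sandwiched between $Z^\top$ and $Z$.
\end{itemize}

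\textbf{Layer~1 parameters.} First I form $\partial\mathcal{L}_1/\partial S^{[1]}$ from paths~(ii) and~(iii).
\begin{itemize}
\item Path~(iii) contributes $-\tfrac1p\,s^\top e_{y^{(1)}}^\top Z^\top$, which is a full $T\times T$ matrix.
\item Path~(ii) contributes only to the last row, namely $-\tfrac1p\,e_T\bigl(A^{[2]}JS^{[1]}Ze_{y^{(1)}}\bigr)^\top$.
\end{itemize}
Next I backpropagate through the row-wise softmax. For row $i$ with distribution $s_i$ and upstream gradient $g_i$, the logit gradient is $J(s_i)g_i=s_i\odot\bigl(g_i-\langle g_i,s_i\rangle\mathbf 1\bigr)$. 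Stacked over rows, this is exactly the expression $S^{[1]}\odot\bigl(G-(G\odot S^{[1]})\mathbf 1\mathbf 1^\top\bigr)$ in the statement, with $G=\partial\mathcal{L}_1/\partial S^{[1]}$. The mask $M$ is justified because masked logits are constants equal to $-\infty$, so $S^{[1]}_{ij}=0$ there and those entries receive zero gradient. Finally, $A^{[1]}=ZW^{(1)}_0Z^\top+W^{(1)}_1$ gives $\partial/\partial W^{(1)}_1=\partial/\partial A^{[1]}$ and $\partial/\partial W^{(1)}_0=Z^\top\bigl(\partial/\partial A^{[1]}\bigr)Z$, by the same trace identity as for Layer~2.

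The mathematics is routine. The main point of care is consistency: the transpose conventions must match across the bilinear terms, and the $(G\odot S)\mathbf 1\mathbf 1^\top$ term must be checked to implement the row-wise inner product $\langle g_i,s_i\rangle$. I will verify both by checking a single entry $(i,j)$ explicitly.
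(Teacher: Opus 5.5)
Your proposal is correct and follows essentially the same route as the paper's proof. Like the paper, you isolate path~(i) for the Layer~2 gradients, combine paths~(ii) and~(iii) into $\partial\mathcal{L}_1/\partial S^{[1]}$, apply the row-wise softmax Jacobian with the mask, and pull back through $A^{[\ell]}=ZW_0^{(\ell)}Z^\top+W_1^{(\ell)}$ via the trace identity.

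One correction to your framing: the statement is not "up to a global transpose." The Layer~2 displays are in the $b\,a^\top$ layout. The row-wise formula for $\partial\mathcal{L}_1/\partial A^{[1]}$, however, needs $\partial\mathcal{L}_1/\partial S^{[1]}$ in the $a\,b^\top$ layout, because row $i$ of that matrix must be the gradient with respect to row $i$ of $S^{[1]}$. That $a\,b^\top$ layout is what you actually use for paths~(ii) and~(iii). So instead of promising one fixed convention, say explicitly that the displayed Layer~2 formulas are the transposes of the entrywise gradients.

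Your path~(ii) term $-\tfrac1p\,e_T\bigl(A^{[2]}J(S^{[2]}_{[-1,:]})S^{[1]}Ze_{y^{(1)}}\bigr)^\top$ is the right one. The paper's proof writes this term as $A^{[2]}J(S^{[2]}_{[-1,:]})S^{[1]}Ze_{y^{(1)}}e_T^\top$, which puts it in the last column and is inconsistent with the row-wise formula.
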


\begin{proof}[Proof of Lemma~\ref{lemma:dl0w}]
We derive each gradient by applying the chain rule through
$A^{[\ell]} = Z\,W_0^{(\ell)}\,Z^\top + W_1^{(\ell)}$.
 
\textbf{Layer~1 parameters.}
Since $A^{[1]} = Z\,W_0^{(1)}\,Z^\top + W_1^{(1)}$, we have
$\mathrm{d}A^{[1]} = Z\,\mathrm{d}W_0^{(1)}\,Z^\top + \mathrm{d}W_1^{(1)}$.
By the chain rule and the trace identity
$\mathrm{tr}(G\,Z\,\mathrm{d}W\,Z^\top) = \mathrm{tr}(Z^\top G\,Z\,\mathrm{d}W)$:
\[
  \frac{\partial \mathcal{L}_1}{\partial W_0^{(1)}}
  = Z^\top\,\frac{\partial \mathcal{L}_1}{\partial A^{[1]}}\,Z,
  \qquad
  \frac{\partial \mathcal{L}_1}{\partial W_1^{(1)}}
  = \frac{\partial \mathcal{L}_1}{\partial A^{[1]}}.
\]

  There are two paths into $S^{[1]}$
\begin{align*}
   \frac{\partial \mathcal{L}_1}{ \partial S^{[1]}} =   -\frac{1}{ f_1\cdot e_{y^\qty(1)}}\cdot \left[   (S^{[2]}_{[-1,:]})^\top (Z   e_{y^\qty(1)})^\top   +  A^\qty[2]  {J}(S^{[2]}_{[-1,:]}) S^{[1]}    Z  e_{y^\qty(1)} e^\top_{T} \right],  
\end{align*}

For each row $r$ with mask $ M_r = \mathrm{diag} ({m}_r)$, then
\begin{align*}
   \frac{\partial \mathcal{L}_1}{ \partial A^{[1]}_{[r,:]}} =  \frac{\partial \mathcal{L}_1}{ \partial S^{[1]}_{[r,:]}}  {J}(s_1(r)) M_r.
\end{align*}
Then the matrix format is
\begin{align*}
   \frac{\partial \mathcal{L}_1}{ \partial A^{[1]}} =  M \odot \left[  S^{[1]} \odot \left( \frac{\partial \mathcal{L}_1}{ \partial  S^{[1]}} -  \frac{\partial \mathcal{L}_1}{ \partial  S^{[1]}} \odot   S^{[1]}  {1}  {1} ^\top  \right)  \right].
\end{align*}
 
\textbf{Layer~2 parameters.}
Only path~(i) of Lemma~\ref{lemma:dl0w} involves $A^{[2]}$
(paths~(ii) and~(iii) are independent of Layer~2 parameters).
The relevant differential term is:
\begin{equation}\label{eq:path-i-scalar}
  -\frac{1}{ f_1\cdot e_{y^\qty(1)} }\;
  S^{[1]}_{[-1,:]}\;\mathrm{d}A^{[2]}\;
  \underbrace{J \bigl(S^{[2]}_{[-1,:]}\bigr)\,S^{[1]}\,Z\, e_{y^\qty(1)}  }_{=:\,r\;\in\;\mathbb{R}^{T \times 1}}.
\end{equation}
 
\emph{Gradient w.r.t.\ $W_0^{(2)}$.}\;
Substituting $\mathrm{d}A^{[2]} = Z\,\mathrm{d}W_0^{(2)}\,Z^\top$
into~\eqref{eq:path-i-scalar}:
\[
  - \frac{1}{ f_1\cdot e_{y^\qty(1)} } \;
  \underbrace{S^{[1]}_{[-1,:]}\,Z}_{1 \times N}\;
  \mathrm{d}W_0^{(2)}\;
  \underbrace{Z^\top r}_{N \times 1}.
\]
This is a scalar of the form $c^\top\,\mathrm{d}W\,d$
with $c = Z^\top(S^{[1]}_{[-1,:]})^\top$ and $d = Z^\top r$.
Applying the trace identity
$c^\top\,\mathrm{d}W\,d = \mathrm{tr}(d\,c^\top\,\mathrm{d}W)$:
\begin{align*}
  \frac{\partial \mathcal{L}_1}{\partial W_0^{(2)}}
  & = -\frac{1}{ f_1\cdot e_{y^\qty(1)} } \;d\,c^\top
  = -\frac{1}{ f_1\cdot e_{y^\qty(1)} } \;Z^\top r\;S^{[1]}_{[-1,:]}\,Z \\
  & = -\frac{1}{ f_1\cdot e_{y^\qty(1)} } \;Z^\top\,J \bigl(S^{[2]}_{[-1,:]}\bigr)\,
    S^{[1]}\,Z\,e_{y^\qty(1)} \;S^{[1]}_{[-1,:]}\,Z.
\end{align*}
 
\emph{Gradient w.r.t.\ $W_1^{(2)}$.}\;
Substituting $\mathrm{d}A^{[2]} = \mathrm{d}W_1^{(2)}$
into~\eqref{eq:path-i-scalar}:
\[
  -\frac{1}{ f_1\cdot e_{y^\qty(1)} } \;
  \underbrace{S^{[1]}_{[-1,:]}}_{1 \times T}\;
  \mathrm{d}W_1^{(2)}\;
  \underbrace{r}_{T \times 1}.
\]
Again of the form $c^\top\,\mathrm{d}W\,d$
with $c = (S^{[1]}_{[-1,:]})^\top$ and $d = r$.
By the same trace identity:
\[
  \frac{\partial \mathcal{L}_1}{\partial W_1^{(2)}}
  = -\frac{1}{ f_1\cdot e_{y^\qty(1)} } \;d\,c^\top
  = -\frac{1}{ f_1\cdot e_{y^\qty(1)} } \;r\;S^{[1]}_{[-1,:]}
  = -\frac{1}{ f_1\cdot e_{y^\qty(1)} } \;J \bigl(S^{[2]}_{[-1,:]}\bigr)\,
    S^{[1]}\,Z\,e_{y^\qty(1)} \;S^{[1]}_{[-1,:]}.
    \qedhere
\]
\end{proof}

\subsection{Gradient Decoupling of the Shallow MTP Head}
\label{sec:shallow_head}

The shallow MTP loss $\mathcal{L}_2$ is the key ingredient distinguishing MTP from NTP.
Lemma~\ref{lemma:dl1} establishes the central {gradient decoupling} property:
$\mathcal{L}_2$ routes gradients \emph{exclusively} through Layer~1, entirely
bypassing Layer~2. This provides a clean training signal for Layer~1 that does not depend on the
(potentially uninitialised) Layer~2 weights---the mechanism enabling Phase~I of
Theorem~\ref{thm:cascaded}.
Lemma~\ref{lemma:dl1w} makes this explicit per parameter matrix.

\begin{lemma} \label{lemma:dl1}
 The gradient for the shallow head loss $\mathcal{L}_2 = -\log(f_2( {Z}) \cdot e_{y^{(2)}})$ is:
\begin{align*}
\mathrm{d}\mathcal{L}_2  
&= -\frac{1}{f_2 \cdot e_{y^{(2)}}} \cdot \Big(  {z}_T^\top \mathrm{d}W^{(1)}_0  {Z}^\top + \mathrm{d}W^{(1)}_1[-1, :] \Big)  J(S^{[1]}_{[-1,:]})  {Z}   e_{y^{(2)}}.
\end{align*}
\end{lemma}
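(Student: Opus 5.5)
The plan is to differentiate $\mathcal{L}_2$ directly, starting from the closed form $f_2(Z) = S^{[1]}_{[-1,:]} Z$ given by Lemma~\ref{lemma:forward}, and mirror the chain-rule argument used for Lemma~\ref{lemma:dl0}. The essential structural point is this. $f_2$ involves only the last row of $S^{[1]}$, and that row is a softmax of the last row of $A^{[1]}$ alone. Hence no Layer~2 quantity ($W^{(2)}_0$, $W^{(2)}_1$, $A^{[2]}$, $S^{[2]}$) can appear in the differential. This is the gradient decoupling the statement encodes.

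The steps, in order:
\begin{enumerate}
\item Take the outer log derivative: $\mathrm{d}\mathcal{L}_2 = -\frac{1}{f_2\cdot e_{y^{(2)}}}\,\mathrm{d}\bigl(S^{[1]}_{[-1,:]} Z e_{y^{(2)}}\bigr)$. Since $Z$ is data and parameter-free, this reduces to $\mathrm{d}S^{[1]}_{[-1,:]}\, Z e_{y^{(2)}}$.
\item Observe that for row $T$ the causal mask is vacuous, because every $j\le T$ is visible. Therefore $S^{[1]}_{[-1,:]} = \sigma(\ell)$ with $\ell := A^{[1]}_{[-1,:]} = z_T^\top W^{(1)}_0 Z^\top + W^{(1)}_1[-1,:]$. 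The last equality uses $e_T^\top Z = z_T^\top$ and $e_T^\top W^{(1)}_1 = W^{(1)}_1[-1,:]$.
\item Apply the row-softmax Jacobian, $\mathrm{d}\sigma(\ell) = \mathrm{d}\ell\, J(\sigma(\ell))$, with $J(s)=\mathrm{diag}(s)-ss^\top$, which is symmetric. Substituting $\mathrm{d}\ell = z_T^\top \mathrm{d}W^{(1)}_0 Z^\top + \mathrm{d}W^{(1)}_1[-1,:]$ then yields exactly the displayed formula.
\end{enumerate}

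Nothing here is a genuine obstacle. The only points needing care are bookkeeping:
\begin{itemize}
\item Use row-vector conventions consistently, so that $\mathrm{d}\ell$ multiplies $J$ from the left.
\item Justify that the mask does not interfere in the last row; in the other rows the $-\infty$ entries would kill the corresponding Jacobian entries, but those rows do not enter $f_2$.
\end{itemize}

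As a remark rather than a step, the decoupling can be made explicit by noting that the partial derivatives with respect to $W^{(2)}_0$ and $W^{(2)}_1$ vanish identically. This sets up the explicit per-matrix gradients in Lemma~\ref{lemma:dl1w} via the same trace identity used in Lemma~\ref{lemma:dl0w}.
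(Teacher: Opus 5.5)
Your proposal is correct and matches the paper's proof essentially step for step: outer log derivative, $f_2 = S^{[1]}_{[-1,:]}Z$ from Lemma~\ref{lemma:forward}, a vacuous causal mask on row $T$, the softmax Jacobian $J(S^{[1]}_{[-1,:]})$, and the expansion $\mathrm{d}A^{[1]}_{[-1,:]} = z_T^\top \mathrm{d}W^{(1)}_0 Z^\top + \mathrm{d}W^{(1)}_1[T,:]$. Your remarks on the symmetry of $J$ and on the mask affecting only rows that never enter $f_2$ are sound bookkeeping that the paper leaves implicit.
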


\begin{proof}[Proof of Lemma~\ref{lemma:dl1}]
We apply the chain rule to
$\mathcal{L}_2 = -\log(f_2 \cdot e_{y^{(2)}} )$.
 
Outer derivative:
\[
  \mathrm{d}\mathcal{L}_2
  = -\frac{1}{f_2 \cdot e_{y^{(2)}} }\;\mathrm{d} \bigl(f_2( {Z}) \cdot e_{y^{(2)}}\bigr).
\]
 
Expand $f_2$:
By Lemma~\ref{lemma:forward},
$f_2( {Z}) = S^{[1]}_{[-1,:]}\, {Z}$, so
\[
  \mathrm{d} \bigl(f_2( {Z}) \cdot e_{y^{(2)}}\bigr)
  = \mathrm{d} \bigl(S^{[1]}_{[-1,:]}\bigr)\; {Z}\,e_{y^{(2)}}.
\]
Note that $ {X}$ is a fixed input matrix (not a function of
parameters), so only $S^{[1]}_{[-1,:]}$ contributes a differential.
 
Softmax Jacobian.
$S^{[1]}_{[-1,:]} = \sigma(A^{[1]}_{[-1,:]})$ where
$A^{[1]}_{[-1,:]}$ is the last-row logit vector
(the causal mask is vacuous for the last row). Therefore:
\[
  \mathrm{d}S^{[1]}_{[-1,:]}
  = \mathrm{d}A^{[1]}_{[-1,:]}\;J \bigl(S^{[1]}_{[-1,:]}\bigr).
\]
 
Expand $\mathrm{d}A^{[1]}_{[-1,:]}$.
From $A^{[1]} =  {Z}\,W_0^{(1)}\, {Z}^\top + W_1^{(1)}$,
the last row is
$A^{[1]}_{[-1,:]} =  {z}_T^\top\,W_0^{(1)}\, {Z}^\top + W_1^{(1)}[T,:]$.
Differentiating with respect to the parameters:
\[
  \mathrm{d}A^{[1]}_{[-1,:]}
  =  {z}_T^\top\;\mathrm{d}W_0^{(1)}\; {Z}^\top
    + \mathrm{d}W_1^{(1)}[T,:].
\]
This involves only $(W_0^{(1)}, W_1^{(1)})$,
confirming that $\mathcal{L}_2$ is independent of Layer~2.
 
{Combining.}
\[
  \mathrm{d}\mathcal{L}_2
  = -\frac{1}{f_2( {Z}) \cdot e_{y^{(2)}}}\;
    \bigl(
      {z}_T^\top\;\mathrm{d}W_0^{(1)}\; {Z}^\top
      + \mathrm{d}W_1^{(1)}[T,:]
    \bigr)\;
    J \bigl(S^{[1]}_{[-1,:]}\bigr)\;
     {Z}\,e_{y^{(2)}}.
    \qedhere
\]
\end{proof}

Note that $\mathrm{d}\mathcal{L}_2$ involves only the Layer~1 parameters
$(W_0^{(1)},\, W_1^{(1)})$ and is \emph{entirely independent of Layer~2}.
This is the central mechanism of MTP: the shallow head provides a {direct, uncontaminated gradient signal} to Layer~1 that does not
need to backpropagate through the (potentially uninitialized or poorly
conditioned) Layer~2 weights.


\begin{lemma} \label{lemma:dl1w}
 {Explicit gradients for $\mathcal{L}_2$ w.r.t. all two parameter matrices.}
\begin{align*}
\frac{\partial \mathcal{L}_2}{\partial W^{(1)}_0} &= -\frac{1}{f_2 \cdot e_{y^{(2)}}} \cdot  {z}_T\;
     e_{y^{(2)}}^\top\, {Z}^\top\,
     J \bigl(S^{[1]}_{[-1,:]}\bigr)\, {Z}. \\
\frac{\partial \mathcal{L}_2}{\partial W^{(1)}_1} &= -\frac{1}{f_2 \cdot e_{y^{(2)}}} \cdot \ e_T\;
     e_{y^{(2)}}^\top\, {Z}^\top\,
     J \bigl(S^{[1]}_{[-1,:]}\bigr).
\end{align*}
\end{lemma}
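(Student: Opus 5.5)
The plan is to read off the two matrix gradients directly from the differential in Lemma~\ref{lemma:dl1}, using the trace identity $c^\top\,\mathrm{d}W\,d = \mathrm{tr}(d\,c^\top\,\mathrm{d}W)$. This is the same device used for the Layer~2 parameters in the proof of Lemma~\ref{lemma:dl0w}. Write $\kappa := -1/(f_2\cdot e_{y^{(2)}})$ and define the column vector
\[
r_2 := J\bigl(S^{[1]}_{[-1,:]}\bigr) Z\, e_{y^{(2)}} \in \mathbb{R}^{T\times 1}.
\]
Then Lemma~\ref{lemma:dl1} gives
\[
\mathrm{d}\mathcal{L}_2 = \kappa\bigl(z_T^\top\,\mathrm{d}W_0^{(1)}\,Z^\top r_2 + \mathrm{d}W_1^{(1)}[T,:]\,r_2\bigr).
\]
Both summands are scalars. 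I will treat them separately by setting each differential to zero in turn, since the two parameter blocks enter additively.

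For $W_0^{(1)}$, the term $z_T^\top\,\mathrm{d}W_0^{(1)}\,(Z^\top r_2)$ has the form $c^\top\,\mathrm{d}W\,d$ with $c = z_T$ and $d = Z^\top r_2$. By the trace identity, its gradient in the convention $\mathrm{d}\mathcal{L} = \mathrm{tr}(G^\top \mathrm{d}W)$ is $c\,d^\top = z_T\, r_2^\top Z$.

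I then expand $r_2^\top = e_{y^{(2)}}^\top Z^\top J(S^{[1]}_{[-1,:]})^\top$. The Jacobian $J(s) = \mathrm{diag}(s) - ss^\top$ is symmetric, so the transpose can be dropped. This yields
\[
\kappa\, z_T\, e_{y^{(2)}}^\top Z^\top J\bigl(S^{[1]}_{[-1,:]}\bigr) Z,
\]
which is the claimed expression.

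For $W_1^{(1)}$, I rewrite the row differential as $\mathrm{d}W_1^{(1)}[T,:] = e_T^\top\,\mathrm{d}W_1^{(1)}$, with $e_T\in\mathbb{R}^T$ the positional one-hot. The term becomes $e_T^\top\,\mathrm{d}W_1^{(1)}\,r_2$. The same identity gives gradient $e_T\, r_2^\top$, and symmetry of $J$ again gives
\[
\kappa\, e_T\, e_{y^{(2)}}^\top Z^\top J\bigl(S^{[1]}_{[-1,:]}\bigr).
\]
This is a matrix supported only on row $T$, as expected.

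The only real care point is bookkeeping rather than difficulty, in two respects. First, the row-vector/column-vector orientation must match the gradient convention used in Lemma~\ref{lemma:dl0w}, so that the result comes out as $c\,d^\top$ and not its transpose. Second, the symmetry of $J$ is what allows $e_{y^{(2)}}^\top Z^\top J$ to appear in the stated order. Once these are fixed, both formulas follow in one line each. The result also inherits from Lemma~\ref{lemma:dl1} the fact that no Layer~2 quantities appear.
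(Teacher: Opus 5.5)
Your proposal is correct and is essentially the paper's argument. The paper first forms $\partial\mathcal{L}_2/\partial A^{[1]} = -\frac{1}{f_2\cdot e_{y^{(2)}}}\, e_T\, e_{y^{(2)}}^\top Z^\top J\bigl(S^{[1]}_{[-1,:]}\bigr)$, supported on row $T$, and pulls it back through $A^{[1]} = Z W_0^{(1)} Z^\top + W_1^{(1)}$: the identity map for $W_1^{(1)}$, and $Z^\top(\cdot)Z$ with $Z^\top e_T = z_T$ for $W_0^{(1)}$. This is the same chain rule you apply directly to the differential of Lemma~\ref{lemma:dl1}, and it likewise uses the symmetry of $J$ implicitly.

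One small caution about your bookkeeping remark: the Layer~2 computation in the proof of Lemma~\ref{lemma:dl0w} actually reads off $d\,c^\top$, the transposed convention. The stated orientation $c\,d^\top$ therefore comes from the entrywise convention $\mathrm{d}\mathcal{L} = \mathrm{tr}(G^\top \mathrm{d}W)$ that you explicitly fixed, not from matching that lemma.
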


\begin{proof}[Proof of Lemma~\ref{lemma:dl1w}]
We route the differential from Lemma~\ref{lemma:dl1} through
$A^{[1]} =  {Z}\,W_0^{(1)}\, {Z}^\top + W_1^{(1)}$.
 
{Step 1: Intermediate gradient $\partial\mathcal{L}_2/\partial A^{[1]}$.}
Since only row~$T$ of $A^{[1]}$ affects the loss.
Applying the softmax chain rule:
\[
  \frac{\partial \mathcal{L}_2}{\partial A^{[1]}_{T,:}}
  = -\frac{1}{f_2 \cdot e_{y^{(2)}}} \;
    e_{y^{(2)}}^\top\, {Z}^\top\,J \bigl(S^{[1]}_{[-1,:]}\bigr)
  \;\in \mathbb{R}^{1 \times T},
\]
and $\frac{\partial \mathcal{L}_2}{\partial A^{[1]}_{r,:}} = 0$
for $r \ne T$.  In matrix form:
\begin{equation}\label{eq:dL1-dA0}
  \frac{\partial \mathcal{L}_2}{\partial A^{[1]}}
  = -\frac{1}{f_2 \cdot e_{y^{(2)}}} \;
    e_T\;e_{y^{(2)}}^\top\, {Z}^\top\,
    J \bigl(S^{[1]}_{[-1,:]}\bigr)
  \;\in \mathbb{R}^{T \times T}.
\end{equation}
 
{Step 2: Gradient w.r.t.\ $W_1^{(1)}$.}
Since $A^{[1]} =  {Z}\,W_0^{(1)}\, {Z}^\top + W_1^{(1)}$,
the Jacobian $\partial A^{[1]}_{ij}/\partial (W_1^{(1)})_{ij} = 1$,
so:
\[
  \frac{\partial \mathcal{L}_2}{\partial W_1^{(1)}}
  = \frac{\partial \mathcal{L}_2}{\partial A^{[1]}}
  = -\frac{1}{f_2 \cdot e_{y^{(2)}}}  \;
    e_T\;e_{y^{(2)}}^\top\, {Z}^\top\,
    J \bigl(S^{[1]}_{[-1,:]}\bigr).
\]

This is nonzero only in row~$T$, since the prefactor $e_T$ zeroes
all other rows.
 
{Step 3: Gradient w.r.t.\ $W_0^{(1)}$.}
Using the standard identity for bilinear forms
($A =  {Z}\,W\, {Z}^\top$ implies
$\partial f/\partial W = {Z}^\top\,(\partial f/\partial A)\, {Z}$):
\[
  \frac{\partial \mathcal{L}_2}{\partial W_0^{(1)}}
  =  {Z}^\top\,
    \frac{\partial \mathcal{L}_2}{\partial A^{[1]}}\,
    {Z}
  = -\frac{1}{f_2 \cdot e_{y^{(2)}}}  \;
    \underbrace{ {Z}^\top e_T}_{=  {z}_T}\;
    e_{y^{(2)}}^\top\, {Z}^\top\,
    J \bigl(S^{[1]}_{[-1,:]}\bigr)\, {Z}.
\]
\end{proof}

Together, Lemmas~\ref{lemma:dl0w} and \ref{lemma:dl1w}
provide the complete gradient landscape.

\section{Complete Proof of Theorems} \label{sec:proof}

\subsection{The Reverse Reasoning Circuit} \label{appdix:circuit}

\begin{proof} [Proof of Theorem \ref{thm:stationary}]

By the chain rule, the gradient passing through any softmax distribution~$s$
is premultiplied by its Jacobian $J(s) = \mathrm{diag}(s) - s\,s^\top$.
When $s$ is a saturated one-hot vector $e_k$, the Jacobian evaluates to the
zero matrix: $J(e_k) = 0$.
We show that every active gradient chain passes through at least one such
zero Jacobian.

By Lemma~\ref{lemma:dl1w}, $\nabla_{W^{(1)}}\mathcal{L}_2$ routes
through $J(S^{[1]}_{T,:})$.
Under Condition~1, $S^{[1]}_{T,:} = e_{T-1}^\top$ is one-hot, so
$J(S^{[1]}_{T,:}) = 0$.
Therefore $\nabla_{W^{(1)}}\mathcal{L}_2 = 0$.

By Lemma~\ref{lemma:dl0w}, both
$\partial\mathcal{L}_1/\partial W_0^{(2)}$ and
$\partial\mathcal{L}_1/\partial W_1^{(2)}$ contain the factor
$J(S^{[2]}_{T,:})$.
Under Condition~2, $S^{[2]}_{T,:} = e_{t_{\mathrm{end}}^{\mathrm{ctx}}}^\top$
is one-hot, so $J = 0$.
Therefore $\nabla_{W^{(2)}}\mathcal{L}_1 = 0$.

By Lemma~\ref{lemma:dl0w}, the gradient
$\nabla_{W^{(1)}}\mathcal{L}_1$ receives contributions from three paths:
 
\begin{itemize} 
  \item \emph{Attention path~(i) and query path~(ii):}
    Both are gated by $J(S^{[2]}_{T,:})$, which vanishes under Condition~2
    (same argument as case~2). Both paths are completely severed.
 
  \item \emph{Value path~(iii):}
    The error signal is
    $S^{[2]}_{T,:}\,\mathrm{d}S^{[1]}\, {Z}\,e_{y^{(1)}}$.
    Under Condition~2, $S^{[2]}_{T,:} = e_{t_{\mathrm{end}}^{\mathrm{ctx}}}^\top$,
    which selects only row $t_{\mathrm{end}}^{\mathrm{ctx}}$ of $\mathrm{d}S^{[1]}$.
    To reach the parameters $W^{(1)}$, this signal must pass through the
    row-wise softmax Jacobian
    $J \bigl((S^{[1]})_{t_{\mathrm{end}}^{\mathrm{ctx}},\,:}\bigr)$.
    Under Condition~1,
    $(S^{[1]})_{t_{\mathrm{end}}^{\mathrm{ctx}},\,:}
    = e_{t_v^{\mathrm{ctx}}}^\top$
    is one-hot, so this Jacobian is also zero.
    The value path is severed.
\end{itemize}

All three paths vanish, yielding $\nabla_{W^{(1)}}\mathcal{L}_1 = 0$.

Since Layer~2 selects only row $t_{\mathrm{end}}^{\mathrm{ctx}}$
via the one-hot $S^{[2]}_{T,:}$, all other context rows
$r \neq t_{\mathrm{end}}^{\mathrm{ctx}}$ of~$S^{[1]}$ receive zero
error signal from $\mathcal{L}_1$.
Combined with the vanishing of $\nabla_{W^{(1)}}\mathcal{L}_1$
(which only involves row~$T$), these rows are gradient-free
and therefore unconstrained at the stationary manifold.
\end{proof}

\begin{proof}[Proof of Corollary \ref{cor:construction}]
We trace the forward pass under the proposed weight configuration for
finite $\gamma > 0$ and show that every quantity deviates from its ideal
(saturated) value by at most $O(e^{-\gamma})$.
 
\paragraph{Layer~1: Universal predecessor pointer.}
With $W_0^{(1)}=0$, the content term vanishes and the logit matrix
reduces to $A^{[1]}=W_1^{(1)}=\gamma\,L$.  Under the causal mask, for
each row $t>1$ the entry at position $t-1$ has logit $\gamma$ while all
other visible entries have logit $0$.  The softmax therefore gives
\begin{equation}\label{eq:layer1-finite}
  S_{t,t-1}^{[1]}
    = \frac{e^{\gamma}}{e^{\gamma}+(t-1)}
    = 1 - \frac{t-1}{e^{\gamma}+(t-1)},
  \qquad
  S_{t,j}^{[1]}
    = \frac{1}{e^{\gamma}+(t-1)}
  \quad (j \neq t-1).
\end{equation}
Since $t\le T$ is fixed, every off-diagonal entry is $O(e^{-\gamma})$
uniformly, so
\[
  S_{t,:}^{[1]} = e_{t-1}^{\top} + O(T\,e^{-\gamma}),
  \qquad \forall\; t>1.
\]
In particular, this approximately satisfies both parts of Condition~1:
\begin{itemize}
\item Row $T$ attends to $T-1$:
  $S_{T,:}^{[1]}=e_{T-1}^{\top}+O(T\,e^{-\gamma})$.
\item Row $t_{\mathrm{end}}^{\mathrm{ctx}}$ attends to
  $t_{\mathrm{end}}^{\mathrm{ctx}}-1 = t_{v}^{\mathrm{ctx}}$:
  $S_{t_{\mathrm{end}}^{\mathrm{ctx}},:}^{[1]}
   = e_{t_{v}^{\mathrm{ctx}}}^{\top}+O(T\,e^{-\gamma})$.
\end{itemize}
 
\paragraph{Layer~2: Content matching with self-mask.}
By~\eqref{eq:layer1-finite}, the Layer~1 attention output at the query
position~$T$ is
\[
  (S^{[1]}\,  Z)_T
    = S_{T,:}^{[1]}\,  Z
    = \bigl(1 - O(T\,e^{-\gamma})\bigr)\,e_{u_{\mathrm{end}}}
      + O(T\,e^{-\gamma}).
\]
Hence the effective content query for Layer~2 is
$e_{u_{\mathrm{end}}}+O(e^{-\gamma})$.
 
With $W_0^{(2)}=\gamma\,I$, the content logit from this query to key
position~$j$ is
\[
  \bigl(e_{u_{\mathrm{end}}}+O(e^{-\gamma})\bigr)^{ \top}
    (\gamma\,I)\,z_j
  \;=\;
  \gamma\;\delta_{[z_j = e_{u_{\mathrm{end}}}]}
  \;+\; O(\gamma\,e^{-\gamma}).
\]
In the input sequence, $u_{\mathrm{end}}$ appears at exactly two
positions: the context position $t_{\mathrm{end}}^{\mathrm{ctx}}$ and
the prompt position $T-1$.
 
The positional mask
$(W_1^{(2)})_{T-1,\,T-1}=-\gamma$ reduces the total logit at the
prompt self-match $j=T-1$ from
$\gamma+O(\gamma\,e^{-\gamma})$ to $O(\gamma\,e^{-\gamma})$, while the
logit at $j=t_{\mathrm{end}}^{\mathrm{ctx}}$ remains
$\gamma+O(\gamma\,e^{-\gamma})$.  All other positions have logit
$O(\gamma\,e^{-\gamma})$.  Therefore the Layer~2 softmax at row~$T$
satisfies
\[
  S_{T,\,t_{\mathrm{end}}^{\mathrm{ctx}}}^{[2]}
    = 1 - O(T\,e^{-\gamma}),
  \qquad
  S_{T,j}^{[2]} = O(e^{-\gamma})
  \quad (j \neq t_{\mathrm{end}}^{\mathrm{ctx}}),
\]
which approximately satisfies Condition~2.
 
\paragraph{Deep-head output and loss bound.}
Composing both layers via Lemma~\ref{lemma:forward}:
\begin{align*}
  f_1( Z)\cdot e_{y^{(1)}}
  &= S_{T,:}^{[2]}\;S^{[1]}\;  Z\;e_{y^{(1)}} \\
  &= \bigl(1-O(T\,e^{-\gamma})\bigr)\;
     \bigl(S^{[1]}\,  Z\bigr)_{t_{\mathrm{end}}^{\mathrm{ctx}}}
     \cdot e_{y^{(1)}}
     \;+\; O(T\,e^{-\gamma}) \\
  &= \bigl(1-O(T\,e^{-\gamma})\bigr)\;
     \bigl(e_{t_{v}^{\mathrm{ctx}}}^{\top}\,  Z\,e_{y^{(1)}}
           + O(T\,e^{-\gamma})\bigr)
     \;+\; O(T\,e^{-\gamma}).
\end{align*}
Since
$e_{t_{v}^{\mathrm{ctx}}}^{\top}  Z\,e_{y^{(1)}}
 =   z_{t_{v}^{\mathrm{ctx}}}\cdot e_{y^{(1)}}
 = e_v \cdot e_v = 1$,
we obtain
\[
  f_1(  Z)\cdot e_{y^{(1)}} = 1 - O(T^2\,e^{-\gamma}).
\]
Therefore, since $T=10$ is fixed,
\[
  \mathcal L_{1a}
  = -\log \bigl(f_1(  Z)\cdot e_{y^{(1)}}\bigr)
  = -\log \bigl(1 - O(e^{-\gamma})\bigr)
  = O(e^{-\gamma}).
\]
An identical argument applied to $\mathcal L_2 = -\log(f_2(  Z)\cdot e_{y^{(2)}})$
yields $\mathcal L_2 = O(e^{-\gamma})$.
Hence $\mathcal L = O(e^{-\gamma})$.

Every gradient chain in $\nabla_\theta\mathcal L$ passes through at
least one row-wise softmax Jacobian
$J(s)=\operatorname{diag}(s)-s\,s^{\top}$.  Under the above weight
configuration, each such~$s$ is $O(e^{-\gamma})$-close to a one-hot
vector~$e_k$, so
\[
  \|J(s)\|
  = \|J(e_k + O(e^{-\gamma}))\|
  = O(e^{-\gamma}),
\]
since $J(e_k)=0$ and the Jacobian is Lipschitz in its argument.  All
remaining factors in the gradient expressions (Lemmas~3 and~5) are
bounded by polynomial functions of~$\gamma$ and~$T$.  Therefore
\[
  \|\nabla_\theta\mathcal L\| = O(e^{-\gamma}).
  \qedhere
\]
\end{proof}

\begin{corollary}[Forward Collapse of the AR Head]\label{cor:ar-collapse}
Under the same weight configuration as Corollary~\ref{cor:construction}, the deep head
evaluated on the AR context~$  Z'$ outputs:
\[
  f_1(  Z') = e_{y^{(1)}}^{\top}
  \qquad
  \bigl(\text{instead of its target } e_{y^{(2)}}^{\top}\bigr).
\]
That is, the AR component of the MTP loss incurs nonzero error
$(\mathcal L_{1b}>0)$, yet produces a vanishingly small gradient:
$\|\nabla_\theta \mathcal L_{1b}\| = O(e^{-\gamma})$.
\end{corollary}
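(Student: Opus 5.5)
We would prove the corollary by reusing the forward-pass tracing from the proof of Corollary~\ref{cor:construction}. The only new ingredient is that the AR context $Z'$ differs from $Z$ in one row: $z'_T = e_{y^{(1)}} = e_v$ instead of $e_{\ustar}$. First, Layer~1 is purely positional ($W_0^{(1)}=0$, $W_1^{(1)}=\gamma L$), so $S^{[1]}$ computed on $Z'$ coincides with \eqref{eq:layer1-finite}. In particular, $S^{[1]}_{T,:}=e_{T-1}^\top+O(Te^{-\gamma})$ and $S^{[1]}_{\tctxend,:}=e_{\tctxv}^\top+O(Te^{-\gamma})$. Since rows $1,\dots,T-1$ of $Z'$ agree with $Z$, the Layer~1 output at position $T$ is again $e_{\uend}+O(e^{-\gamma})$.

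Second, we treat Layer~2. The last-row logits are $S^{[1]}_{T,:}A^{[2]}$ with $A^{[2]}=\gamma Z'Z'^\top+W_1^{(2)}$, which to leading order is row $T-1$ of $A^{[2]}$. That row is $\gamma\,\mathbb{I}[z'_j=e_{\uend}]$ plus the $-\gamma$ self-mask at $j=T-1$. The key check is that the replaced token $z'_T=e_v$ does not create a new match. This holds because $v\neq \uend$ in a 3-node path. Hence $u_{\mathrm{end}}$ still occurs only at $\tctxend$ and $T-1$, and exactly as before $S^{[2]}_{T,:}=e_{\tctxend}^\top+O(Te^{-\gamma})$. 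Composing via Lemma~\ref{lemma:forward} gives $f_1(Z')=e_{\tctxend}^\top S^{[1]}Z'+O(T^2e^{-\gamma})=z'_{\tctxv}{}^\top+O(e^{-\gamma})=e_v^\top+O(e^{-\gamma})$. This is the claimed collapse onto $e_{y^{(1)}}^\top$, exact in the limit $\gamma\to\infty$. Because $y^{(2)}=\uend\neq v$, we get $f_1(Z')\cdot e_{y^{(2)}}=O(e^{-\gamma})$, so $\Ldeep$'s AR counterpart satisfies $\mathcal{L}_{1b}\ge c>0$; in fact it grows like $\gamma$.

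Third, for the gradient we would apply Lemma~\ref{lemma:dl0w} with $Z$ replaced by $Z'$ and target $y^{(2)}$. As in the proof of Theorem~\ref{thm:stationary}, every path carries a softmax Jacobian. Paths (i) and (ii) carry $J(S^{[2]}_{T,:})$, and path (iii) carries the row Jacobians $J(S^{[1]}_{r,:})$ weighted by $S^{[2]}_{T,r}$. Each of these is $O(e^{-\gamma})$ by the Lipschitz argument, and the remaining factors are polynomial in $\gamma$ and $T$.

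The main obstacle is the prefactor $1/(f_1(Z')\cdot e_{y^{(2)}})$, which is now of order $e^{\gamma}$ rather than $O(1)$. A naive product bound therefore gives only $O(\mathrm{poly}(\gamma))$, not $O(e^{-\gamma})$. To handle this, we would expand the numerators to first order in $e^{-\gamma}$ explicitly rather than bound them. For instance, the path-(i) vector $J(S^{[2]}_{T,:})S^{[1]}Z'e_{\uend}$ has components $S^{[2]}_{T,j}\bigl((S^{[1]}Z')_{j,\uend}-f_1\cdot e_{\uend}\bigr)$. Both factors are $O(e^{-\gamma})$ at every $j$ except where $S^{[2]}_{T,j}\approx 1$, namely $j=\tctxend$. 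At that position the bracket is itself $O(e^{-\gamma})$, because $S^{[1]}_{\tctxend,:}$ puts only $O(e^{-\gamma})$ mass on positions holding $\uend$. We would try to show that each numerator is $O(e^{-2\gamma})\mathrm{poly}(\gamma)$ through this double-saturation effect: one small factor comes from the Jacobian and one from the target being absent from the attended positions. That would leave a net $O(e^{-\gamma})$ after dividing by the $e^{-\gamma}$-sized probability.

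If this cancellation fails on some path, we expect path (iii) through non-saturated context rows to be the culprit. In that case we would fall back to showing that the offending contribution is multiplied by the MTP weight $\tfrac14$ and dominated by the $O(e^{-\gamma})$ bounds of Corollary~\ref{cor:construction}. Failing that, we would state the gradient bound in the weaker $e^{-\gamma}\mathrm{poly}(\gamma)$ or $\mathrm{poly}(\gamma)$ form.
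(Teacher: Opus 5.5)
\textbf{Verdict.} Your forward-pass argument is correct and is the paper's argument: Layer~1 is content-blind, so $S^{[1]}$ is unchanged on $Z'$; the Layer-2 query is still $e_{u_{\mathrm{end}}}+O(e^{-\gamma})$; and $S^{[2]}_{T,:}\approx e^\top_{t^{\mathrm{ctx}}_{\mathrm{end}}}$. Your check that $z'_T=e_v$ creates no new content match is a step the paper silently skips. You are also right that the prefactor $1/p$, with $p:=f_1(Z')\cdot e_{y^{(2)}}=\Theta(e^{-\gamma})$, defeats the ``every chain contains an $O(e^{-\gamma})$ Jacobian'' argument. The paper's proof uses exactly that argument and never addresses the prefactor.

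\textbf{Where your repair fails.} The gap is in your double-saturation premise, namely that $(S^{[1]}Z')_{j,u_{\mathrm{end}}}-p=O(e^{-\gamma})$ wherever $S^{[2]}_{T,j}$ is small. This is false. Write $s_j=S^{[2]}_{T,j}$ and $q_j=(S^{[1]}Z')_{j,u_{\mathrm{end}}}$. At $j=T$ and $j=t^{\mathrm{ctx}}_{\mathrm{end}}+1$, the predecessor pointer lands on a $u_{\mathrm{end}}$ token (position $T-1$, resp.\ $t^{\mathrm{ctx}}_{\mathrm{end}}$). So $q_j=1-O(e^{-\gamma})$ there, while $s_j=\Theta(e^{-\gamma})$. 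Hence $p=\sum_j s_jq_j\approx 3e^{-\gamma}$, with three comparable contributions from $j\in\{t^{\mathrm{ctx}}_{\mathrm{end}},\,t^{\mathrm{ctx}}_{\mathrm{end}}+1,\,T\}$. The first comes via the self-attention weight $q_{t^{\mathrm{ctx}}_{\mathrm{end}}}=S^{[1]}_{t^{\mathrm{ctx}}_{\mathrm{end}},t^{\mathrm{ctx}}_{\mathrm{end}}}\approx e^{-\gamma}$.

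\textbf{The bound itself is false.} For the last-row Layer-2 logits $\ell_j$,
\[
\frac{\partial\mathcal{L}_{1b}}{\partial\ell_j}=-\frac{s_j(q_j-p)}{p}=s_j-\pi_j,\qquad \pi_j:=\frac{s_jq_j}{p}.
\]
Here $\pi$ puts mass $\approx\tfrac13$ on each of the three positions above, while $s$ concentrates on $t^{\mathrm{ctx}}_{\mathrm{end}}$. For example, $\partial\mathcal{L}_{1b}/\partial(W^{(2)}_1)_{T-1,T}=S^{[1]}_{T,T-1}(s_T-\pi_T)\approx-\tfrac13$ for every graph instance, so it survives averaging over labelings.

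Path (iii) fails independently of this. The problem is the saturated row $t=t^{\mathrm{ctx}}_{\mathrm{end}}$, not the diffuse rows you suspected: it gives $\partial\mathcal{L}_{1b}/\partial(W^{(1)}_1)_{t,t}=-s_t\,S^{[1]}_{t,t}(1-q_t)/p\approx-\tfrac13$.

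This is the familiar fact that log-loss gradients with respect to logits do not vanish when the model is confidently wrong: the $e^{-\gamma}$ Jacobian is compensated by $1/p$. So $\|\nabla_\theta\mathcal{L}_{1b}\|=\Theta(1)$, and none of your fallbacks work:
\begin{itemize}
\item the weight $\tfrac14$ leaves the gradient $\Theta(1)$;
\item the $\mathcal{L}_{1a}$ and $\mathcal{L}_2$ gradients are exponentially small and cannot cancel it;
\item an $e^{-\gamma}\mathrm{poly}(\gamma)$ bound is also false.
\end{itemize}

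\textbf{What is provable.} You can establish the forward collapse, $\mathcal{L}_{1b}=\gamma+O(1)$, and a merely bounded gradient. The same defect also invalidates the gradient bound for the full $\mathcal{L}$ in Corollary~\ref{cor:construction}.
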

 
\begin{proof} [Proof of Collapse \ref{cor:ar-collapse}]
Recall that $  Z'$ differs from $  Z$ only at the last
position: $z_T' = e_{y^{(1)}}$ (i.e.\ $e_v$) instead of
$e_{u_{\mathrm{star}}}$.

Since Layer~1 uses pure positional bias ($W_0^{(1)}=0$), the attention
logits $A^{[1]}=W_1^{(1)}=\gamma\,L$ are content-independent.
Replacing $z_T$ does not change any entry of~$A^{[1]}$, so the
Layer~1 attention distribution on~$Z'$ coincides with that
on~$  X$:
\[
  S_{T,:}^{\prime\,[1]}
  = S_{T,:}^{[1]}
  = e_{T-1}^{\top} + O(T\,e^{-\gamma}).
\]
In particular, Layer~1 still retrieves the token at position $T-1$ as
the effective query for Layer~2.  Since $z_{T-1}'=z_{T-1}=e_{u_{\mathrm{end}}}$
(position $T-1$ is unchanged), Layer~2 receives the same query
$e_{u_{\mathrm{end}}}$.

The prefix of $  Z'$ (positions $1,\ldots,T-1$) is identical to
that of~$  Z$, so all context keys are unchanged.  The Layer~2
logits and attention distribution therefore remain the same:
$S_{T,:}^{\prime\,[2]} = S_{T,:}^{[2]}$.  Composing both layers
exactly as in the proof of Corollary~1:
\[
  f_1(  Z')
  = S_{T,:}^{\prime\,[2]}\;S^{\prime\,[1]}\; Z'\;
  = e_{t_v^{\mathrm{ctx}}}^{\top}\;  Z' + O(e^{-\gamma})
  = e_v^{\top} + O(e^{-\gamma})
  = e_{y^{(1)}}^{\top} + O(e^{-\gamma}).
\]

The AR target is $y^{(2)}=u_{\mathrm{end}} \neq v = y^{(1)}$, so
\[
  f_1(Z')\cdot e_{y^{(2)}}
  = O(e^{-\gamma})
  \quad\Longrightarrow\quad
  \mathcal L_{1b}
  = -\log \bigl(f_1(Z')\cdot e_{y^{(2)}}\bigr)
  = \gamma + O(1)
  > 0.
\]
However, every gradient chain in $\nabla_\theta\mathcal L_{1b}$ passes
through at least one softmax Jacobian $J(s)$ where $s$ is
$O(e^{-\gamma})$-close to a one-hot vector.  As established in the
proof of Corollary~1, each such Jacobian satisfies
$\|J(s)\|=O(e^{-\gamma})$, so
\[
  \|\nabla_\theta \mathcal L_{1b}\| = O(e^{-\gamma}).
\]
The error is effectively trapped: the loss is $\Theta(\gamma)$, but
the gradient is exponentially small.
\end{proof}

\subsection{Why MTP Finds It and NTP Cannot}

\begin{proof} [Proof of Theorem \ref{thm:cascaded}]

By Lemma~\ref{lemma:dl1},
\[
  \nabla_{\Wzop}\mathcal{L}_2 \;=\; -\frac{1}{f_2 \cdot e_{y^{(2)}}} 
   e_T e^\top_{y^{(2)}} Z^\top J(S^{[1]}_{T,:}),
\]
and this gradient is \emph{entirely independent of} $\Wonc$ and $\Wonp$.  Since only
$\mathcal{L}_2$ is used in Phase~1, the Layer~1 weights exert no influence on the
dynamics, regardless of their frozen values.

Under the Toeplitz constraint $W_{i,j}=w(i-j)$, all positions at the same
offset~$k$ share the weight~$w(k)$.  Under random graph labeling, the expected
gradient on $w(k)$ depends only on the offset~$k$, so the Toeplitz structure is
preserved exactly throughout Phase~1 (it is not merely approximately maintained).

$\mathcal{L}_2$ targets $y^{(2)}=u_{\mathrm{end}}$, which is always at position $T-1=9$.
With $T=10$, the normalisation is $s_p+8s_c+s_q=1$.  Let $s_p$ be the softmax
weight of the predecessor (offset $k=1$), $s_c$ the weight of each of the 8
context offsets $k\in\{2,\dots,9\}$, and $s_q$ the self-weight.
The target indicator satisfies $u_9=1$, $u_{10}=0$, and
$\E[u_t]=\frac{1}{8}$ for context positions $t\leq 8$.
 
The gradient flow equations for the Toeplitz weights are:
\begin{align}
  \dot w_p &= -s_p + \frac{s_p}{s_p+s_c}, \label{eq:wp}\\
  \dot w_c &= -s_c + \frac{1}{8}\cdot\frac{s_c}{s_p+s_c}. \label{eq:wc}
\end{align}
The first term in each equation arises from the softmax normalisation Jacobian;
the second term is the expected reward signal (the predecessor always hosts the
target; each of the 8 context offsets hosts it with probability $\frac{1}{8}$).

Let $x=s_p/s_c\geq 1$.  Subtracting \eqref{eq:wc} from \eqref{eq:wp}:
\[
  \dot\Delta \;=\; -s_c(x-1)+\frac{x-\frac{1}{8}}{x+1}.
\]
Using the normalization bound $s_c<\frac{1}{x+8}$ (since $s_q>0$, with $T-2=8$):
\begin{align*}
  \dot\Delta
  &> -\frac{x-1}{x+8}+\frac{x-\frac{1}{8}}{x+1}.
\end{align*}
Cross-multiplying (both denominators positive) and expanding, positivity is
equivalent to
\[
  \Bigl(x-\tfrac{1}{8}\Bigr)(x+8) - (x-1)(x+1)
  \;=\; x\underbrace{\Bigl(8-\tfrac{1}{8}\Bigr)}_{=\,\frac{63}{8}\,>\,0}
  \;=\; \frac{63x}{8}
  \;>\;0
\]
for all $x>0$.  Hence $\dot\Delta>0$ globally and unconditionally for $T=10$.

Note that at initialization $x=1$ (i.e.\ $w=0$), the gap derivative evaluates
  concretely to
  \[
    \dot\Delta\big|_{x=1} \;=\; \frac{T-3}{2(T-2)}\bigg|_{T=10}
    \;=\; \frac{7}{16} \;>\; 0,
  \]
  confirming that the predecessor pointer begins to emerge immediately from
  the zero initialization.

Since $\dot\Delta>0$ uniformly, $w(1)-w(k)\to+\infty$ for all $k\geq 2$.
Under the causal mask, row~$t$ of $\Azero=\Wzop$ has a unique maximum at
position~$t-1$ (offset~1) as $\gamma:=w(1)\to\infty$.  Therefore:
\[
  \Szero_{t,:} \;\to\; e^\top_{t-1} \qquad\forall\,t\geq 2.
\]
With $T=10$ this gives in particular: $\Szero_{10,:}=e^\top_9$ (Condition~1)
and $\Szero_{\tctxend,:}=e^\top_{\tctxv}$ for $\tctxend\in\{2,\dots,8\}$
(Condition~1).

With $\Wzop=\gamma L$ ($\gamma\to\infty$) frozen, we have $\Szero_{t,:}=e^\top_{t-1}$
for all $t\geq 2$.  With $T=10$, the effective Layer~1 query at the last position
is
\[
  (\Szero Z)_{10} \;=\; \Szero_{10,:} Z
  \;=\; e^\top_9 Z \;=\; z_9 \;=\; e_{u_{\mathrm{end}}}.
\]
The self-distractor (prompt position encoding $u_\text{end}$) is always at
position $T-1=9$.  The positional self-mask entry to be learned is therefore
$(\Wonp)_{9,9}$.

Since $\Szero=L$ is frozen, the deep head output becomes
$f_1=\Sone_{T,:}(LZ)$, where $(LZ)_j=z_{j-1}$.
The target $y^{(1)}=v$ satisfies $z_{j-1}=e_v$ iff $j=\tctxv+1=\tctxend$.
Hence:
\[
  \Ldeep \;=\; -\log \Sone_{T,\tctxend},
\]
a standard 1-hop contrastive problem over the Layer~2 attention distribution.
 
By gradient structure (from Lemma~\ref{lemma:dl0w}).
\begin{align}
  \nabla_{\Wonc}\Ldeep
    &= -\frac{1}{f_1 \cdot e_{y^{(1)}}}  \,Z^\top J(\Sone_{T,:})\,e_{\tctxend}\,e^\top_{u_{\mathrm{end}}},
    \label{eq:gradW1c}\\
  \nabla_{\Wonp}\Ldeep
    &= -\frac{1}{f_1 \cdot e_{y^{(1)}}}  \,J(\Sone_{T,:})\,e_{\tctxend}\,e^\top_{T-1}.
    \label{eq:gradW1p}
\end{align}
Both are rank-1 outer products.  Equation~\eqref{eq:gradW1c} modifies only
row~$u_{\mathrm{end}}$ of $\Wonc$; equation~\eqref{eq:gradW1p} modifies only
row~$T-1$ of $\Wonp$.  Starting from $\Wonc=0$ and $\Wonp=0$, all other
columns remain identically zero throughout Phase~2.  This establishes {Conclusion~(i)}.

The logit from position~$10$ to position~$j$ in Layer~1 is:
\[
  \ell_j \;=\; \underbrace{e^\top_{u_{\mathrm{end}}}\Wonc z_j}_{\text{content}}
           \;+\; \underbrace{(\Wonp)_{9,j}}_{\text{positional}}.
\]
The positional term indexes \emph{row~$9$} of $\Wonp$ (not row~$10$), because
$\Szero_{10,:}=e^\top_9$ shifts the effective query row by one.  In particular,
the self-mask entry is $(\Wonp)_{9,9}$.

For a position $k$ such that $z_k\neq e_{u_{\mathrm{end}}}$ (a content distractor),
the logit gap gradient is:
\[
  \dot\Delta_k \;=\; \dot\ell_{\tctxend}-\dot\ell_k
  \;\geq\; (1-s_{j^*}-s_{\mathrm{self}})+s_k
  \;\geq\; 2s_k \;>\; 0,
\]
where $j^*=\tctxend$ and the bound uses $1-s_{j^*}-s_{\mathrm{self}}\geq\sum_{k'}s_{k'}\geq s_k$
from the normalisation constraint.  Thus all content distractor gaps diverge.

Position $k=9$ ($=T-1$) encodes $z_9=e_{u_{\mathrm{end}}}$ (same node as the query).
Content updates to $\Wonc$ cancel for this position (query equals key).  The gap
is governed entirely by the positional bias $(\Wonp)_{9,9}$:
\[
  \dot\ell_{j^*}  = (1-s_{j^*}-s_{\mathrm{self}})+(1-s_{j^*}),\qquad
  \dot\ell_{\mathrm{self}} = (1-s_{j^*}-s_{\mathrm{self}})-s_{\mathrm{self}}.
\]
\[
  \dot\Delta_{\mathrm{self}} \;=\; \dot\ell_{j^*}-\dot\ell_{\mathrm{self}}
  \;=\; 1-s_{j^*}+s_{\mathrm{self}} \;>\; 0.
\]
This drives $(\Wonp)_{9,9}\to-\infty$, establishing {Conclusion~(iii)}.

From \eqref{eq:gradW1c}, the update to entry~$(i,u_{\mathrm{end}})$ of $\Wonc$ is
\[
  \dot{(\Wonc)}_{i,u_{\mathrm{end}}}
  = \frac{s_{\tctxend}}{\mu}\bigl(\delta_{i,u_{\mathrm{end}}}
    - (Z^\top s)_{u_{\mathrm{end}}}\bigr).
\]
The diagonal entry $(i=u_{\mathrm{end}})$ receives a positive update of
$\frac{s_{j^*}(1-s_{j^*})}{\mu}>0$ and diverges to $+\infty$.  For off-diagonal
entries $i\neq u_{\mathrm{end}}$: as $\Wonc {u_{\mathrm{end}},u_{\mathrm{end}}}\to+\infty$,
the attention weight on non-$u_{\mathrm{end}}$ content positions vanishes,
$(Z^\top s)_i\to 0$.  Therefore
$\int_0^\infty|\dot{(\Wonc)}_{i,u_{\mathrm{end}}}|\,dt<\infty$, and off-diagonal
entries converge to a finite negative value $B_{i,u_{\mathrm{end}}}<0$.
By graph-labeling symmetry, the same argument holds for every column, giving
$\Wonc\to\gamma' I+B$ with $B_{ii}=0$ and $B_{ij}<0$ for $i\neq j$.
This establishes {Conclusion~(ii)}.

\end{proof}

\begin{proof} [Proof of Theorem \ref{thm:ntp-misdirected}]

The full derivative of $\mu = \frac{1}{T}\sum_{r=1}^T (\Szero)_{r,:}\,u$
with respect to $w(k)$ is obtained by differentiating each row:
\[
  \frac{\partial \mu}{\partial w(k)}
  = \frac{1}{T}\sum_{r=k+1}^{T}
    \frac{\partial\bigl[(\Szero)_{r,:}\,u\bigr]}{\partial w(k)}.
\]
(Rows $r \leq k$ have no position at offset $k$ within their visible window
and contribute zero.)
 
At $w \equiv 0$, row $r$ has uniform attention $s_r = \frac{1}{r} {1}$
over positions $1,\ldots,r$.  The Jacobian of softmax at $s_r$ is
$J(s_r) = \frac{1}{r}I - \frac{1}{r^2} {1} {1}^\top$.
Since $\frac{\partial \Azero_{r,:}}{\partial w(k)} = e_{r-k}^\top$
(only entry $j=r-k$ is affected), the chain rule gives:
\begin{equation}
  \frac{\partial\bigl[(\Szero)_{r,:}\,u\bigr]}{\partial w(k)}
  = \left(J(s_r)\,e_{r-k}\right)^\top u
  = \frac{1}{r} \left(u_{r-k} - \frac{1}{r}\sum_{j=1}^r u_j\right).
  \label{eq:row-deriv}
\end{equation}

We take expectation over random graph labeling.
 
{Rows $r \leq T-2 = 8$.}
For these rows, $r - k \leq T - 3 \leq T-2$, so $\E[u_{r-k}] = \frac{1}{T-2}$
for all $k \geq 1$.  Also:
\[
  \E \left[\frac{1}{r}\sum_{j=1}^r u_j\right]
  = \frac{1}{r}\cdot\frac{r}{T-2} = \frac{1}{T-2}.
\]
Therefore $\E \left[\frac{1}{r}\bigl(u_{r-k} - \frac{1}{r}\sum_{j=1}^r u_j\bigr)\right] = 0$.
 
Every row $r \leq T-2$ contributes {exactly zero} in expectation.
This is the symmetric regime: the target is equally likely at every
context position, so no offset is favoured.
 
{Row $r = T-1 = 9$.}
Here $\sum_{j=1}^{T-1}\E[u_j] = 1$ (since $\E[u_{T-1}]=0$ and
$\E[u_T]=0$, all probability mass is on positions $1,\ldots,T-2$).
For all offsets $k \geq 1$: $r - k = T-1-k \leq T-2$, so
$\E[u_{T-1-k}] = \frac{1}{T-2}$.  Equation~\eqref{eq:row-deriv} gives:
\[
  \E \left[\frac{\partial\bigl[(\Szero)_{T-1,:}\,u\bigr]}{\partial w(k)}\right]
  = \frac{1}{T-1} \left(\frac{1}{T-2} - \frac{1}{T-1}\right)
  = \frac{1}{(T-1)^2(T-2)} > 0
\]
for \emph{all} $k \geq 1$. Row $T-1$ contributes the {same positive term}
for every offset.
 
{Row $r = T = 10$.}
Here $\sum_{j=1}^{T}\E[u_j] = 1$.  Two sub-cases:
 
\begin{itemize}
  \item {$k=1$ (predecessor offset):} $r-k = T-1$, so
        $\E[u_{T-1}] = 0$ (position $T-1$ encodes $u_{\mathrm{end}} \neq v$).
        Equation~\eqref{eq:row-deriv} gives:
        \[
          \frac{1}{T} \left(0 - \frac{1}{T}\right) = -\frac{1}{T^2} < 0.
        \]
 
  \item  {$k \geq 2$ (context offsets):} $r-k = T-k \leq T-2$, so
        $\E[u_{T-k}] = \frac{1}{T-2}$.  Equation~\eqref{eq:row-deriv} gives:
        \[
          \frac{1}{T} \left(\frac{1}{T-2} - \frac{1}{T}\right)
          = \frac{2}{T^2(T-2)} > 0.
        \]
\end{itemize}

Combining all three row regimes, the full expected gradient of $\mu$ is:
\begin{align}
  \E \left[\frac{\partial \mu}{\partial w(1)}\right]
  &= \frac{1}{T} \left[
      \underbrace{0}_{\text{rows }r\leq T-2}
    + \underbrace{\frac{1}{(T-1)^2(T-2)}}_{\text{row }T-1}
    - \underbrace{\frac{1}{T^2}}_{\text{row }T}
    \right], \label{eq:dmu1}\\[6pt]
  \E \left[\frac{\partial \mu}{\partial w(k)}\right]
  &= \frac{1}{T} \left[
      \underbrace{0}_{\text{rows }r\leq T-2}
    + \underbrace{\frac{1}{(T-1)^2(T-2)}}_{\text{row }T-1}
    + \underbrace{\frac{2}{T^2(T-2)}}_{\text{row }T}
    \right] > 0, \quad k\geq 2. \label{eq:dmuc}
\end{align}

From \eqref{eq:dmu1}, the predecessor gradient has negative sign iff
$(T-1)^2(T-2) > T^2$.  Expanding:
\[
  (T-1)^2(T-2) - T^2
  = T^3 - 5T^2 + 5T + 2.
\]
For $T =10$: $T^3 - 5T^2 + 5T - 2 > 0$.  Therefore
$(T-1)^2(T-2) > T^2$, giving:
\[
  \E \left[\frac{\partial \mu}{\partial w(1)}\right]
  = \frac{1}{T} \left(\frac{1}{(T-1)^2(T-2)} - \frac{1}{T^2}\right) < 0.
\]

From $\mathcal{L}_1 = -\log\mu$ and the chain rule:
\[
  \E \left[\frac{\partial \mathcal{L}_1}{\partial w(k)}\right]
  = -\frac{1}{\mu_0}\E \left[\frac{\partial \mu}{\partial w(k)}\right].
\]
Substituting \eqref{eq:dmu1} and \eqref{eq:dmuc}:
 
\begin{align*}
  \E \left[\frac{\partial \mathcal{L}_1 }{\partial w(1)}\right]
  &= \frac{1}{T\mu_0} \left(\frac{1}{T^2} - \frac{1}{(T-1)^2(T-2)}\right)
  > 0, \\[6pt]
  \E \left[\frac{\partial \mathcal{L}_1}{\partial w(k)}\right]
  &= -\frac{1}{T\mu_0} \left(\frac{1}{(T-1)^2(T-2)}
     + \frac{2}{T^2(T-2)}\right)
  < 0, \quad k \geq 2.
\end{align*}

At $T=10$: $T-1=9$, $T-2=8$, $(T-1)^2(T-2) = 81 \times 8 = 648$.
 
\[
  \E \left[\frac{\partial L_0}{\partial w(1)}\right]
  = \frac{1}{10\mu_0} \left(\frac{1}{100} - \frac{1}{648}\right)
  = \frac{1}{10\mu_0}\cdot\frac{648-100}{64800}
  = \frac{548}{648{,}000\,\mu_0} > 0.
\]
 
\[
  \E \left[\frac{\partial L_0}{\partial w(k)}\right]
  = -\frac{1}{10\mu_0} \left(\frac{1}{648} + \frac{2}{800}\right)
  = -\frac{1}{10\mu_0}\cdot\frac{800 + 2\times 648}{648\times 800}
  = -\frac{2096}{5{,}184{,}000\,\mu_0} < 0.
\]
 
Gradient descent therefore:
\begin{itemize}
  \item  {Decreases} $w(1)$: the predecessor weight is \emph{actively
        suppressed}.
  \item {Increases} $w(k)$ for $k\geq 2$: attention is
        \emph{diffused} toward context positions.
\end{itemize}
This is precisely the opposite of the predecessor pointer
$w(1) \to +\infty$, $w(k) \to -\infty$ for $k \geq 2$ required by the
stationary manifold of Theorem~\ref{thm:stationary}.

\end{proof}

\section{Additional Experiments Details}

\subsection{Experimental Setup for \cref{sec:exp}}
\label{app:exp_dense}

In this section, we detail the experimental configurations for our standard multi-layer transformer models. All models are trained using the AdamW optimizer \citep{loshchilov2018decoupled} with a weight decay of $0$. 

\textbf{Graph.} We conduct scaling experiments on 2-path 5-node star graphs and 5-node binary trees. To isolate the scaling effect of interest, when scaling one dimension (e.g., model capacity), we ensure the complementary dimension (e.g., dataset size) is sufficient to achieve performance saturation. Models are trained for 100 epochs with a batch size of 256 and a learning rate of $1 \times 10^{-4}$.

\begin{remark}
    For a fixed graph structure (e.g., the binary tree in \cref{fig:star_vs_bin}), the set of possible problem instances is finite. 
    With sufficient data (e.g., $0.5$M samples) and large models (e.g., $10.65$M parameters), NTP might memorize all possible problems and achieve perfect test performance. 
    In this regime, the test set no longer reflects truly unseen problems. 
    Thus, comparing MTP and NTP is more meaningful under limited data or model capacity, where memorization is unlikely and planning must be learned.
\end{remark}

\textbf{Countdown.} For the countdown task, we utilize a 4-level arithmetic setting featuring an 85M parameter model and a dataset of 1M samples. All numerical inputs and intermediate results are constrained to be less than 100. Training is performed for 100 epochs with a batch size of 1024 and a learning rate of $1 \times 10^{-4}$. Given that the model typically reaches peak performance early in the training process, we report the best performance achieved.

\textbf{3-SAT.} In the 3-SAT task, we focus on solving Boolean formulas featuring 7 variables and 45 clauses. We train a 5.3M parameter model on 50k samples for 300 epochs, using a batch size of 512 and a learning rate of $1 \times 10^{-3}$. We report the model's final performance at the conclusion of training.

\subsection{Experimental Setup for \cref{sec:theory}}
\label{app:exp_sparse}

In this section, we detail the experimental setup for our 2-layer disentangled transformer models trained on 2-path 3-node star graphs. All models are trained using the AdamW optimizer with a batch size of 512 and a learning rate of $5 \times 10^{-4}$ on 1M generated examples for 300 epochs.

\subsection{Empirical Validation on the Full Standard Architecture} \label{sec:valid_full}

In the main paper, our theoretical analysis relied on a disentangled mechanism and further mathematical simplifications to achieve tractability. In this section, we relax these structural assumptions and empirically evaluate whether our core findings generalize to a standard, full Transformer architecture.

\textbf{Experimental Setup.}
We instantiate a standard 8-layer 8-head Transformer model with 25M parameters. The model is trained on a 5-path 5-node star graph dataset consisting of 1M training examples. We train the model for 40 epochs using a batch size of 1024 and a learning rate of $1\times 10^{-4}$. An example of the data format is provided below:

\begin{tcolorbox}[colback=gray!5!white, colframe=gray!50!black, title=Data Example]
74,64 $|$ 3,36 $|$ 49,63 $|$ 40,16 $|$ 31,73 $|$ 73,18 $|$ 51,22 $|$ 49,46 $|$ 38,19 $|$ 13,27 $|$ 46,40 $|$ 49,74 $|$ 63,31 $|$ 65,13 $|$ 64,3 $|$ 49,61 $|$ 19,51 $|$ 61,65 $|$ 49,38 $|$ 16,41 / 49,18 = 49,63,31,73,18
\end{tcolorbox}

To understand the internal mechanisms driving the model's performance, we extract the attention maps specifically during the generation of the ``difficult'' token. The visualizations presented in this section are obtained by averaging the attention weights across 8 attention heads and 544 test examples.

\textbf{Attention Analysis.}
We compare the standard NTP against 3-MTP. The results reveal a stark contrast in both generalization capability and internal attention routing.

As shown in Figure~\ref{fig:ntp_full_model}, the model trained with standard NTP suffers from severe overfitting. While it achieves 97\% training accuracy, the test accuracy only reaches 20\%. Analyzing the attention heatmaps, we observe that the NTP model primarily attends to the \textit{start node}, specifically in Layers 7 and 8.

\begin{figure}[tb!]
    \centering
    \begin{subfigure}[b]{0.23\textwidth}
        \centering
        \caption{Layer 1 Attention}
        \includegraphics[width=\textwidth]{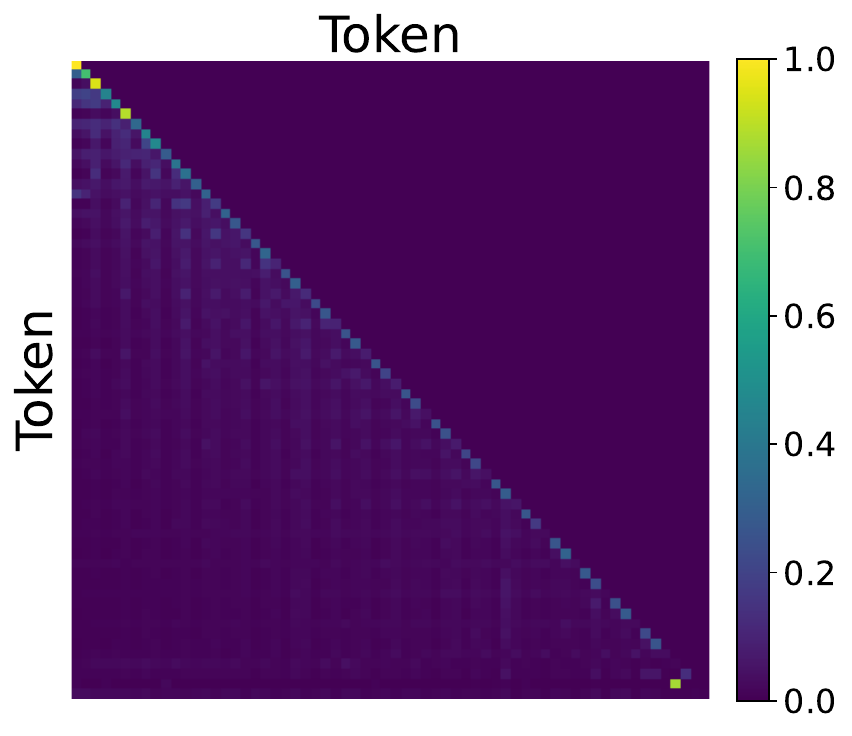}
    \end{subfigure}
    \hfill
    \begin{subfigure}[b]{0.23\textwidth}
        \centering
        \caption{Layer 2 Attention}
        \includegraphics[width=\textwidth]{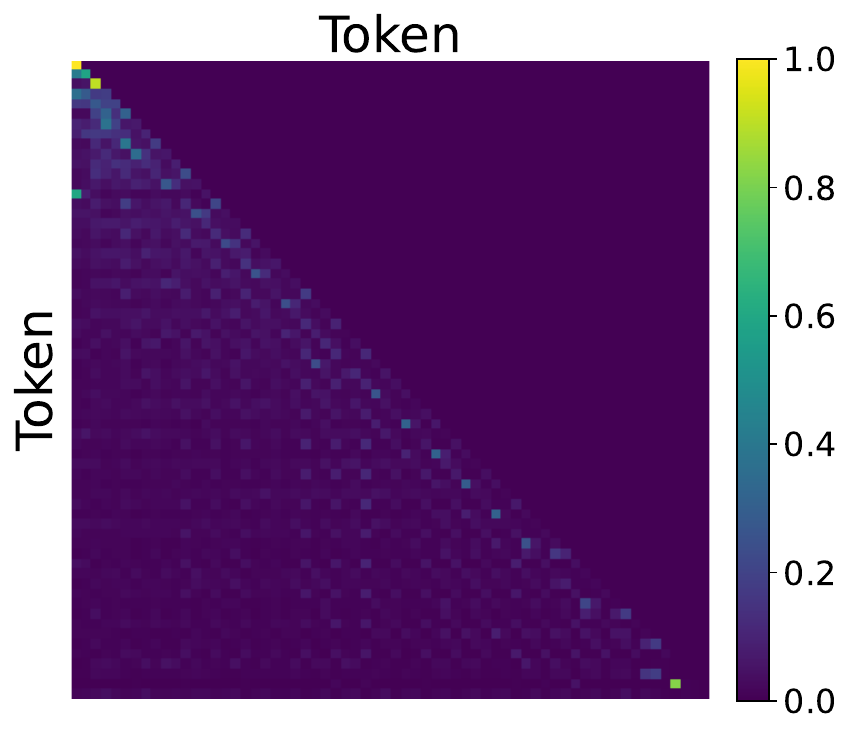}
    \end{subfigure}
    \hfill
    \begin{subfigure}[b]{0.23\textwidth}
        \centering
        \caption{Layer 3 Attention}
        \includegraphics[width=\textwidth]{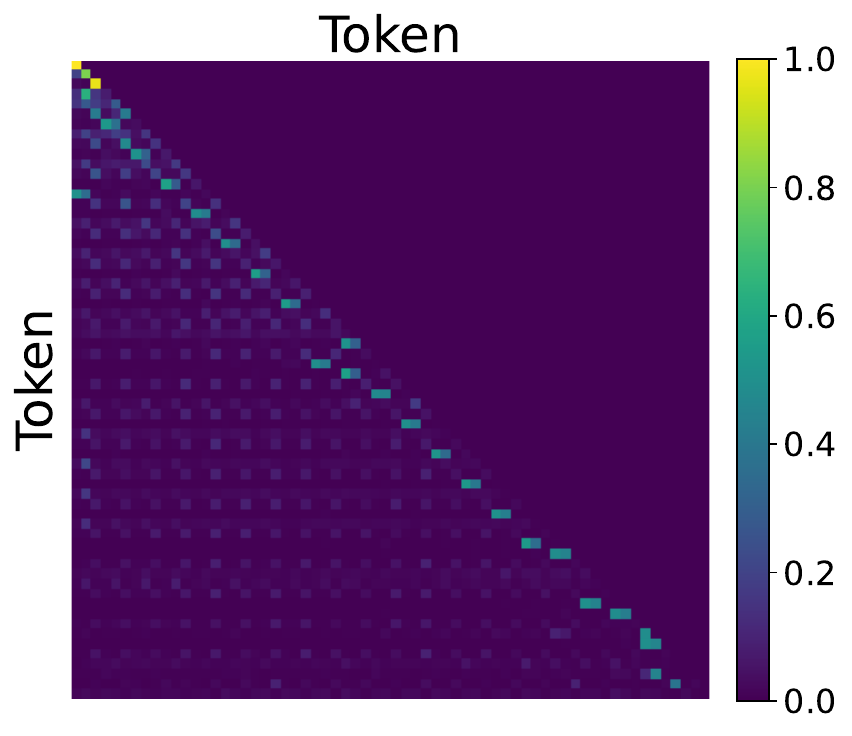}
    \end{subfigure}
    \hfill
    \begin{subfigure}[b]{0.23\textwidth}
        \centering
        \caption{Layer 4 Attention}
        \includegraphics[width=\textwidth]{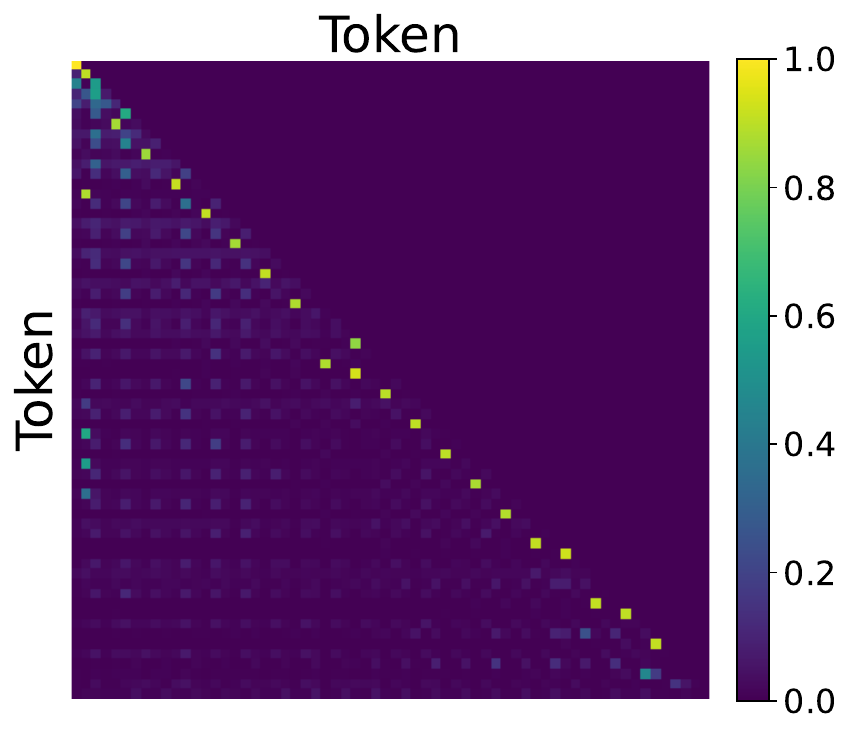}
    \end{subfigure}

    \vspace{1em}

    \begin{subfigure}[b]{0.23\textwidth}
        \centering
        \caption{Layer 5 Attention}
        \includegraphics[width=\textwidth]{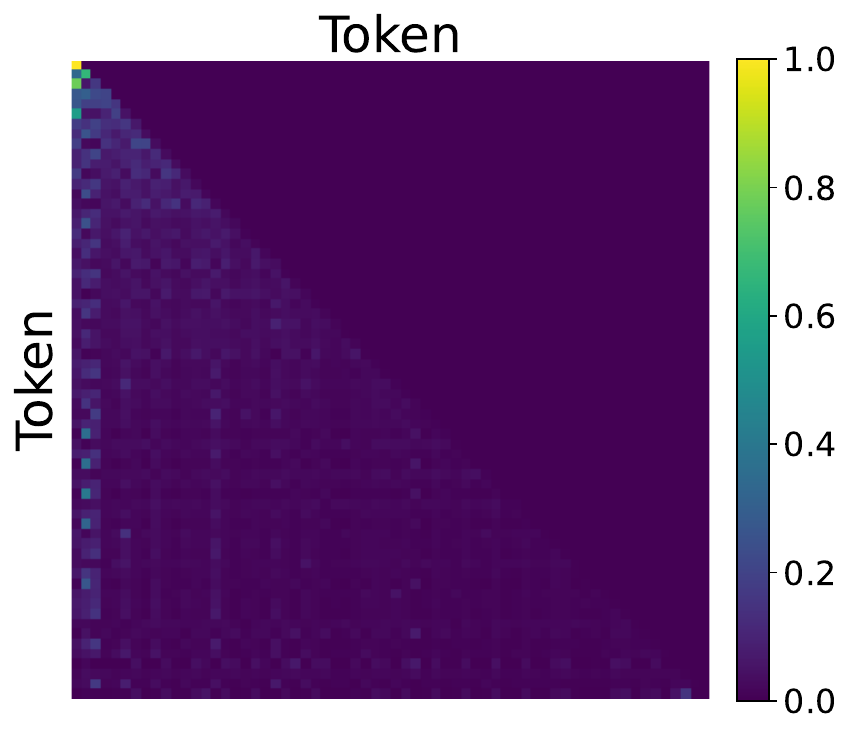}
    \end{subfigure}
    \hfill
    \begin{subfigure}[b]{0.23\textwidth}
        \centering
        \caption{Layer 6 Attention}
        \includegraphics[width=\textwidth]{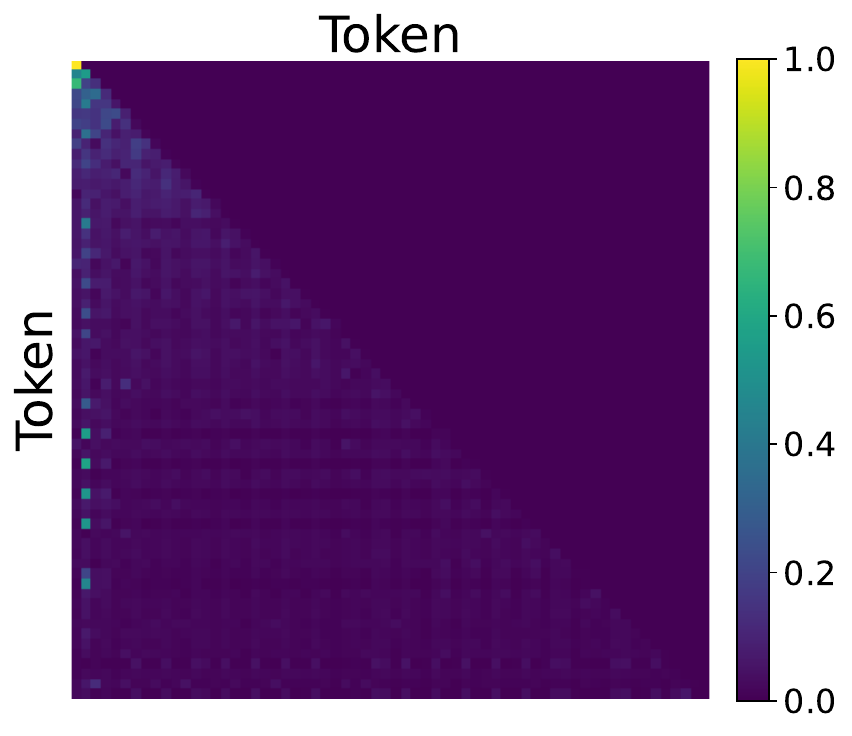}
    \end{subfigure}
    \hfill
    \begin{subfigure}[b]{0.23\textwidth}
        \centering
        \caption{Layer 7 Attention}
        \includegraphics[width=\textwidth]{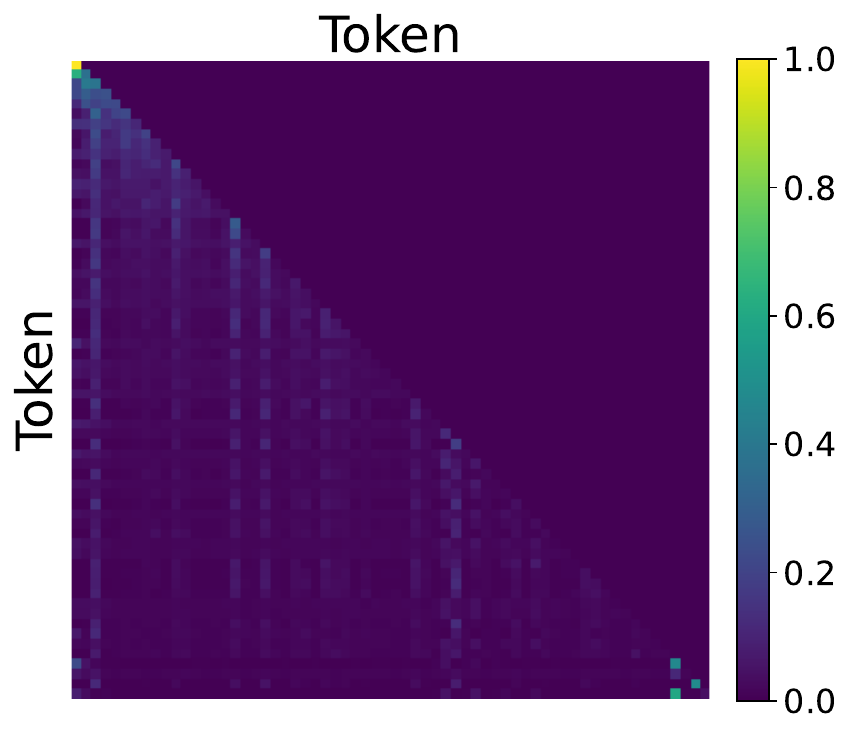}
    \end{subfigure}
    \hfill
    \begin{subfigure}[b]{0.23\textwidth}
        \centering
        \caption{Layer 8 Attention}
        \includegraphics[width=\textwidth]{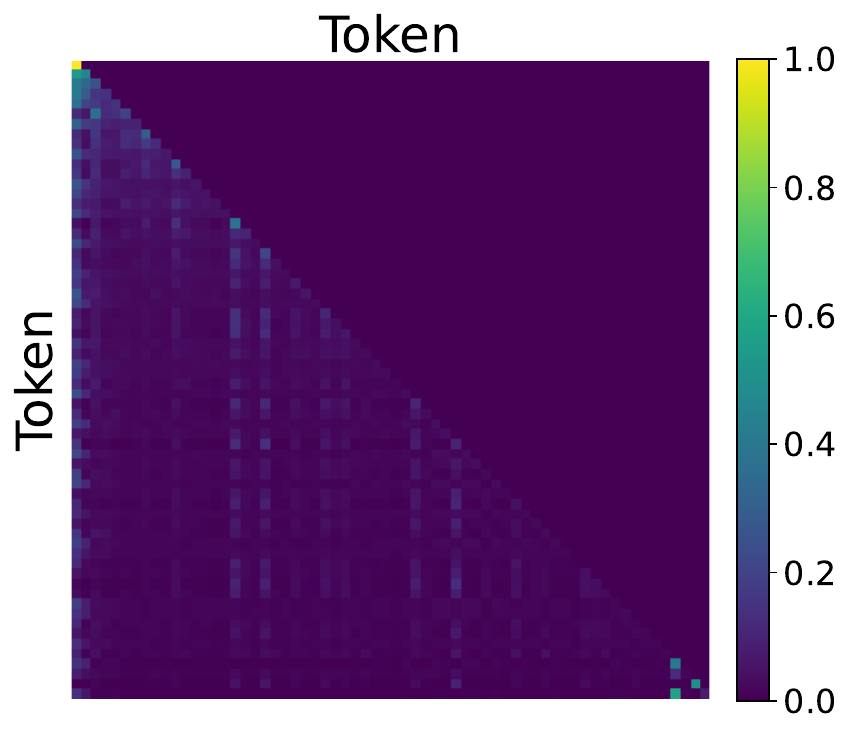}
    \end{subfigure}

    \caption{Attention heatmaps for the model trained with the standard NTP objective. The NTP model exhibits severe overfitting, achieving a training accuracy of 97\% while the test accuracy only reaches 20\%. Furthermore, the visualization demonstrates that the model primarily attends to the start node, specifically in Layers 7 and 8.}
    \label{fig:ntp_full_model}
\end{figure}

In contrast, Figure~\ref{fig:mtp_full_model} illustrates the behavior of the model trained with 3-MTP. This model achieves perfect generalization, reaching 100\% accuracy on both the training and test sets. Crucially, the attention mechanism operates entirely differently: rather than attending to the start node, the 3-MTP model identifies and heavily attends to the \textit{end node}, predominantly in Layers 3 and 4.

\begin{figure}[tb!]
    \centering
    \begin{subfigure}[b]{0.23\textwidth}
        \centering
        \caption{Layer 1 Attention}
        \includegraphics[width=\textwidth]{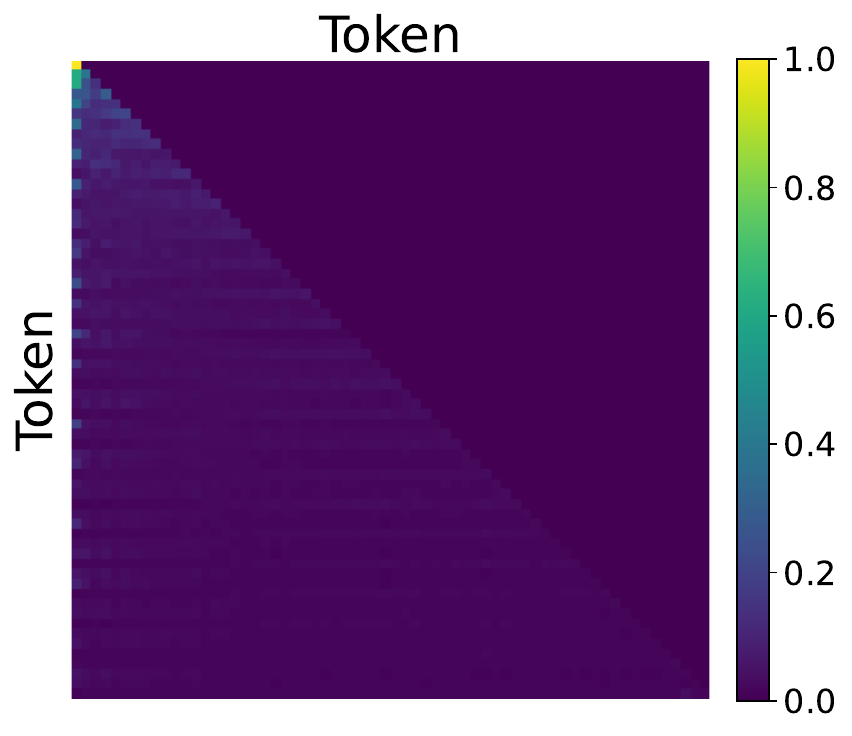}
    \end{subfigure}
    \hfill
    \begin{subfigure}[b]{0.23\textwidth}
        \centering
        \caption{Layer 2 Attention}
        \includegraphics[width=\textwidth]{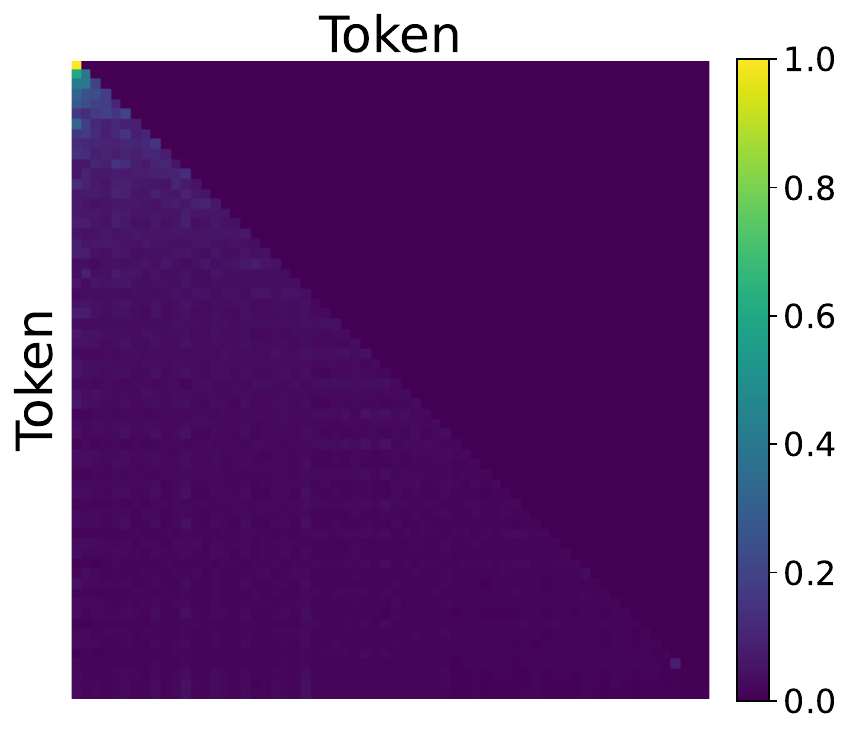}
    \end{subfigure}
    \hfill
    \begin{subfigure}[b]{0.23\textwidth}
        \centering
        \caption{Layer 3 Attention}
        \includegraphics[width=\textwidth]{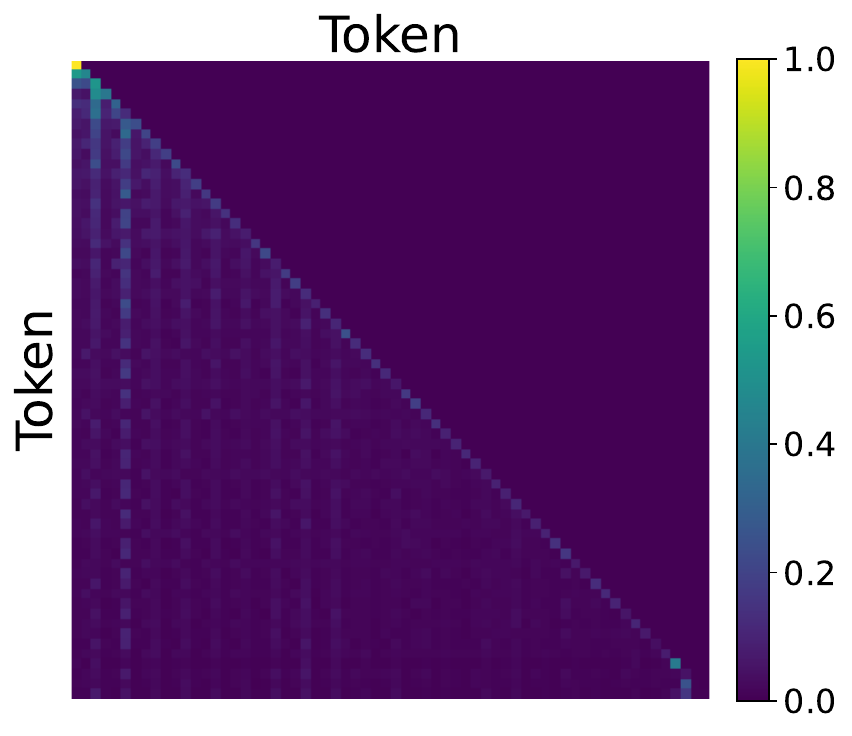}
    \end{subfigure}
    \hfill
    \begin{subfigure}[b]{0.23\textwidth}
        \centering
        \caption{Layer 4 Attention}
        \includegraphics[width=\textwidth]{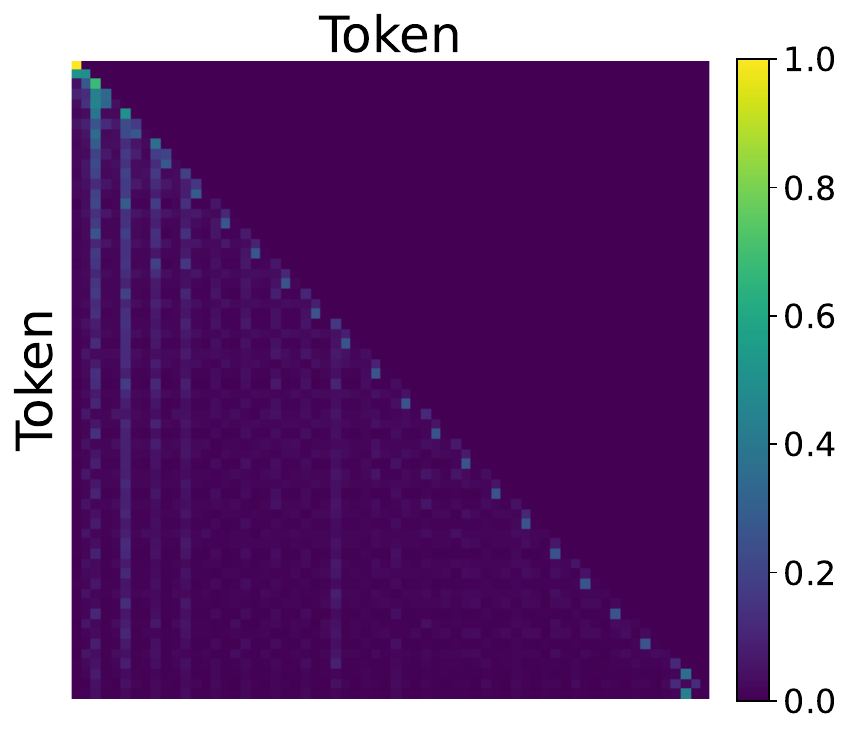}
    \end{subfigure}

    \vspace{1em}

    \begin{subfigure}[b]{0.23\textwidth}
        \centering
        \caption{Layer 5 Attention}
        \includegraphics[width=\textwidth]{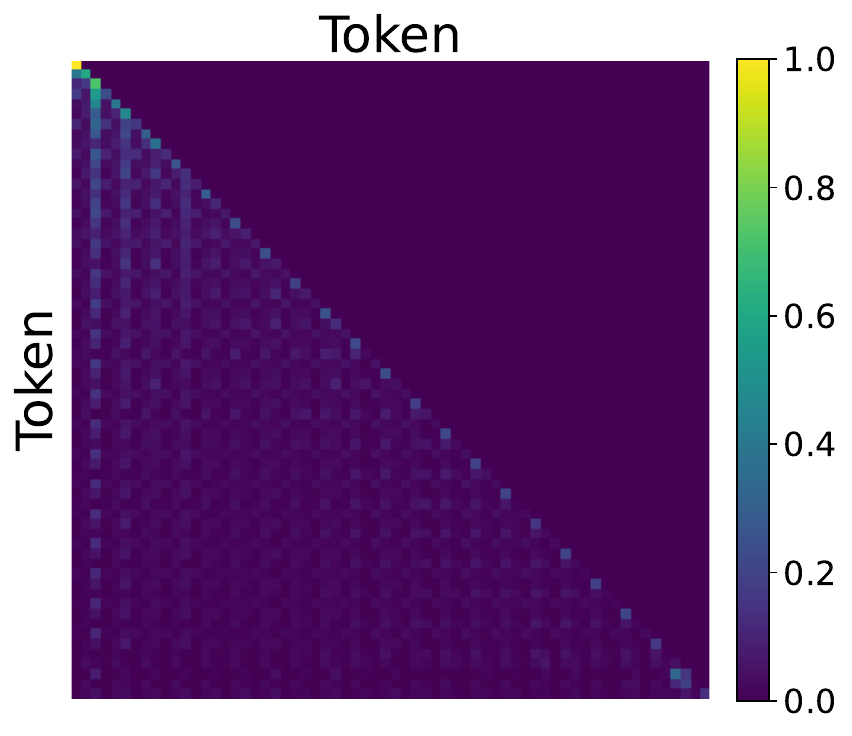}
    \end{subfigure}
    \hfill
    \begin{subfigure}[b]{0.23\textwidth}
        \centering
        \caption{Layer 6 Attention}
        \includegraphics[width=\textwidth]{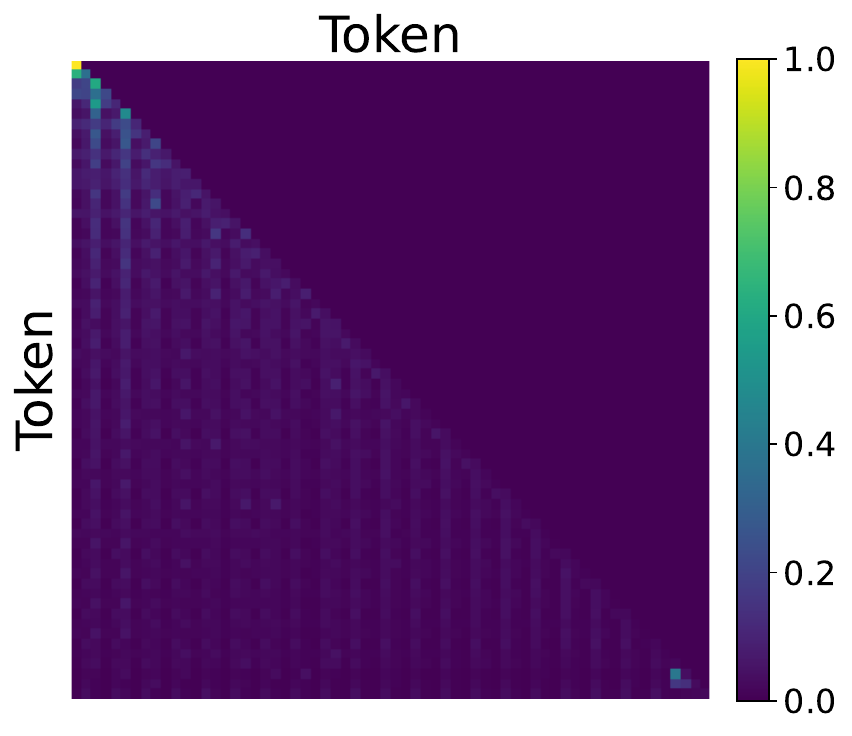}
    \end{subfigure}
    \hfill
    \begin{subfigure}[b]{0.23\textwidth}
        \centering
        \caption{Layer 7 Attention}
        \includegraphics[width=\textwidth]{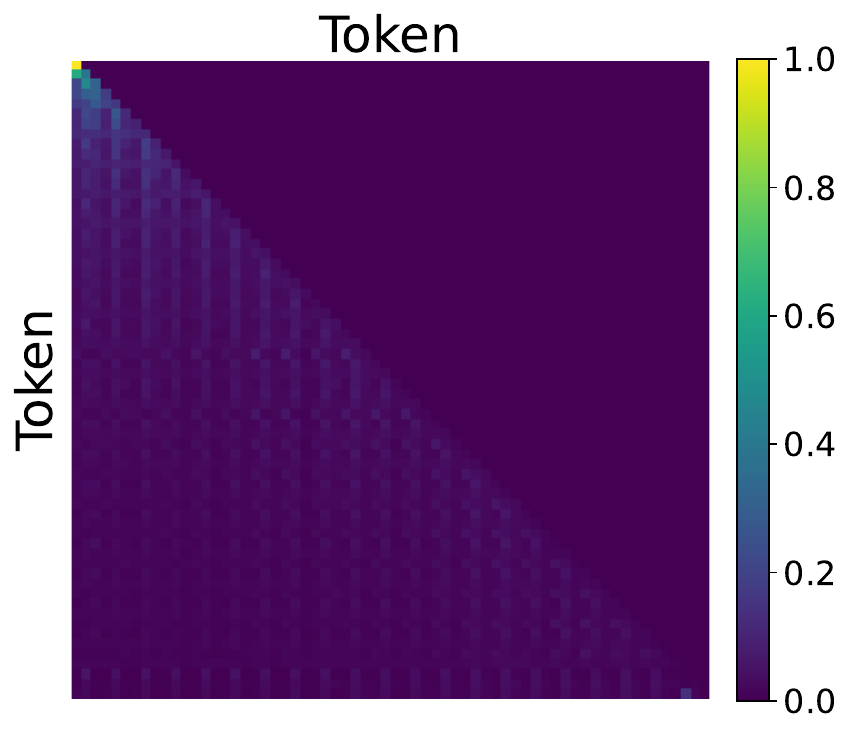}
    \end{subfigure}
    \hfill
    \begin{subfigure}[b]{0.23\textwidth}
        \centering
        \caption{Layer 8 Attention}
        \includegraphics[width=\textwidth]{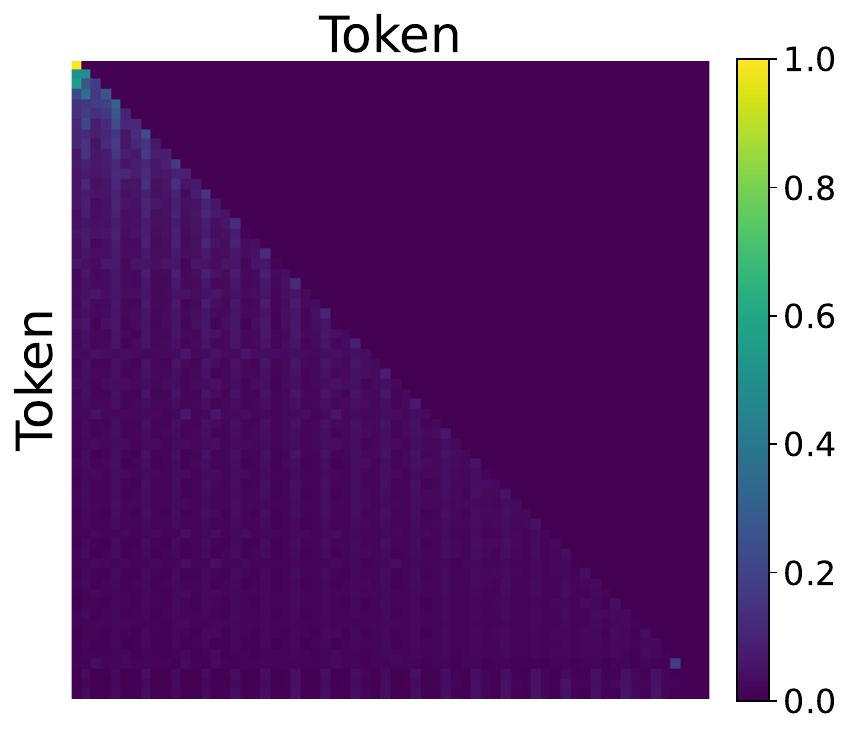}
    \end{subfigure}

    \caption{Attention heatmaps for the model trained with the 3-MTP objective. In contrast to the NTP model, the 3-MTP model achieves perfect generalization with 100\% accuracy on both the training and test sets. Notably, the visualization highlights that the 3-MTP model primarily attends to the end node, predominantly in Layers 3 and 4.}
    \label{fig:mtp_full_model}
\end{figure}

These empirical results perfectly corroborate our theoretical findings. The observed structural shift in attention—specifically, prioritizing the end node in Layers 3 and 4—confirms that the MTP objective successfully induces a \emph{reverse reasoning} mechanism, which is the key driver of the perfect generalization.

\end{document}